\def\eqref#1{equation~\ref{#1}}
\def\1{\bm{1}}
\DeclareMathAlphabet{\mathsfit}{\encodingdefault}{\sfdefault}{m}{sl}
\SetMathAlphabet{\mathsfit}{bold}{\encodingdefault}{\sfdefault}{bx}{n}
\DeclareMathOperator*{\argmax}{arg\,max}
\newcommand{\cA}{\mathcal{A}}
\newcommand{\cN}{\mathcal{N}}
\newcommand{\bE}{\mathbb{E}}
\newcommand{\bfone}{\mathbf{1}}
\newtheorem{theorem}{Theorem}
\newtheorem{lemma}{Lemma}
\newtheorem{definition}{Definition}
\definecolor{cobalt}{rgb}{0.0, 0.28, 0.67} 
\newcommand{\PAR}{\textsc{PAR}\xspace}
\newcommand{\MUN}{\textsc{MUN}\xspace}
\newcommand{\MAR}{\textsc{MAR}\xspace}
\newcommand{\VEN}{\textsc{VEN}\xspace}
\newcommand{\BUR}{\textsc{BUR}\xspace}
\newcommand{\PIE}{\textsc{PIE}\xspace}
\title{Learning to Play No-Press Diplomacy\\ with Best Response Policy Iteration}
\author[]{\bf Thomas~Anthony$^*$, Tom~Eccles$^*$, Andrea~Tacchetti, J\'anos~Kram\'ar, Ian Gemp, Thomas~C.~Hudson, Nicolas~Porcel, Marc~Lanctot, Julien~P\'erolat, Richard~Everett, Roman~Werpachowski, Satinder~Singh, Thore~Graepel and Yoram Bachrach}
\affil{DeepMind}
\newdimen{\algindent}
\algnewcommand\LeftComment[2]{%
\hspace{#1\algindent}$\triangleright$ {#2} \hfill %
}
\pgfplotsset{compat=1.3}
\begin{document}

\addtocontents{toc}{\protect\setcounter{tocdepth}{0}}

\maketitle

\vspace{-2mm}
\begin{abstract}
Recent advances in deep reinforcement learning (RL) have led to considerable progress in many 2-player zero-sum games, such as Go, Poker and Starcraft.
The purely adversarial nature of such games allows for conceptually simple and principled application of RL methods. 
However real-world settings are many-agent, and agent interactions are complex mixtures of common-interest and competitive aspects.
We consider Diplomacy, a 7-player board game designed to accentuate dilemmas resulting from many-agent interactions.
It also features a large combinatorial action space and simultaneous moves, which are challenging for RL algorithms. 
We propose a simple yet effective approximate best response operator, designed to handle large combinatorial action spaces and simultaneous moves.
We also introduce a family of policy iteration methods that approximate fictitious play.
With these methods, we successfully apply RL to Diplomacy:
we show that our agents convincingly outperform the previous state-of-the-art,
and game theoretic equilibrium analysis shows that the new process yields consistent improvements.

\end{abstract}

\section{Introduction}
\label{l_sect_intro}

Artificial Intelligence methods have achieved exceptionally strong competitive play in board games such as Go, Chess, Shogi~\cite{silver2016mastering,silver2017mastering,Silver18AlphaZero}, Hex~\cite{Anthony17ExIT}, Poker~\cite{moravvcik2017deepstack,brown2018superhuman}
and various video games~\cite{koutnik2013evolving,mnih2013playing,kempka2016vizdoom,resnick2018pommerman,guss2019minerl,vinyals2019grandmaster,jaderberg2019human,openai2019dota}.
Despite the scale, complexity and variety of these domains, a common focus in multi-agent environments is the class of 2-player (or 2-team) zero-sum games: ``1 vs 1'' contests.
There are several reasons: they are polynomial-time solvable, and solutions both grant worst-case guarantees and are interchangeable, so agents can approximately solve them in advance~\cite{vonNeumann1928,von1944theory}. 
Further, in this case conceptually simple adaptations of reinforcement learning (RL) algorithms often have theoretical guarantees.
However, most problems of interest are not purely adversarial: e.g. route planning around congestion, contract negotiations or interacting with clients all involve compromise and consideration of how preferences of group members coincide and/or conflict. 
Even when agents are self-interested, they may gain by coordinating and cooperating, so interacting among diverse groups of agents requires complex reasoning about others' goals and motivations.

We study {\bf Diplomacy}~\cite{calhamer1959diplomacy}, a 7-player board game.
The game was specifically designed to emphasize tensions between competition and cooperation, so it is particularly well-suited to the study of learning in mixed-motive settings. 
The game is played on a map of Europe partitioned into provinces. Each player controls multiple units, and each turn \emph{all} players move \emph{all} their units simultaneously. One unit may support another unit (owned by the same or another player), allowing it to overcome resistance by other units. 
Due to the inter-dependencies between units, players must coordinate the moves of their own units, and stand to gain by coordinating their moves with those of other players.
Figure~\ref{fig:example} depicts interactions among several players (moving and supporting units to/from provinces); we explain the basic rules in Section~\ref{l_sect_diplomacy_intro}. The original game allows cheap-talk negotiation between players before every turn. In this paper we focus on learning strategic interactions in a many-agent setting, so we consider the popular {\it No Press} variant, where no explicit communication is allowed.

\begin{wrapfigure}[15]{r}{0pt}
 \vspace{-10mm}
    \includegraphics[width=0.23\textwidth]{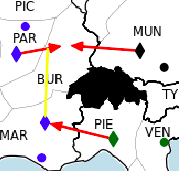}
  \caption{Simple example of interactions between several players' moves.}
  \label{fig:example}
\end{wrapfigure}

Diplomacy is particularly challenging for RL agents. 
First, it is a \textit{many-player} ($n>2$) game, so methods cannot rely on the simplifying properties of 2-player zero-sum games. 
Second, it features {\it simultaneous moves}, with a player choosing an action without knowledge of the actions chosen by others, which highlights reasoning about opponent strategies.
Finally, Diplomacy has a {\it large combinatorial action space}, with an estimated game-tree size of $10^{900}$, and $10^{21}$ to $10^{64}$ legal joint actions {\it per turn}.~\footnote{For comparison, Chess's game tree size is ${10}^{123}$, it has ${10}^{47}$ states, and fewer than 100 legal actions per turn. Estimates for Diplomacy are based on human data; see Appendix~\ref{app:size-estimates} for details.} 

Consequently, although Diplomacy AI has been studied since the 1980s~\cite{kraus1988diplomat,kraus1989automated}, until recently progress has relied on handcrafted rule-based systems, rather than learning.
Paquette et al.~\cite{paquette2019no} achieved a major breakthrough: they collected a dataset of $\sim150,000$ human Diplomacy games, and trained an agent, \textit{DipNet}, using a graph neural network (GNN) to imitate the moves in this dataset. 
This agent defeated previous state-of-the-art agents conclusively and by a wide margin. This is promising, as imitation learning can often be a useful starting point for RL methods.

However, to date RL has not been successfully applied to Diplomacy. For example, Paquette et al.~\cite{paquette2019no} used A2C initialised by their imitation learning agent, but this process did not improve performance as measured by the Trueskill rating system~\cite{herbrich2007trueskill}. This is unfortunate, as without agents able to optimise their incentives, we cannot study the effects of mixed-motives on many-agent learning dynamics, or how RL agents might account for other agents' incentives (e.g. with Opponent Shaping~\cite{foerster2018learning}).

{\bf Our Contribution:} We train RL agents to play No-Press Diplomacy, using a policy iteration (PI) approach.
We propose a simple yet scalable improvement operator, \textit{Sampled Best Responses} (SBR), which effectively handles Diplomacy's large combinatorial action space and simultaneous moves. 
We introduce versions of PI that approximate iterated best response and fictitious play (FP)~\cite{brown1951iterative,robinson1951iterative} methods. 
In Diplomacy, we show that our agents outperform the previous state-of-the-art both against reference populations and head-to-head. A game theoretic equilibrium analysis shows our process yields consistent improvements. 
We propose a few-shot exploitability metric, which our RL reduces, but agents remain fairly exploitable.
We perform a case-study of our methods in a simpler game, Blotto (Appendix~\ref{app:blotto}), and prove convergence results on FP in many-player games (Appendix~\ref{app:theory}). 

\section{Background and Related Work}

Game-playing has driven AI research since its inception: work on games delivered progress in search, RL and computing equilibria~\cite{samuel1959some,greenblatt1967greenblatt,knuth1975analysis,findler1977studies,berliner1980backgammon,rosenbloom1982world,schaeffer1992world,tesauro1994td,schraudolph1994temporal,genesereth2005general,ferrucci2012introduction}
leading to
prominent successes in Chess~\cite{campbell2002deep}, Go~\cite{silver2016mastering,silver2017mastering}, Poker~\cite{moravvcik2017deepstack,brown2018superhuman}, multi-agent control domains~\cite{bansal2017emergent,liu2019emergent,baker2019emergent,song2019arena,wang2019poet} and video games~\cite{koutnik2013evolving,mnih2013playing,kempka2016vizdoom}.
Recent work has also used deep
RL in many-player games. Some, such as Soccer, Dota and Capture-the-Flag, focus on two teams engaged in a zero-sum game but are cooperative between members of a team~\cite{liu2019emergent,openai2019dota,jaderberg2019human}. Others, e.g. Hanabi or Overcooked, are fully-cooperative~\cite{Foerster18BAD,Hu19SAD,lerer2019improving,bard2020hanabi,carroll2019utility}. Most relevantly, some work covers mixed-motive social dilemmas, with both competitive and collaborative elements~\cite{lowe2017multi,leibo2017multi,lerer2017maintaining,crandall2018cooperating,foerster2018learning,serrino2019finding, hughes2020learning}. 

There is little work on large, competitive, many-player settings, known to be harder than their
two-player counterparts~\cite{Chen053Nash,Daskalakis05Games}.
The exception is a remarkable recent success in many-player no-limit Poker that defeated human experts~\cite{brown2019superhuman}.
However, it uses expert abstractions and end-game solving to reduce the game tree size.
Moreover, in Poker players often fold early in the game, until only two players remain and
collusion is strictly prohibited, which reduces the effects of many-player interactions in practice.
In contrast, in Diplomacy 2-player situations are rare and alliances are crucial.

{\bf Diplomacy AI Research:}
Diplomacy is a long-standing AI challenge. Even in the simpler No-Press variant, AIs are far weaker than human players.
Rule-based Diplomacy agents were proposed in the 1980s and 1990s~\cite{kraus1988diplomat,hall1992thoughts,kraus1994negotiation,kraus1995designing}. 
Frameworks such as DAIDE~\cite{daide} DipGame~\cite{fabregues2009testbed} and BANDANA~\cite{deJonge2017} promoted development of stronger rule-based agents~\cite{johansson2005tactical,hal2010diplomacy,ferreira2015dipblue}. One work applied TD-learning with pattern weights~\cite{shapiro2002learning}, but was unable to produce a strong agent.
Negotiation for Computer Diplomacy is part of the Automated Negotiating Agents Competition~\cite{baarslag2012first,de2018challenge}.
We build on DipNet, the recent success in using a graph neural network to imitate human gameplay~\cite{paquette2019no}. DipNet outperformed previous agents, all rule-based systems, by a large margin. 
However, the authors found that A2C~\cite{mnih2016asynchronous} did not significantly improve DipNet. We replicated this result with our improved network architecture (see Appendix~\ref{app:a2c}).

\subsection{No-Press Diplomacy: Summary of Game Rules}
\label{l_sect_diplomacy_intro}

We provide an intentionally brief overview of the core game mechanics. For a longer introduction, see~\cite{paquette2019no}, and the rulebook~\cite{calhamer1959diplomacy}. 
The board is a map of Europe
partitioned into provinces; 34 provinces are {\bf supply centers} (SCs, dots in \PAR, \MUN, \MAR, and \VEN in Figure~\ref{fig:example}). 
Each player controls multiple units of a country. 
Units capture SCs by occupying the province. 
Owning more SCs allows a player to build more units; the game is won by owning a majority of the SCs.
Diplomacy has {\it simultaneous moves}: each turn every player writes down orders for all their units, without knowing what other players will do; players then reveal their moves, which are executed simultaneously.
The next position is fully determined by the moves and game rules, with no chance element (e.g. dice). 

Only one unit can occupy a province, and all units have equal strength. A unit may {\it hold} (guard its province) or {\it move} to an adjacent province. A unit may also {\it support} an adjacent unit to hold or move, to overcome opposition by enemy units. 
Using Figure~\ref{fig:example} as a running example, suppose France orders {\bf move} \PAR $\rightarrow$ \BUR;
if the unit in \MUN~{\bf holds} then the unit in \PAR enters \BUR, but if Germany also ordered \MUN $\rightarrow$ \BUR, both units `bounce' and neither enters \BUR. 
If France wanted to insist on entering to \BUR, they can order \MAR~{\bf support} \PAR $\rightarrow$ \BUR, which gives France 2 units versus Germany's 1,
so France's move order would succeed and Germany's would not. However, \MAR's support can be \textit{cut} by Italy moving \PIE $\rightarrow$ \MAR, leading to an equal-strength bounce as before. 

This example highlights elements that make Diplomacy unique and challenging. Due to simultaneous move resolution, players must anticipate how others will act and reflect these expectations in their own actions. 
Players must also use a stochastic policy (mixed strategy), as otherwise opponents could exploit their determinism. 
Finally, cooperation is essential: Germany would not have been able to prevent France from moving to \BUR without Italy's help. 
Diplomacy is specifically designed so that no player can win on their own without help from other players, so players {\it must} form alliances to achieve their ultimate goal.
In the No-Press variant, this causes pairwise interactions that differ substantially from zero-sum, so difficulties associated with mixed-motive games arise in practice.

\section{Reinforcement Learning Methods}
\label{l_sect_methods}

We adopt a policy iteration (PI) based approach, motivated by successes using PI for perfect information, sequential move, 2-player zero-sum board games \cite{Anthony17ExIT, Silver18AlphaZero}. We maintain a neural network policy $\hat{\pi}$ and a value function $\hat{V}$. Each iteration we create a dataset of games, with actions chosen by an improvement operator which uses a previous policy and value function to find a policy that defeats the previous policy. We then train our policy and value functions to predict the actions chosen by the improvement operator and the game results. The initial policy $\hat{\pi}^0$ and value function $\hat{V}^0$ imitate the human play dataset, similarly to DipNet~\cite{paquette2019no}, providing a stronger starting point for learning. 

Section~\ref{methods:sbr} describes SBR, our best response approximation method, tailored to handle the simultaneous move and combinatorial action space of Diplomacy. Section~\ref{methods:brpi} describes versions of PI that use SBR to approximate iterated best response and fictitious play algorithms. Our neural network training is an improved version of DipNet, described in Section \ref{l_sect_neural_arch} and Appendix~\ref{appendix:network}.

\subsection{Sampled Best Response (SBR)}
\label{methods:sbr}

Our PI methods use best response (BR) calculations as an improvement operator. Given a policy $\pi^b$ defined for all players, the BR for player $i$ is the policy $\pi^*_i$ that maximises the expected return for player $i$ against the opponent policies $\pi^b_{-i}$. A best response may not be a good policy to play as it can be arbitrarily poor against policies other than those it responds to. Nonetheless best responses are a useful tool, and we address convergence to equilibrium with the way we use BRs in PI (Section \ref{methods:brpi}). 

Diplomacy is far too large for exact best response calculation, so we propose a tractable approximation, Sampled Best Response (SBR, Algorithm~\ref{alg:sbr}). SBR makes three approximations: (1) we consider making a single-turn improvement to the policy in each state, rather than a full calculation over multiple turns of the game. (2) We only consider taking a small set of actions, sampled from a candidate policy. (3) We use Monte-Carlo estimates over opponent actions for candidate evaluation.

Consider calculating the value of some action $a_i$ for player $i$ against an opponent policy $\pi^\textrm{b}_{-i}$ (hereafter the \textit{base policy)}. Let $T(s, \mathbf{a})$ be the transition function of the game 
and $V^\pi(s)$ be the state-value function for a policy $\pi$. The 1-turn value to player $i$ of action $a_i$ in state $s$ is given by:
$$Q^{\pi^\textrm{b}}_i(a_i|s) = \mathbb{E}_{a_{-i}\sim\pi^\textrm{b}_{-i}} V^{\pi^\textrm{b}}_i(T(s, (a_i, a_{-i})))$$
We use the value network $\hat{V}$ instead of the exact state-value to get an estimated action-value $\hat{Q}^{\pi^b_i}(a_i|s)$. 

If the action space were small enough, we could exactly calculate $\argmax_{a_i} \hat{Q}^{\pi^b_i}(a_i|s)$, as a 1-turn best response. However, there are far too many actions to consider all of them.
Instead, we sample a set of candidate actions $A_i$ from a \textit{candidate policy} $\pi^\textrm{c}_i(s)$, and only consider these candidates for our approximate best response.
Now the strength of the SBR policy depends on the candidate policy's strength, as we calculate an improvement compared to $\pi^\textrm{c}_i$ in optimizing the 1-turn value estimate.
Note we can use a different policy $\pi^\textrm{c}$ to the policy $\pi^\textrm{b}$ we are responding to.

The number of strategies available to opponents is also too large, so calculating the 1-turn value of any candidate is intractable. We therefore use Monte-Carlo sampling. Values are often affected by the decisions of other players; to reduce variance we use common random numbers when sampling opponent actions: we evaluate all candidates with the same opponent actions (\textit{base profiles}). SBR can be seen as finding a BR to the sampled base profiles, which approximate the opponent policies.

\subsection{Best Response Policy Iteration}
\label{methods:brpi}

We present a family of PI approaches tailored to using (approximate) BRs, such as SBR, in a many-agent game; we refer to them collectively as Best Response Policy Iteration (BRPI) algorithms (Algorithm~\ref{alg:sp_pi_loop}). SBR depends on the $\pi^b, \pi^c, v$ (base policy, candidate policy and value function); we can use historical network checkpoints for these. Different choices give different BRPI algorithms. The simplest version is standard PI with BRs, while others BRPI variants approximate fictitious play.

In the most basic BRPI approach, every iteration $t$ we apply SBR to the {\it latest} policy $\hat{\pi}^{t-1}$ and value $\hat{V}^{t-1}$ to obtain an improved policy $\pi'$ (i.e. SBR($\pi^c=\hat{\pi}^{t-1}$, $\pi^b=\hat{\pi}^{t-1}$, $v=\hat{V}^{t-1})$). We then sample trajectories of self-play with $\pi'$ to create a dataset, to which we fit a new policy $\hat{\pi}^t$ and value $\hat{V}^t$ using the same techniques used to imitate human data (supervised learning with a GNN). We refer to this as Iterated Best Response (IBR). IBR is akin to applying standard single-agent PI methods in self-play, a popular approach for perfect information, 2-player zero-sum games~\cite{sutton2018reinforcement,scherrer2015approximate,Silver18AlphaZero, Anthony17ExIT}. 

However, iteration through exact best responses may behave poorly, failing to converge and leading to cycling among strategies. Further, in a game with simultaneous moves, deterministic play is undesirable, and best responses are typically deterministic. As a potential remedy, we consider PI algorithms based on Fictitious Play (FP)~\cite{brown1951iterative, van2000weakened, leslie2006generalised, heinrich2015fictitious}. In FP, at each iteration all players best respond to the empirical distribution over historical opponent strategies. In 2-player zero-sum, the time average of players' strategies converges to a Nash Equilibrium~\cite{brown1951iterative,robinson1951iterative}. In Appendix~\ref{app:theory}, we review theory on many-agent FP, and prove that continuous-time FP converges to a coarse correlated equilibrium in many-agent games. This motivates approximating FP with a BRPI algorithm. We now provide two versions of Fictitious Play Policy Iteration (FPPI) that do this.

The first method, FPPI-1, is akin to NFSP~\cite{heinrich2016deep}. At iteration $t$, we aim to train our policy and value networks $\hat{\pi}^{t}$, $\hat{V}^{t}$ to approximate the time-average of BRs (rather than the latest BR). 
With such a network, to calculate the BR at time $t$, we need an approximate best response to the latest policy network (which is the time-average policy), so use SBR($\pi^b = \hat{\pi}^{t-1}$, $v=\hat{V}^{t-1})$.
Hence, to train the network to produce the \textit{average} of BRs so far, at the start of each game we uniformly sample an iteration $d \in \{ 0, 1, \ldots, t-1\}$; if we sample $d=t-1$ we use the latest BR, and if $d<t-1$ we play a game with the historical checkpoints to produce the historical BR policy from iteration $d$.~\footnote{A similar effect could be achieved with a DAgger-like procedure~\cite{ross2011reduction}, or reservoir sampling \cite{heinrich2016deep}.}

FPPI-1 has some drawbacks. With multiple opponents, the empirical distribution of opponent strategies does not factorize into the empirical distributions for each player. But a standard policy network only predicts the per-player marginals, rather than the full joint distribution, 
which weakens the connection to FP. Also, our best response operator's strength is affected by the strength of the candidate policy and the value function. But FPPI-1 continues to imitate old and possibly weaker best responses, even after we have trained stronger policies and value functions. 

In our second variant, FPPI-2, we train the policy networks to predict only the latest BR, and explicitly average historical checkpoints to provide the empirical strategy so far.
The empirical opponent strategy up to time $t$ is $\mu^t := \frac{1}{t}\sum_{d<t} \pi_{-i}^{d}$, to draw from this distribution we first sample a historical checkpoint $d<t$, and then sample actions for all players using the same checkpoint. 
Player $i$'s strategy at time $t$ should be an approximate best response to this strategy, and the next policy network $\pi^{t}$ imitates that best response. In SBR, this means we use $\pi^b=\mu^t$ as the base policy.

This remedies the drawbacks of the first approach. The correlations in opponent strategies are preserved because we sample from the same checkpoint for all opponents. More importantly, we no longer reconstruct any historical BRs, so can use our best networks for the candidate policy and value function in SBR, independently of which checkpoints are sampled to produce base profiles. For example, using the latest networks for the candidate policy and value function, while uniformly sampling checkpoints for base profiles, could find stronger best responses while still approximating FP. However, FPPI-2's final time-averaged policy is represented by a mixture over multiple checkpoints.

These variants suggest a design space of algorithms combining SBR with PI. (1) The base policy can either be the latest policy (an IBR method), or from a uniformly sampled previous checkpoint (an FP method). (2) We can also use either the latest or a uniformly sampled previous value function. (3) The candidate policy both acts as a regulariser on the next policy and drives exploration, so we consider several options: using the initial (human imitation) policy, using the latest policy, or using a uniformly sampled checkpoint; we also consider mixed strategies: taking half the candidates from initial and half from latest, or taking half from initial and half from a uniformly sampled checkpoint. 

Appendix \ref{app:blotto} is a case study analysing how SBR and our BRPI methods perform in a many-agent version of the Colonel Blotto game~\cite{Borel21}. Blotto is small enough that exact BRs can be calculated, so we can investigate how exact FP and IBR perform in these games, how using SBR with various parameters affects tabular FP, and how different candidate policy and base policy choices affect a model of BRPI with function approximation. We find that: (1) exact IBR is ineffective in Blotto; (2) stochastic best responses in general, and SBR in particular, improve convergence rates for FP; (3) using SBR dramatically improves the behaviour of IBR methods compared to exact BRs.

\begin{figure}
\begin{minipage}[t]{0.49\textwidth}
\begin{algorithm}[H]
    \centering
    \caption{Sampled Best Response}
    \begin{algorithmic}[1]
    \Require{Policies $\pi^b, \pi^c$, value function $v$}  
    \Function{SBR}{$s$:state, $i$:player}  
        \For{$j \gets 1$ to $B$}
            \State $b_j \sim \pi^b_{-i}(s)$ \Comment{Sample Base Profile}
        \EndFor
        \For {$j \gets 1$ to $C$}
            \State $c_j \sim \pi^c_i(s)$ \Comment{Candidate Action}
            \State $\hat{Q}(c_j) \gets \frac{1}{B} \sum_{k=1}^B v(T(s,(c_j,b_k)))$
        \EndFor
        \State \Return $\arg \max_{c \in \{c_j \}_{j=1}^C} \hat{Q}(c)$
   \EndFunction  
   \end{algorithmic}
\label{alg:sbr}
\end{algorithm}
\end{minipage}
\hfill
\begin{minipage}[t]{0.49\textwidth}
\begin{algorithm}[H]
    \centering
    \caption{Best Response Policy Iteration}
    \begin{algorithmic}[1]
    \Require{Best Response Operator BR}  
    \Function{BRPI}{$\pi_0({\theta}), v_0({\theta})$}  
    \For{$t \gets 1$ to $N$}            
        \State $\pi^\textrm{imp} \gets$ BR($\{ \pi_j \}_{j=0}^{t-1}, \{ v_j \}_{j=0}^{t-1}$)
        \State $D \gets$ Sample-Trajectories($\pi^\textrm{imp}$)
        \State $\pi_i(\theta) \gets$ Learn-Policy($D$)
        \State $v_i(\theta) \gets$ Learn-Value($D$)
    \EndFor
    \State \Return $\pi_N, v_N$

    \EndFunction  

    \end{algorithmic}
\label{alg:sp_pi_loop}
\end{algorithm}
\end{minipage}
\end{figure}

\subsection{Neural Architecture}
\label{l_sect_neural_arch}
Our network is based on the imitation learning of DipNet~\cite{paquette2019no}, which uses an encoder GNN to embed each province, and a LSTM decoder to output unit moves (see DipNet paper for details). We make several improvements, described briefly here, and fully in Appendix~\ref{appendix:network}. (1) We use the board features of DipNet, but replace the original `alliance features' with the board state at the last moves phase, combined with learned embeddings of all actions taken since that phase. (2) In the encoder, we removed the FiLM layer, and added node features to the existing edge features of the GNN. (3) Our decoder uses a GNN relational order decoder rather than an LSTM. These changes increase prediction accuracy by $4-5\%$ on our validation set (data splits and performance comparison in Appendix~\ref{appendix:network}).

\section{Evaluation Methods}
\label{l_sect_eval_methods}

We analyze our agents through multiple lenses: We measure winrates (1) head-to-head between agents from different algorithms and (2) against fixed populations of reference agents. (3) We consider `meta-games' between checkpoints of one training run to test for consistent improvement. (4) We examine the exploitability of agents from different algorithms. Results of each analysis are in the corresponding part of Section~ \ref{l_sect_experiments}. 

{\bf Head-to-head comparison:}
We play 1v6 games between final agents of different BRPI variants and other baselines to directly compare their performance. This comparison also allows us to spot if interactions between pairs of agents give unusual results.
From an evolutionary game theory perspective, 1v6 winrates indicate whether a population of agents can be `invaded' by a different agent, and hence whether they constitute Evolutionary Stable Strategies (ESS)~\cite{taylor1978evolutionary,smith1982evolution}. 
ESS have been important in the study of cooperation, as a conditionally cooperative strategies such as Tit-for-Tat can be less prone to invasion than purely co-operative or mostly non-cooperative strategies~\cite{axelrod1981evolution}.

{\bf Winrate Against a Population:}
We assess how well an agent performs against a reference population. An agent to be evaluated plays against 6 players independently drawn from the reference population, with the country it plays as chosen at random each game. We report the average score of the agent, and refer to this as a ``1v6'' match.~\footnote{The score is 1 for a win, $\frac{1}{n}$ for $n$ players surviving at a timeout of $\sim80$ game-years, and 0 otherwise.}
This mirrors how  people play the game: each player only ever 
represents a single agent, and wants to maximize their score against a population of other people.
We consider two reference populations: (a) only the DipNet agent~\cite{paquette2019no}, the previous state-of-the-art method, 
which imitates and hence is a proxy for human play. (b) a uniform mixture of 15 final RL agents, each from a different BRPI method (see Appendix~\ref{app:results}); BRPI agents are substantially stronger than DipNet, and the mixture promotes opponent diversity.

{\bf Policy Transitivity:}
Policy intransitivity relates to an improvement dynamics that cycles through policy space, rather than yielding a consistent improvement in the quality of the agents~\cite{hofbauer2003evolutionary,balduzzi2018re}, 
which can occur because multiple agents all optimize different objectives.
We assess policy transitivity with \textit{meta-games} between the checkpoints of a training run.
In the meta-game, instead of playing yourself, you elect an `AI champion' to play on your behalf, and achieve the score of your chosen champion.
Each of the seven players may select a champion from among the same set of $N$ pre-trained policies. 
We randomize the country each player plays, so the meta-game is a symmetric, zero-sum, 7-player  game.
If training is transitive, choosing later policies will perform better in the meta-game.

Game theory recommends selecting a champion by sampling one of the $N$ champions according to a Nash equilibrium~\cite{nash1950equilibrium}, with bounded rationality modelled by a Quantal Response Equilibrium (QRE)~\cite{mckelvey1995quantal}. Champions can be ranked according to their probability mass in the equilibrium~\cite{balduzzi2018re}.~\footnote{A similar analysis called a `Nash League' was used to study Starcraft agents~\cite{vinyals2019alphastar}.} We calculate a QRE (see Appendix \ref{app:nashconv_descent}) of the meta-game consisting of $i$ early checkpoints, and see how it changes as later checkpoints are added. In transitive runs we expect the distribution of the equilibrium to be biased towards later checkpoints.

Finding a Nash equilibrium of the meta-game is computationally hard (PPAD-complete)~\cite{papadimitriou1994complexity}, so as an alternative, we consider a simplified 2-player meta-game, where the row player's agent plays for one country, and the other player's agent plays in the other 6, we call this the `\textit{1v6 meta-game}'. We report heatmaps of the payoff table, where the row and column strategies are sorted chronologically. If training is transitive, the row payoff increases as row index increases but decreases as the column index increases, which is visually distinctive~\cite{pmlr-v97-balduzzi19a}.

{\bf Exploitability:} The exploitability of a policy $\pi$ is the margin by which an adversary (i.e. BR) to $\pi$ would defeat a population of agents playing $\pi$; it has been a key metric for Poker agents~\cite{lisy2017eqilibrium}. As SBR approximates a BR to its base policy, it can be used to lower bound the base policy's exploitability, measured by the average score of 1 SBR agent against 6 copies of $\pi$.
The strongest exploit we found mixes the human imitation policy and $\pi$ for candidates, and uses $\pi$'s value function, i.e. SBR($\pi^c=\pi^\textrm{SL}+\pi$, $\pi^b=\pi$, $v=V^\pi$).
People can exploit previous Diplomacy AIs after only a few games, so few-shot exploitability may measure progress towards human-level No-Press Diplomacy agents.
If we use a `neutral' policy and value function, and only a few base profiles, SBR acts as a few-shot measure of exploitability.
To do this we use the human imitation policy $\pi^\textrm{SL}$ for candidates, and - because $V^\textrm{SL}$ is too weak - a value function from an independent BRPI run $V^\textrm{RL}$.

\section{Results}
\label{l_sect_experiments}

We analyse three BRPI algorithms: IBR, FPPI-1 and FPPI-2, defined in Section~\ref{methods:brpi}. In FPPI-2 we use the latest value function. We sample candidates with a mixture taking half the candidates from the initial policy. The other half for IBR and FPPI-2 is from iteration $t-1$; for FPPI-1 it comes from the uniformly sampled iteration. At test time, we run all networks at a softmax temperature $t=0.1$. 
\footnote{We have published a re-implementation of our agents for benchmarking at \url{https://github.com/deepmind/diplomacy}}

{\bf Head-to-head comparison:} we compare our methods to the supervised learning (SL) and RL (A2C) DipNet agents~\cite{paquette2019no} and our SL implementation, with our neural architecture (see Appendix~\ref{app:results} for additional comparisons). 
Table~\ref{tab_1v6_agent_comparisons} shows the average 1v6 score where a single row agent plays against 6 column agents. For the BRPI methods, these are averaged over $5$ seeds of each run. All of our learning methods give a large improvement over both supervised learning baselines, and an even larger winrate over DipNet A2C. Among our learning methods, FPPI-2 achieves the best winrate in 1v6 against each algorithm, and is also the strategy against which all singleton agents do the worst.~\footnote{Note that the diagonal entries of this table involve agents playing against different agents trained using the same algorithm. This means that they are not necessarily equal to $1/7$.}

\begin{table}[ht]
\centering
 \begin{tabular}{l || r  r  r | r r r  } 
\hline
{} & SL \cite{paquette2019no} & A2C \cite{paquette2019no} & SL (ours) & FPPI-1 & IBR & FPPI-2 \\
\hline
\hline
SL \cite{paquette2019no} & 14.2\% & 8.3\% & 16.3\% & 2.3\% & 1.8\% & \textit{0.8\%} \\
A2C \cite{paquette2019no} & 15.1\% & 14.2\% & 15.3\% & 2.3\% & 1.7\% & \textit{0.9\%} \\
SL (ours) & 12.6\% & 7.7\% & 14.1\% & 3.0\% & 1.9\% &\textit{1.1\%} \\
\hline
FPPI-1 & \textbf{26.4\%} & 28.0\% & \textbf{25.9\%} & 14.4\% & 7.4\% & \textit{4.5\%} \\
IBR & 20.7\% & 30.5\% & \textbf{25.8\%} & 20.3\% & 12.9\% & \textit{10.9\%} \\
FPPI-2 & 19.4\% & \textbf{32.5\%} & 20.8\% & \textbf{22.4\%} & \textbf{13.8\%} & \textit{\textbf{12.7\%}} \\

\hline
\end{tabular}
\caption{Average scores for 1 row player vs 6 column players. BRPI methods give an improvement over A2C or supervised learning. All numbers accurate to a $95\%$ confidence interval of $\pm 0.5\%$. Bold numbers are the best value for single agents against a given set of 6 agents, italics are for the best result for a set of 6-agents against each single agent.}
\label{tab_1v6_agent_comparisons}
\end{table}

{\bf Winrate Against a Population:} The left of Figure~\ref{l_fig_perf_sppi_run} shows the performance of our BRPI methods against DipNet through training. Solid lines are the winrate of our agents in 1v6 games (1 BRPI agent vs. 6 DipNet agents), and dashed lines relate to the winrate of DipNet reverse games (1 DipNet agent vs 6 PI agents). The x-axis shows the number of policy iterations. A dashed black line indicates a winrate of 1/7th (expected for identical agents as there are 7 players). The figure shows that all PI methods quickly beat the DipNet baseline before plateauing. Meanwhile, the DipNet winrate drops close to zero in the reverse games. 
The figure on the right is identical, except the baseline is a uniform mixture of our final agents from BRPI methods. 
Against this population, our algorithms do not plateau, instead improving steadily through training.
The figure shows that FPPI-1 tends to under-perform against our other BRPI methods (FPPI-2 and IBR). We averaged 5 different runs with different random seeds, and display $90\%$ confidence intervals with shaded area. 

\begin{figure}[h!tbp]
  \centering
  \begin{minipage}[b]{0.45\textwidth}
    \includegraphics[width=\textwidth]{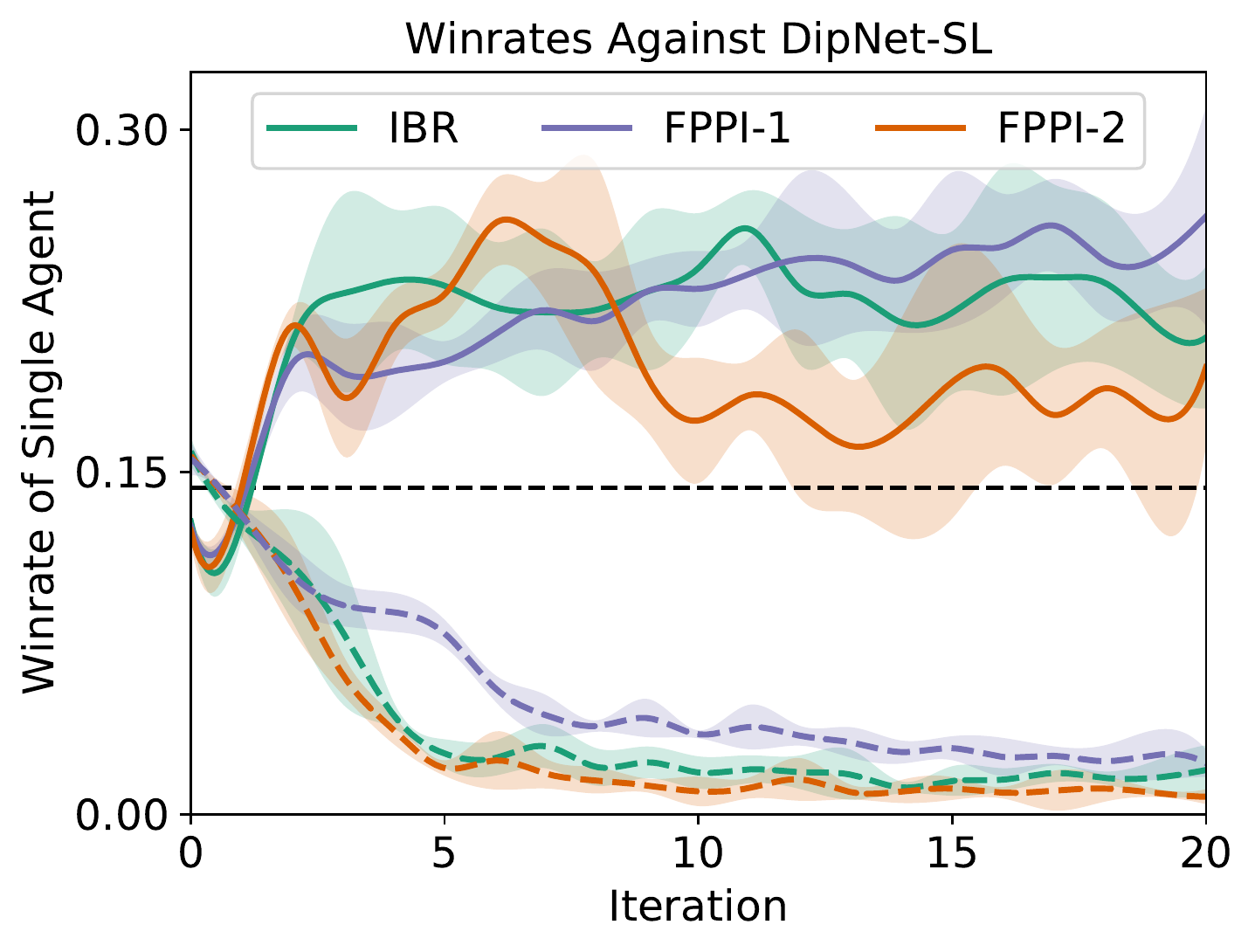}
  \end{minipage}
  \hfill
  \begin{minipage}[b]{0.45\textwidth}
    \includegraphics[width=\textwidth]{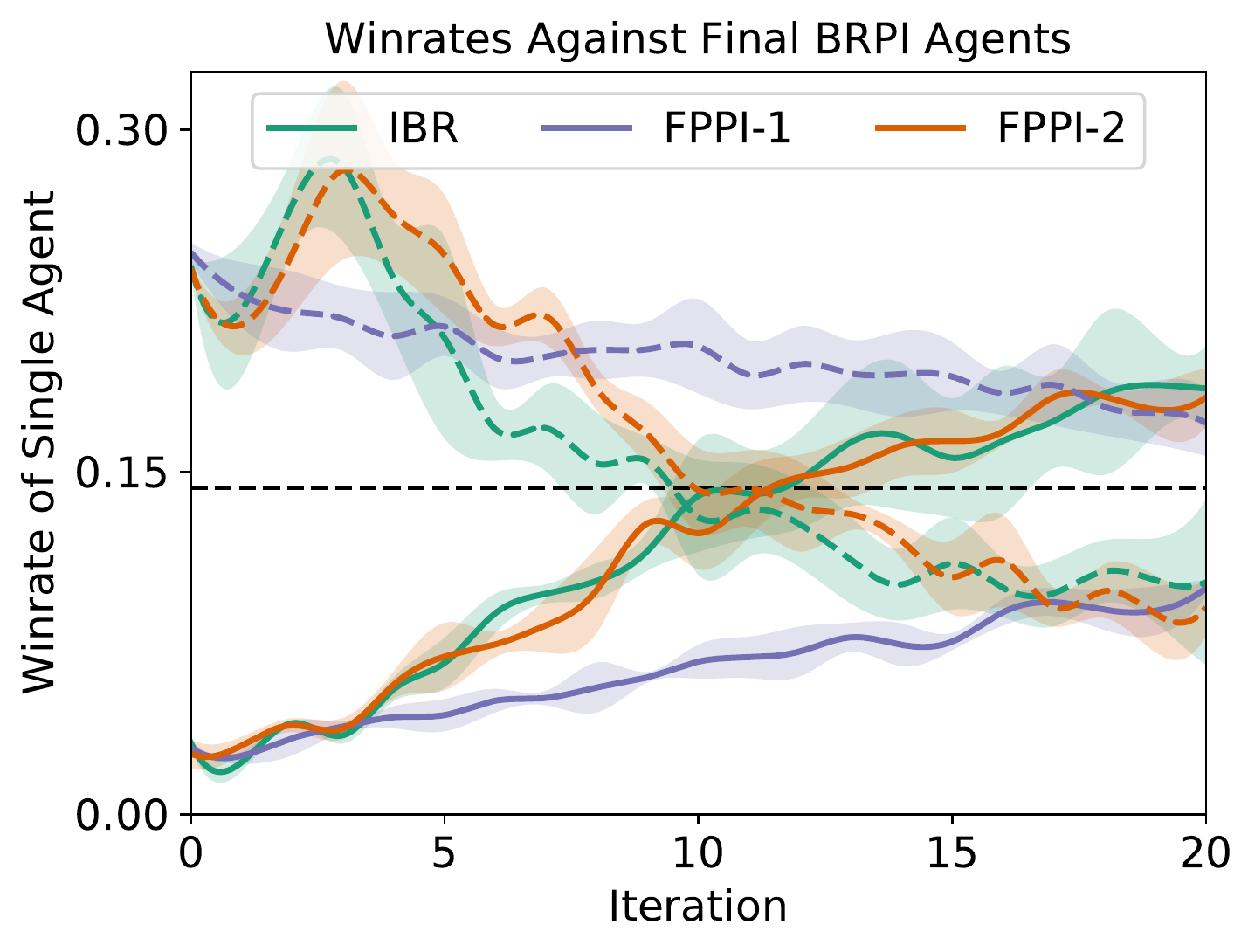}
  \end{minipage}
\vspace{-3mm}\caption{Winrates through training, 1v6 or 6v1 against different reference populations}
\label{l_fig_perf_sppi_run}
\end{figure}

\begin{figure}[h!tbp]
  \centering
  \begin{minipage}[b]{0.25\textwidth}
    \includegraphics[width=\textwidth]{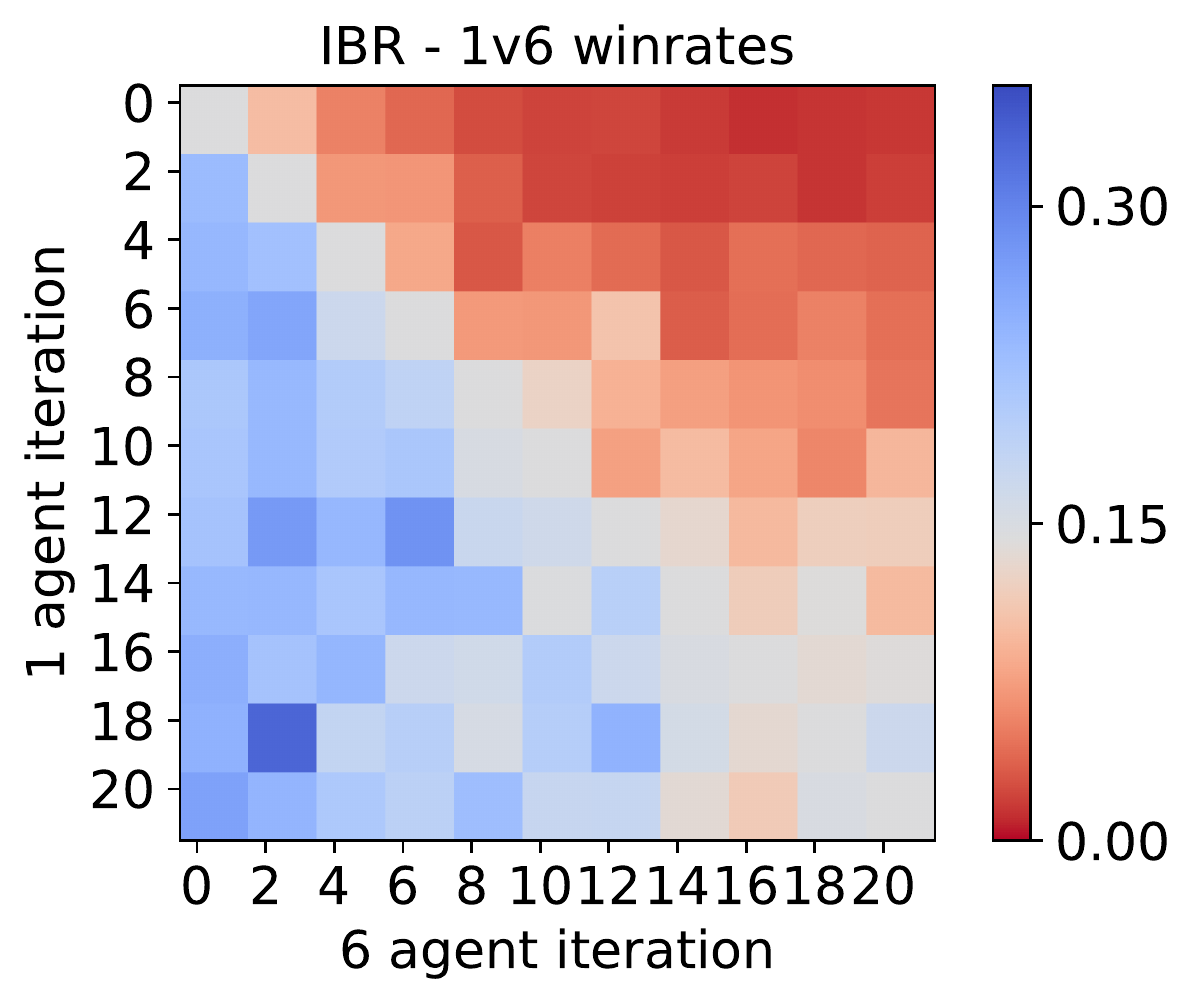}
    \includegraphics[width=\textwidth]{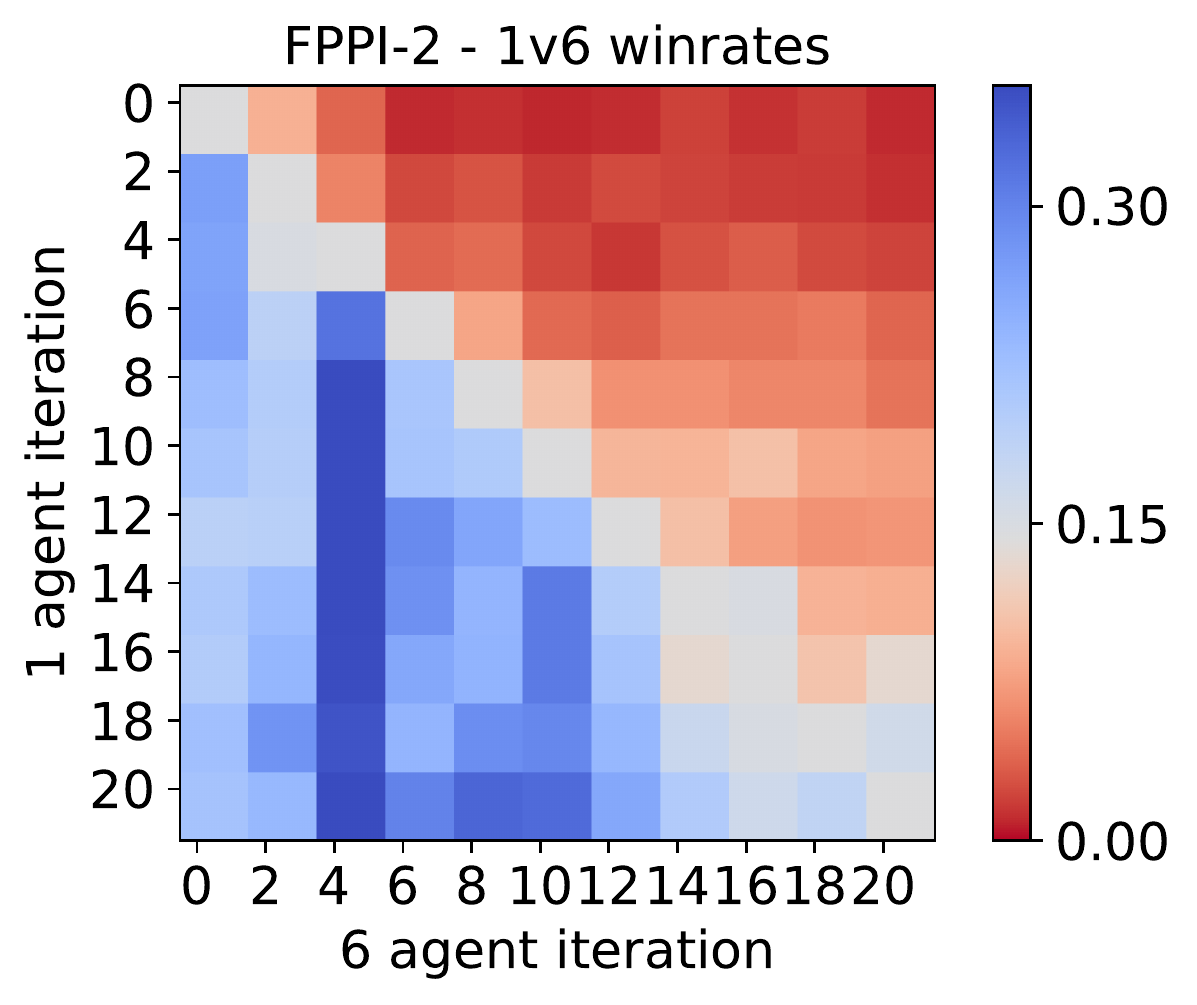}
  \end{minipage}
  \hfill
  \begin{minipage}[b]{0.74\textwidth}
    \includegraphics[width=0.95\textwidth]{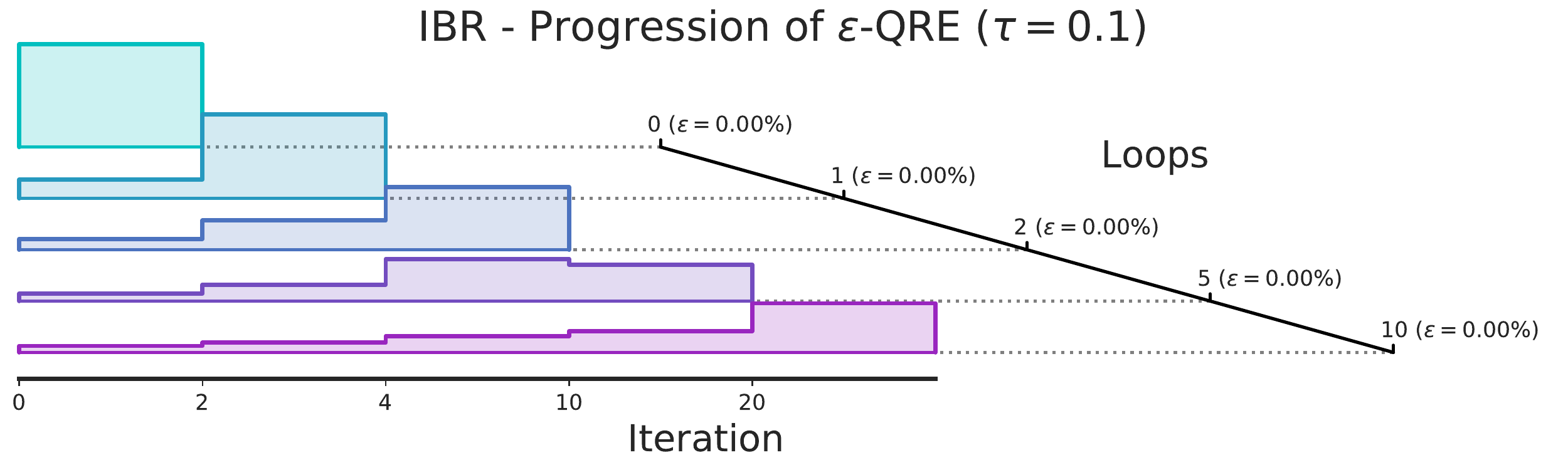}
    \includegraphics[width=0.95\textwidth]{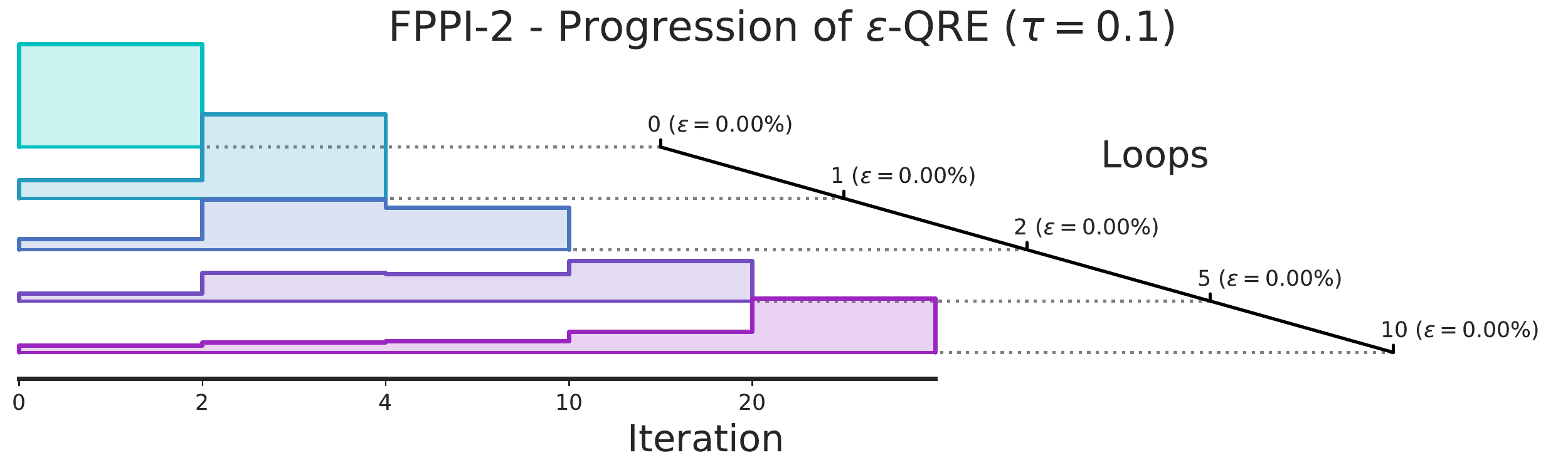}
  \end{minipage}
\caption{Transitivity Meta-Games: Top: IBR, Bottom: FPPI-2. Left: 1v6, Right: QRE in 7-player.}
\label{l_fig_dip_nash_league}
\end{figure}

{\bf Policy Transitivity:} Figure~\ref{l_fig_dip_nash_league} depicts the Diplomacy meta-game between checkpoints produced in a single BRPI run. The heatmaps on the left examine a 1v6 meta game, showing the winrate of one row checkpoint playing against 6 column checkpoints for even numbered checkpoints through the run. The plots show that nearly every checkpoint beats all the previous checkpoints in terms of 1v6 winrate. A checkpoint may beat its predecessors in way that's exploitable by other strategies: checkpoint 4 in the FPPI-2 run beats the previous checkpoints, but is beaten by all subsequent checkpoints by a larger margin than previous checkpoints.

The right side of Figure~\ref{l_fig_dip_nash_league} shows the Nash-league of the full meta-game. The $i^\textrm{th}$ row shows the distribution of a QRE in the meta-game over the first $i$ checkpoints analyzed (every row adds the next checkpoint). We consider checkpoints spaced exponentially through training, as gains to further training diminish with time. The figure shows that the QRE consistently places most of the mass on the recent checkpoint, indicating a consistent improvement in the quality of strategies, rather than cycling between different specialized strategies. This is particularly notable for IBR: every checkpoint is only trained to beat the previous one, yet we still observe transitive improvements during the run (consistent with our positive findings for IBR in Blotto in Appendix~\ref{app:blotto-ibr}).

{\bf Exploitability:} 
We find that all our agents are fairly exploitable at the end of training (e.g. the strongest exploiters achieve a 48\% winrate), and the strongest exploit found does not change much through training, but few-shot exploitability \textit{is} reduced through training.
Agents are more exploitable at low temperatures, suggesting our agents usefully mix strategies. 
Final training targets are less exploitable than final networks, 
including in IBR, unlike what we'd expect if we used an exact BR operator, which would yield a highly exploitable deterministic policy. 
For full details see Appendix~\ref{app:results}.

\section{Conclusion}
\label{l_sect_conclusion}

We proposed a novel approach for training RL agents in Diplomacy with BRPI methods and overcoming the simultaneous moves and large combinatorial action space using our simple yet effective SBR improvement operator. 
We showed that the stochasticity of SBR was beneficial to BRPI algorithms in Blotto.
We set-out a thorough analysis process for Diplomacy agents. 
Our methods improve over the state of the art, yielding a consistent improvement of the agent policy. 
IBR was surprisingly effective, but our strongest method was FPPI-2, a new approach to combining RL with fictitious play.

Using RL to improve game-play in No-press Diplomacy is a prerequisite for investigating the complex mixed motives and many-player aspects of this game. Future work can now focus on questions like:
(1) What is needed to achieve human-level No-Press Diplomacy AI?
(2) How can the exploitability of RL agents be reduced? 
(3) Can we build agents that reason about the incentives of others, for example behaving in a reciprocal manner~\cite{eccles2019imitation}, or by applying opponent shaping \cite{foerster2018learning}?
(4) How can agents learn to use signalling actions to communicate intentions in No-Press Diplomacy?
(5) Finally, how can agents handle negotiation in Press variants of the game, where communication is allowed?

\nocite{balduzzi2020smooth}
\clearpage

\section*{Broader Impact}
\label{app:broader_impact}

We discuss the potential impact of our work, examining possible positive and negative societal impact.

{\bf What is special about Diplomacy?}
Diplomacy~\cite{calhamer1959diplomacy} has simple rules but high emergent complexity. 
It was designed to accentuate dilemmas relating to building alliances, negotiation and teamwork in the face of uncertainty about other agents. The tactical elements of Diplomacy form a difficult environment for AI algorithms: the game is played by seven players, it applies simultaneous moves, 
and has a very large combinatorial action space. 

{\bf What societal impact might it have?}
We distinguish immediate societal impact arising from the availability of the new training algorithm, and indirect societal impact due to the future work on many-agent strategic decision making enabled or inspired by this work.

\textbf{Immediate Impact.} 
Our methods allow training agents in Diplomacy and other temporally extended environments where players take simultaneous actions, and the action of a player can be decomposed into multiple sub-actions, in domains that can can be simulated well, but in which learning has been difficult so far. 
Beyond the direct impact on Diplomacy, possible applications of our method include business, economic, and logistics domains, in as far as the scenario can be simulated sufficiently accurately. Examples include games that require a participant to control multiple units (Starcraft and Dota~\cite{vinyals2019alphastar,openai2019dota} have this structure, but there are many more), controlling fleets of cars or robots~\cite{agmon2008multi,portugal2011survey,vandael2015reinforcement} or sequential resource allocation~\cite{reveliotis1997polynomial,park2001deadlock}. However, applications such as in business or logistics are hard to capture realistically with a simulator, so significant additional work is needed to apply this technology in real-world domains involving multi-agent learning and planning.

While Diplomacy is themed as a game of strategy where players control armies trying to gain control of provinces, it is a very abstract game - not unlike Chess or Checkers. It seems unlikely that real-world scenarios could be successfully reduced to the level of abstraction of a game like Diplomacy. 
In particular, our current algorithms assume a known rule set and perfect information between turns, whereas the real world would require planning algorithms that can manage uncertainty robustly. 

\textbf{Future Impact.} In providing the capability of training a tactical baseline agent for Diplomacy or similar games, this work also paves the way for research into agents that are capable of forming alliances and use more advanced communication abilities, either with other machines or with humans. In Diplomacy and related games this may lead to more interesting AI partners to play with. More generally, this line of work may inspire further work on problems of cooperation. 
We believe that a key skill for a Diplomacy player is to ensure that, wherever possible, their pairwise interactions with other players are positive sum. AIs able to play Diplomacy at human level must be able to achieve this in spite of the incentive to unilaterally exploit trust established with other agents.

More long term, this work may pave the way towards research into agents that play the full version of the game of Diplomacy, which includes communication. In this version, communication is used to broker deals and form alliances, but also to misrepresent situations and intentions. For example, agents may learn to establish trust, but might also exploit that trust to mislead their co-players and gain the upper hand. In this sense, this work may facilitate the development of manipulative agents that use false communication as a means to achieve their goals. To mitigate this risk, we propose using games like Diplomacy to study the emergence and detection of manipulative behaviours in a sandbox --- to make sure that we know how to mitigate such behaviours in real-world applications.

Overall, our work provides an algorithmic building block for finding good strategies in many-agent systems. 
While prior work has shown that the default behaviour of independent reinforcement learning agents can be non-cooperative \cite{leibo2017multi,foerster2018learning,hughes2020learning},
we believe research on Diplomacy could pave the way towards creating artificial agents that can successfully cooperate with others, including handling difficult questions that arise around establishing and maintaining trust and alliances.

\section*{Acknowledgements}
We thank Julia Cohen, Oliver Smith, Dario de Cesare, Victoria Langston, Tyler Liechty, Amy Merrick and Elspeth White for supporting the project. We thank Edward Hughes and David Balduzzi for their advice on the project. We thank Kestas Kuliukas for providing the dataset of human diplomacy games.

\section*{Author Contributions}
T.A., T.E., A.T., J.K., I.G. and Y.B. designed an implemented the RL algorithm. I.G., T.A., T.E., A.T., J.K., R.E., R.W. and Y.B. designed and implemented the evaluation methods. A.T., J.K., T.A., T.E., I.G. and Y.B. designed and implemented the improvements to the network architecture. T.H. and N.P. wrote the Diplomacy adjudicator. M.L. and J.P. performed the case-study on Blotto and theoretical work on FP. T.A., M.L. and Y.B. wrote the paper. S.S. and T.G. advised the project

\bibliographystyle{plain}
\bibliography{references}

\newpage
\appendix
\renewcommand \thepart{}
\renewcommand \partname{}
\part{Appendices} 

\setlength{\cftbeforepartskip}{5em}
\tableofcontents

\addtocontents{toc}{\protect\setcounter{tocdepth}{2}}

\section{Case Study: Many-Player Colonel Blotto}
\label{app:blotto}

Our improvement operator for Diplomacy is SBR, described in Section~\ref{methods:sbr}. Diplomacy is a complex environment, where training requires significant time. To analyze the impact of applying SBR as an improvement operator, we use the much simpler normal-form game called the Colonel Blotto game~\cite{Borel21} as an evaluation  environment. We use the Blotto environment to examine several variants of best response policy iteration. We run these experiments using OpenSpiel~\cite{LanctotEtAl2019OpenSpiel} and will contribute the code to reproduce these results.

Colonel Blotto is a famous game theoretic setting proposed about a century ago. It has a very large action space, and although its rules are short and simple, it has a high emergent complexity~\cite{roberson2006colonel}. The original game has only two players, but we describe an $n$-player variant, Blotto$(n,c,f)$: each player has $c$ coins to be distributed across $f$ fields. The aim is to win the most fields. Each player takes a single action simultaneously and the game ends. The action is an allocation of the player's coins across the fields: the player decides how many of its $c$ coins to put in each of the fields, choosing $c_1,c_2,\ldots,c_f$ where $\sum_{i=1}^f c_i = c$. For example, with $c=10$ coins and $f=3$ fields, one valid action is [7, 2, 1], interpreted as 7 coins on the first field, 2 on the second, and 1 on the third. A field is won by the player contributing the most coins to that field (and drawn if there is a tie for the most coins). The winner receives a reward or +1 (or a +1 is shared among all players with the most fields in the case of a tie), and the losers share a -1, and all player receive 0 in the case of a $n$-way tie.

We based our analysis on Blotto for several reasons. First, like Diplomacy, it has the property that the action space grows combinatorially as the game parameter values increase. Second, Blotto is a game where the simultaneous-move nature of the game matters, so players have to employ mixed strategies, which is a difficult case for standard RL and best response algorithms. Third, it has been used in a number of empirical studies to analyze the behavior of human play~\cite{arad2012multi,Kohli12Blotto}. Finally, Blotto is small enough that distances to various equilibria can be computed exactly, showing the effect each setting has on the empirical convergence rates.

Blotto differs to Diplomacy in several ways. The Nash equilibrium in Blotto tends to cover a substantial proportion of the strategy space, whereas in Diplomacy, many of the available actions are weak, so finding good actions is more of a needle-in-a-haystack problem. In many-agent Blotto, it is difficult for an agent to target any particular opponent with an action, whereas in Diplomacy most attacks are targeted against a specific opponent. Finally, Blotto is a single-turn (i.e. normal form) game, so value functions are not needed. Throughout this section, we use the exact game payoff wherever the value function would be needed in Diplomacy.

\subsection{Definitions and Notation Regarding Equilibria}
\label{l_sect_appendix_defs_notation}

We now provide definitions and notation regarding various forms of equilibria in games, which we use in other appendices also. 

Consider an $N$-player game where players take actions in a set $\{A^i\}_{i\in \{1, \dots, N\}}$. The reward for player $i$ under a profile $\mathbf{a} = \{a_1, \dots, a_N\}$ of actions is $r^i(a_1, \dots, a_N)$.

Each player uses a policy $\{\pi_i\}_{i\in \{1, \dots, N\}}$ which is a distribution over action $A^i$.

We use the following notation:
$$r^i_{\pi} = \mathbb{E}_{\forall i, a^i \sim \pi^i(.)}\left[r^i(a_1, \dots, a_N)\right]$$
$$r^i_{\pi^{-i}} = \mathbb{E}_{\forall j\neq i, a^j \sim \pi^j(.)}\left[r^i(a_1, \dots, a_N)\right]$$

{\bf Nash equilibrium:}
A Nash equilibrium is strategy profile $(x_1, \ldots, x_N)$ such that no player $i$ can improve their win rate by unilaterally deviating from their sampling distribution $x_i$ (with all other player distributions, $x_{-i}$, held fixed). Formally, a Nash equilibrium is a policy $\pi = \{\pi_1, \dots, \pi_N\}$ such that:
\begin{align*}
    \forall i, \max_{\pi'^{i}} \langle \pi'^{i} , r^i_{\pi^{-i}} \rangle = r^i_{\pi}
\end{align*}

{\bf $\epsilon$-Nash:} An $\epsilon$-Nash is a strategy profile such that the most a player can improve their win rate by unilaterally deviating is $\epsilon$:
\begin{align}
    \mathcal{L}_{\textrm{exp}_i}(\boldsymbol{x}) = \max_z \{r^i(z, x_{-i})\} - r^i(x_i, x_{-i}) &\le \epsilon
\end{align}
where $r^i$ is player $i$'s reward given all player strategies. 

The NashConv metric, defined in~\cite{lanctot2017unified}, has a value of 0 at a Nash equilibrium and can be interpreted as a distance from
Nash. It is the sum of over players of how much each player can improve their winrate with a unilateral deviation.

{\bf Coarse Correlated Equilibrium:}
The notion of a Coarse Correlated Equilibrium~\cite{moulin1978strategically} relates to a joint strategy $\pi(a_1, \dots, a_N)$, which might not factorize into independent per-player strategies. A joint strategy $\pi(a_1, \dots, a_N)$ is a Coarse Correlated Equilibrium if for all player $i$:
\begin{align}
    \label{eq:cce}
    \max_{a'_i} E_{a_1, \dots, a_N \sim \pi}\left[r(a'_i, a_{-i})\right] - E_{a_1, \dots, a_N \sim \pi}\left[r(a_1,\dots, a_N)\right] \leq 0
\end{align}

A joint strategy $\pi(a_1, \dots, a_N)$ is a $\epsilon$-Coarse Correlated Equilibrium if for all player $i$:
\begin{align}
    \label{eq:eps-cce}
    \max_{a'_i} E_{a_1, \dots, a_N \sim \pi}\left[r(a'_i, a_{-i})\right] - E_{a_1, \dots, a_N \sim \pi}\left[r(a_1,\dots, a_N)\right] \leq \epsilon
\end{align}

We define a similar metric to NashCov for the empirical distance to a
{\it coarse-correlated equilibrium} (CCE)~\cite{moulin1978strategically}, given its relationship to no-regret learning algorithms.
Note, importantly, when talking about the multiplayer ($n > 2$) variants, $\pi_{-i}^t$ is always the average joint policy
over all of player $i$'s opponents,
rather than the  player-wise marginal average policies composed into a joint policy (these correspond to the same notion of
average policy for the special case of $n = 2$, but not generally).

\begin{definition}[CCEDist]
For an $n$-player normal form game with players $i \in \cN = \{1 \cdots n \}$, reward function $r$,
joint action set $\cA = \cA_1 \times \cdots \times \cA_n$, let $a_{i \rightarrow *}$ be the joint action
where player $i$ replaces their action with $a_i^*$; 
given a correlation device (distribution over joint actions) $\mu$, 
\[
\textsc{CCEDist}(\mu) = \sum_{i \in \cN} \max(0, \max_{a_i^* \in \cA_i} \bE_{a \sim \mu}[r^i(a_{i \rightarrow *}, a_{-i}) - r^i(a)]).
\]
\end{definition}

\subsubsection{Relating $\epsilon$-Coarse Correlated Equilibria, Regret, and CCEDist}
\label{eps_cce_from_regret}

In this subsection, we formally clarify the relationships and terminology between similar but slightly different
concepts\footnote{For further reading on this topic, see~\cite{celli2019learning,farina2019coarse,gibson2013regret,gordon2008no}.}.

Several iterative algorithms playing an $n$-player game produce a sequence of policies $\pi^t = (\pi_1^t, \cdots, \pi_n^t)$ over iterations
$t \in \{ 1, 2, \cdots, T \}$.
Each individual player $i$ is said to have (external) {\bf regret},
\[
R^T_i = \max_{a_i'} \sum_{t=1}^T r^i(a_i', \pi^t_{-i}) - \sum_{t=1}^T r^i(\pi^t),
\]
where $r^i$ is player $i$'s {\it expected} reward (over potentially stochastic policies $\pi^t$). 
Define the {\bf empirical average joint policy} $\mu$ to be the a uniform distribution over $\{ \pi^t~|~1 \le t \le T\}$.
Denote $\delta_i(\mu, a_i')$ to be the incentive for player $i$ to deviate from playing $\mu$ to instead always
choosing $a_i'$ over all their iterations, i.e.
how much the player would have gained (or lost) by applying this deviation.
Now, define player $i$'s incentive to deviate to their best response action, $\epsilon_i$, and notice:
\[
\epsilon_i = \max_{a_i'} \delta_i(\mu, a_i') = \bE_{\pi^t \sim \mu}[R_i^t] = \frac{R_i^T}{T},
\]
where, in an $\epsilon$-CCE, $\epsilon = \max_i \epsilon_i$ (Equation~\ref{eq:eps-cce}). Also, a
CCE is achieved when $\epsilon \le 0$ (Equation~\ref{eq:cce}).
Finally, CCEDist is a different, but very related, metric from $\epsilon$. Instead, it sums over the players and
discards negative values from the sum, so $\textsc{CCEDist}(\mu) = \sum_i \max(0, \epsilon_i)$.

\subsection{A Generalization of Best Response Policy Iteration}

Algorithm~\ref{alg:gbrpi} presents the general family of response-style algorithms that we consider (covering our approaches in Section~\ref{methods:brpi}). 

\begin{algorithm}[!ht]
\caption{A Generalization of BRPI}
\label{alg:gbrpi}
\begin{algorithmic}
    \Require{arbitrary initial policy $\pi^0$, total steps $T$}  
        \For{time step $t \in \{1, 2, \ldots, T \}$}
            \For{player $i \in \{1, \cdots, n\}$}
                \State (response, values) $\gets$ \textsc{ComputeResponse}$(i, \pi^{t-1}_{-i})$   \label{}
                \State $\pi^t_i \gets$ \textsc{UpdatePolicy}$(i$, response, values$)$
            \EndFor
        \EndFor
        \State \Return $\pi^T$
\end{algorithmic}  
\end{algorithm}

Several algorithms fit into this general framework. For example, the simplest is tabular iterated best response (IBR),
where the \textsc{UpdatePolicy} simply overwrites the policy with a best response policy.
Classical fictitious play (FP) is obtained when \textsc{ComputeResponse} returns a best response and
\textsc{UpdatePolicy} updates the policy to be the average of all best responses seen up to time $t$.
Stochastic fictitious play (SFP)~\cite{Fudenberg93SFP} is obtained when the best response policy is defined as
a softmax over action values rather than the argmax, i.e. returning a policy
\[
\pi_i(a) = \textsc{Softmax}_{\lambda}(r^i)(a) = \frac{ \exp(\lambda r^i(a)) }{\sum_a \exp(\lambda r^i(a))},
\]
where $r^i$ is a vector of expected reward for each action.
Exploitability Descent~\cite{lockhart2019computing} defines \textsc{UpdatePolicy} based on gradient descent.
We describe several versions of the algorithm used in Diplomacy below.
A spectrum of game-theoretic algorithms is described in~\cite{lanctot2017unified}.

We run several experiments on different game instances. The number of actions and size of each is listed in
Table~\ref{tab:blotto-games}.

\begin{table}[!ht]
    \centering
    \begin{tabular}{rrr|r|r}
        $n$ & $c$ & $f$ & Number of actions per player $(|A_i|)$ & Size of matrix / tensor $(= |A_i|^n)$\\
        \midrule
        2   & 10   & 3  & 66  & 4356 \\
        2   & 30   & 3  & 496 & 246016 \\
        2   & 15   & 4  & 816 & 665856 \\
        2   & 10   & 5  & 1001 & 1002001 \\
        2   & 10   & 6  & 3003 & 9018009 \\
        \midrule
        3   & 10   & 3  & 66  & 287496 \\
        \midrule
        4   & 8    & 3  & 45  & 4100625 \\
        \midrule
        5   & 6    & 3  & 28  & 17210368 \\
    \end{tabular}
    \caption{Blotto Game Sizes}
    \label{tab:blotto-games}
\end{table}

\subsection{Warm-up: Fictitious Play versus Iterated Best Response (IBR)}
\label{app:blott-warmup}

We first analyze convergence properties of various forms of BRPI, highlighting the difference between IBR and FP approaches. 

\begin{figure}[!ht]
    \centering
    \begin{tabular}{cc}
    \includegraphics[scale=0.5]{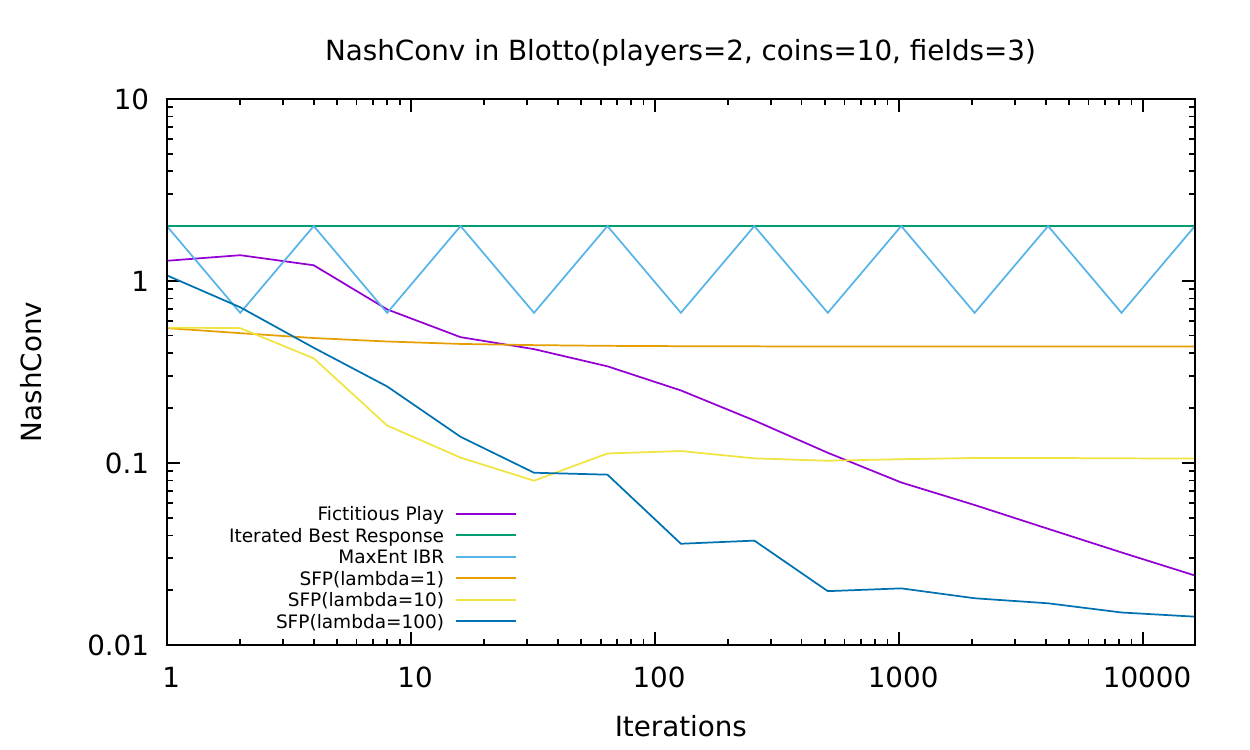} &
    \includegraphics[scale=0.5]{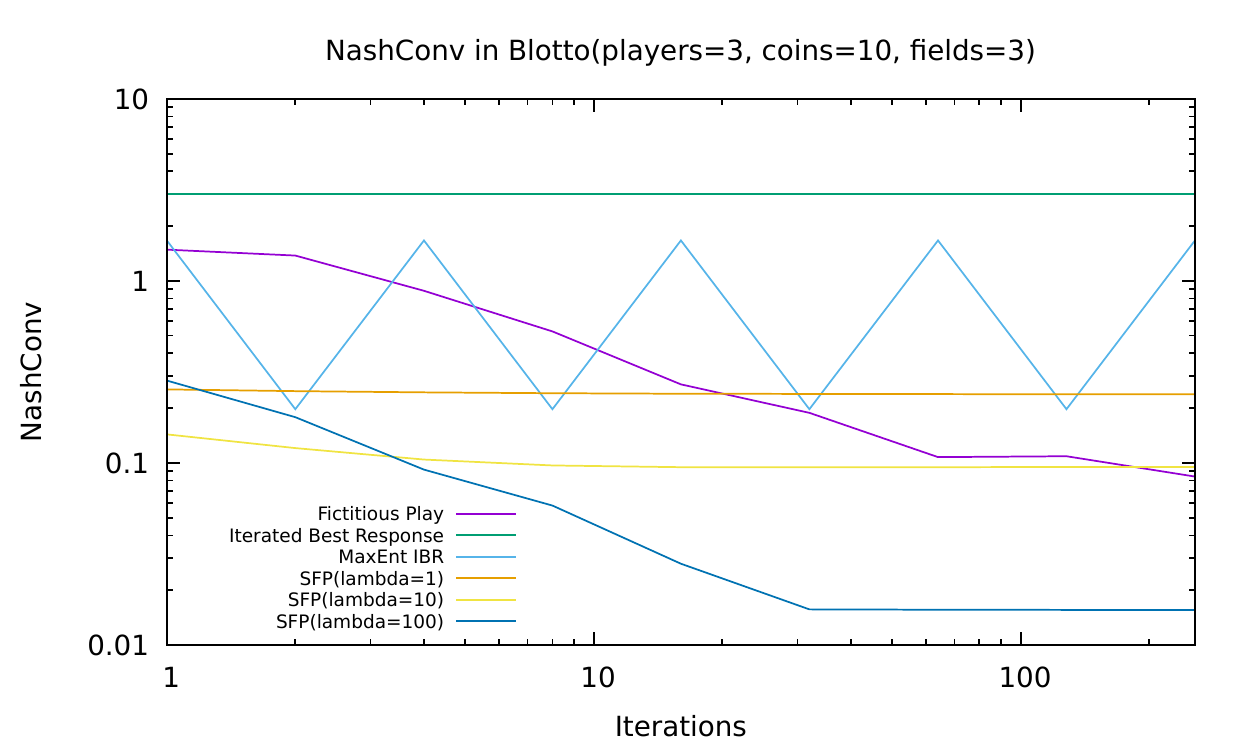}\\
    (a) & (b) \\
    \end{tabular}
    \caption{Convergence of Fictitious Play versus Iterated Best Response variants in (a) Blotto(2, 10, 3),
    and (b) Blotto(3, 10, 3).}
    \label{fig:blotto_fp_ibr}
\end{figure}

Figure~\ref{fig:blotto_fp_ibr}(a) shows the convergence rates to approximate Nash equilibria of fictitious play and iterated best response in Blotto(2, 10, 3). 
We observe that FP is reducing NashConv over time while IBR is not.
In fact, IBR remains fully exploitable on every iteration. It may be cycling around new best responses, but every best response is individually fully exploitable: for every action
$[x, y, z]$ there exists a response of the form $[x + 1, y + 1, z - 2]$ which beats it,
so playing deterministically in Blotto is always exploitable, which
demonstrates the importance of using a stochastic policy.
The convergence graphs for FP and IBR look similar with $n = 3$ players (Figure~\ref{fig:blotto_fp_ibr}(b),
despite both being known to not converge generally, see~\cite{Jordan93} for a counter-example).

One problem with IBR in this case is that it places its entire weight on the newest best response. 
FP mitigates this effectively by averaging, i.e. only moving toward this policy with a step size of
$\alpha_t \approx \frac{1}{t}$. One question, then, is whether having a stochastic operator could improve IBR's convergence rate. We see that indeed MaxEnt IBR-- which returns a mixed best response choosing uniformly over all the tied best response actions-- does find policies that are not fully-exploitable; however,
the NashConv is still non-convergent and not decreasing over time.

Finally, we consider the convergence of Stochastic FP, which seems to reduce NashConv over time as well; however, 
the choice of the temperature $\lambda$ has a strong effect on the convergence curve, 
improving with high $\lambda$ (moving closer to a true best response).
The plateaus occur due to the fact that the introduction of the softmax induces convergence towards Quantal
Response Equilibria (QRE)~\cite{mckelvey1995quantal,hofbauer2002global} rather than Nash equilibria and can be interpreted as
entropy-regularizing the reward~\cite{perolat2020poincar}. Further, unlike FP, SFP is Hannan-consistent~\cite{fudenberg1998theory}
and can be interpreted as a probabilistic best response operator.
This supports the claim that ``stochastic rules perform better than deterministic ones''~\cite{Fudenberg09}.
We elaborate on these points in Appendix~\ref{app:theory}.

\subsection{Fictitious Play using SBR}
\label{app:blott-fp-sbr}

In this subsection, we refer to FP+SBR$(B,C)$ as the instance of algorithm~\ref{alg:gbrpi} that averages the policies like fictitious play but uses a Sampled Best Response operator as described in Algorithm~\ref{alg:sbr}, using $B$ base profiles and $C$ candidates, where the base profile sampling policy is simply $\pi^t$ and the candidate sampling policy is uniform over all actions.

\subsubsection{Trade-offs and Scaling}

\begin{figure}[!ht]
    \centering
    \begin{tabular}{cc}
    \includegraphics[scale=0.52]{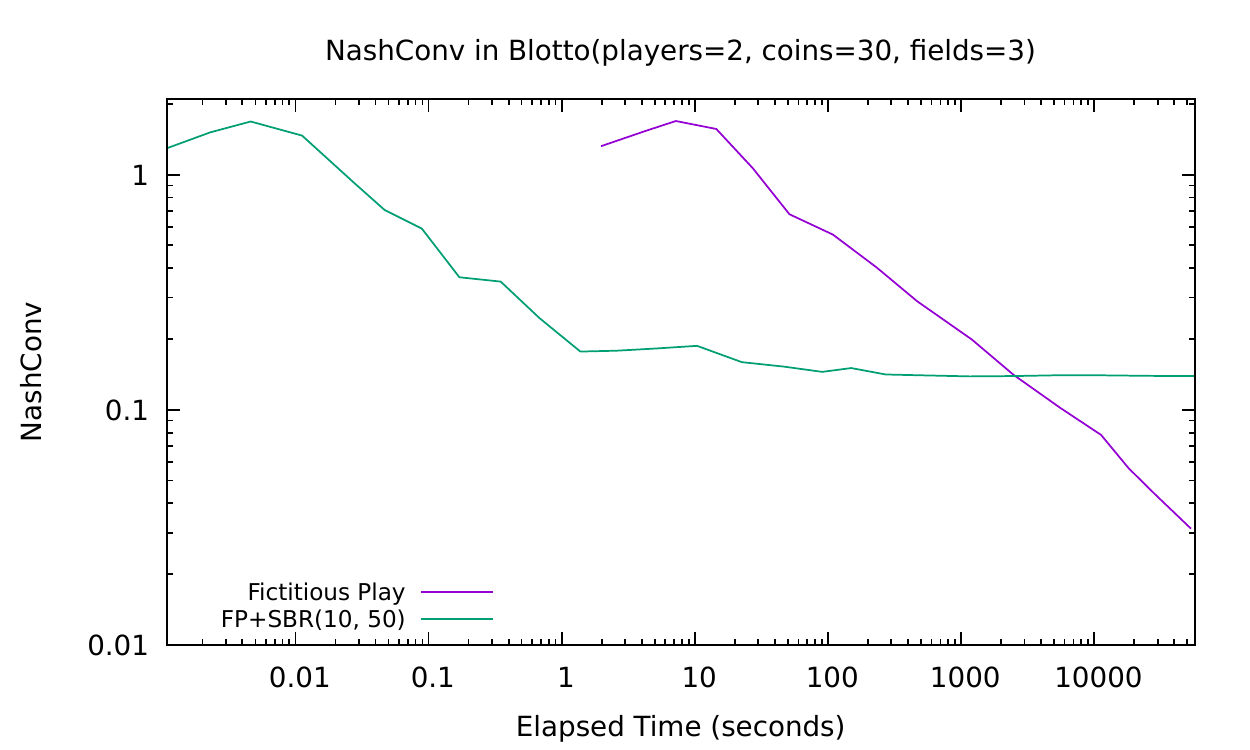} &
    \includegraphics[scale=0.52]{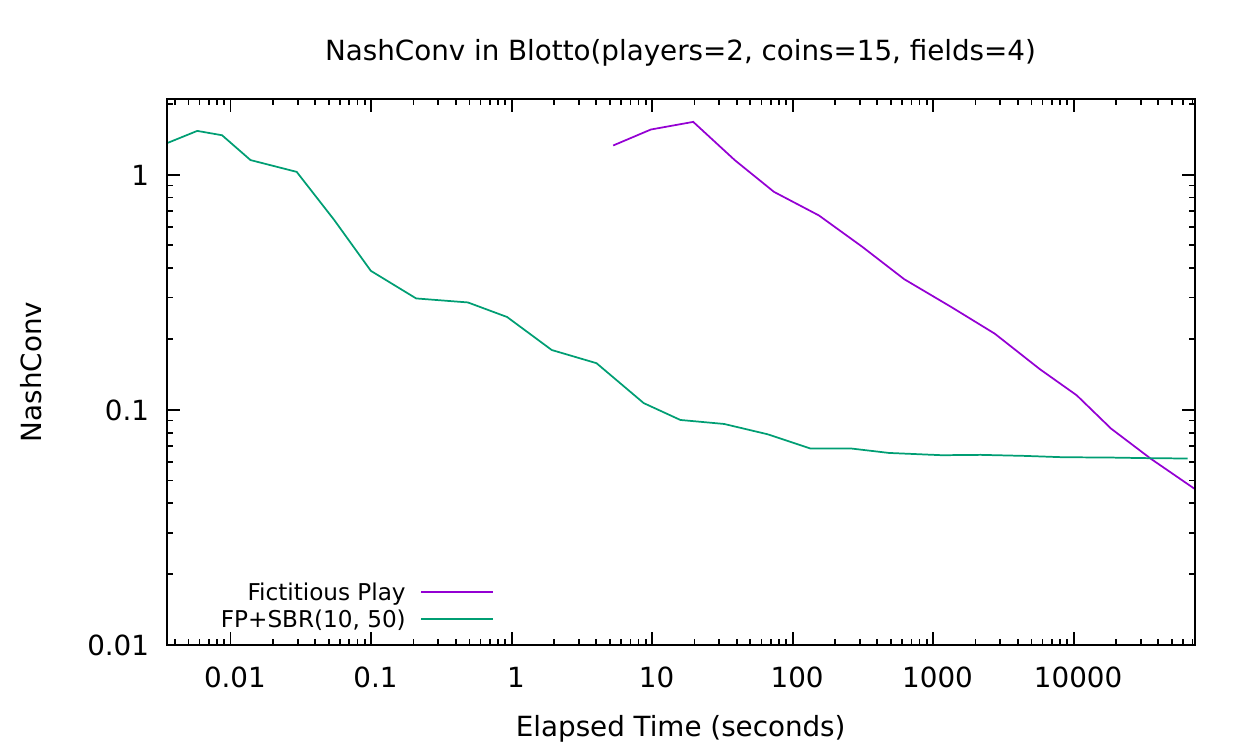}\\
    (a) & (b) \\
    \includegraphics[scale=0.52]{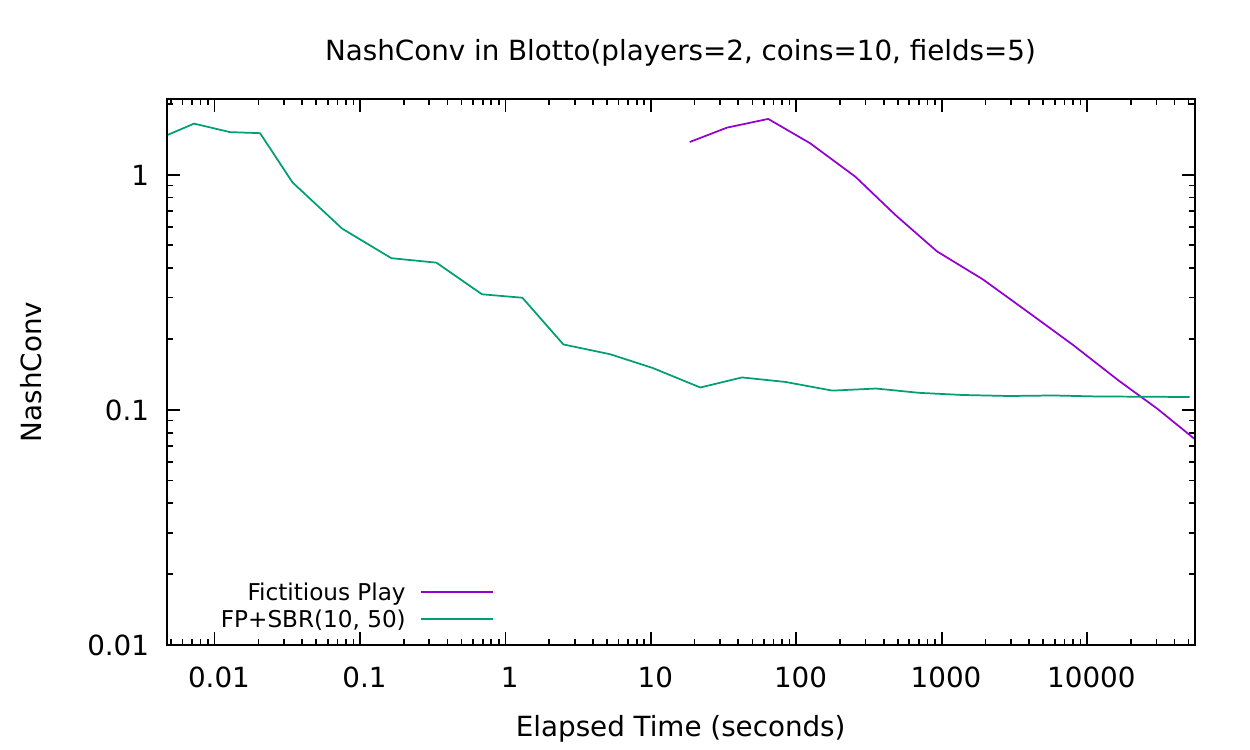} &
    \includegraphics[scale=0.52]{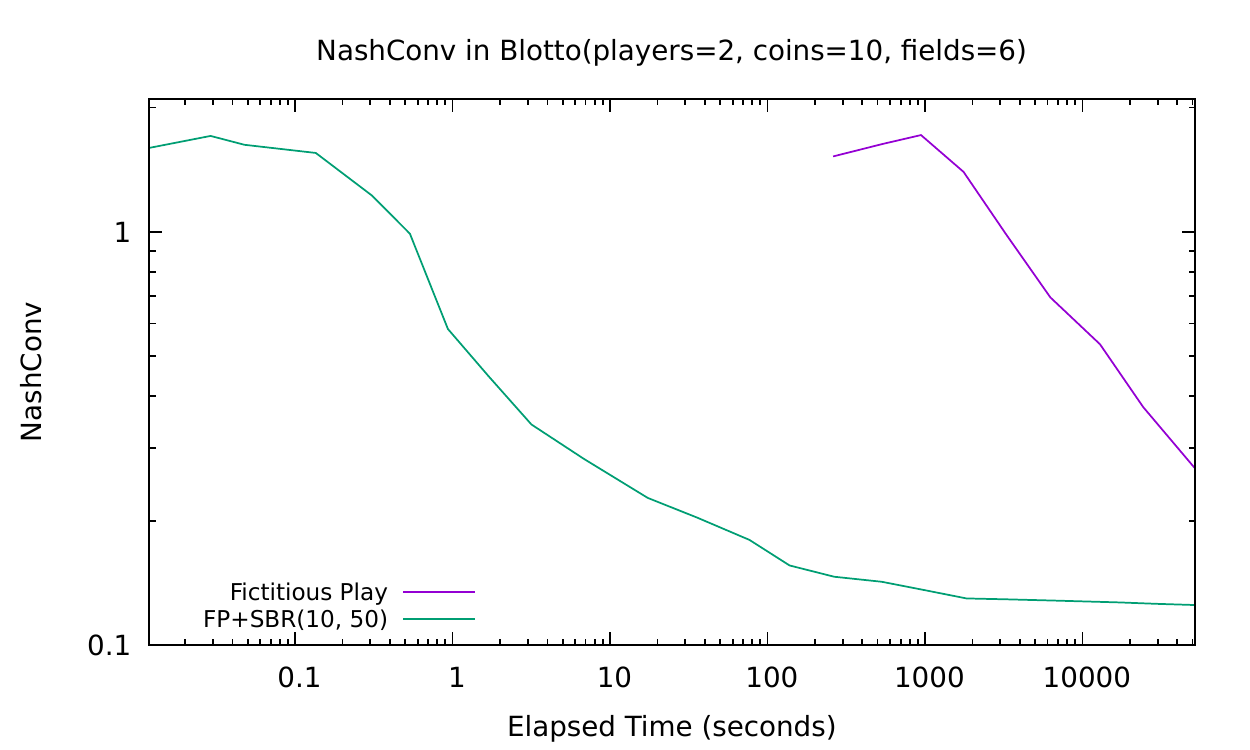}\\
    (c) & (d) \\
    \end{tabular}
    \caption{Convergence of Fictitious Play versus FP+SBR(10,50) by elapsed time in 2-player Blotto with
    increasing action space sizes where (a) $>$ (b) $>$ (c) $>$ (d).}
    \label{fig:blotto2_scaling}
\end{figure}

Figure~\ref{fig:blotto2_scaling} shows several convergence graphs of different 2-player Blotto games with increasing action sizes using elapsed time as the x-axis. The first observation is that, in all cases, FP+SBR computes a policy in a matter of milliseconds, thousands of times earlier than FP's first point. Secondly, it appears that as the action space the game grows, the point by which SBR achieves a specific value and when FP achieves the same value gets further apart: that is, SBR is getting a result with some quality
sooner than FP. For example, in Blotto(2, 10, 6), SBR can achieve the an approximation accuracy in 1 second which takes FP over three hours to achieve. To quantify the trade-offs, we compute a factor of elapsed time to reach the value by FP divided by elapsed time taken to reach NashConv $\approx 0.2$ by FP+SBR(10,50).
These values are 1203, 1421, 2400, and 2834 for the four games by increasing size.
This trade-off is only true up to some threshold accuracy; the benefit of approximation from sampling from SBR leads to plateaus long-term and is eventually surpassed by FP. However, for large enough games even a single full iteration of FP is not feasible.

\begin{figure}[!ht]
    \centering
    \begin{tabular}{ccc}
    \includegraphics[scale=0.35]{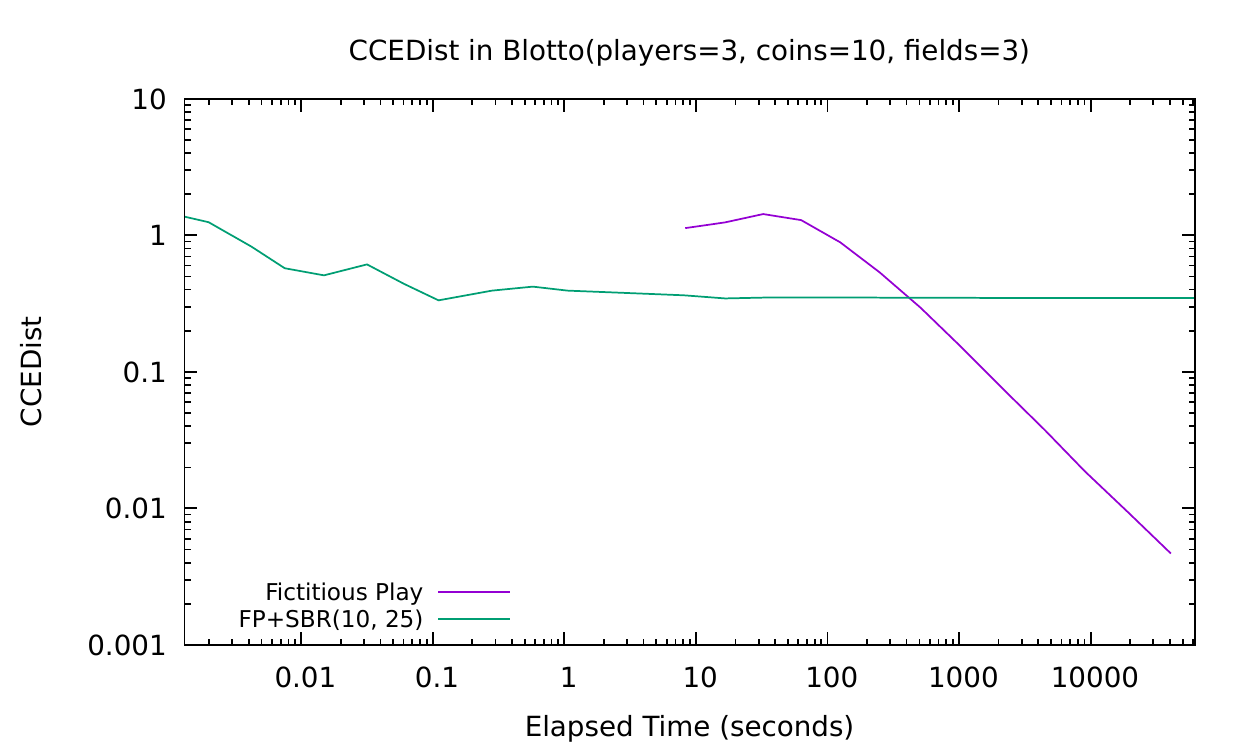} &
    \includegraphics[scale=0.35]{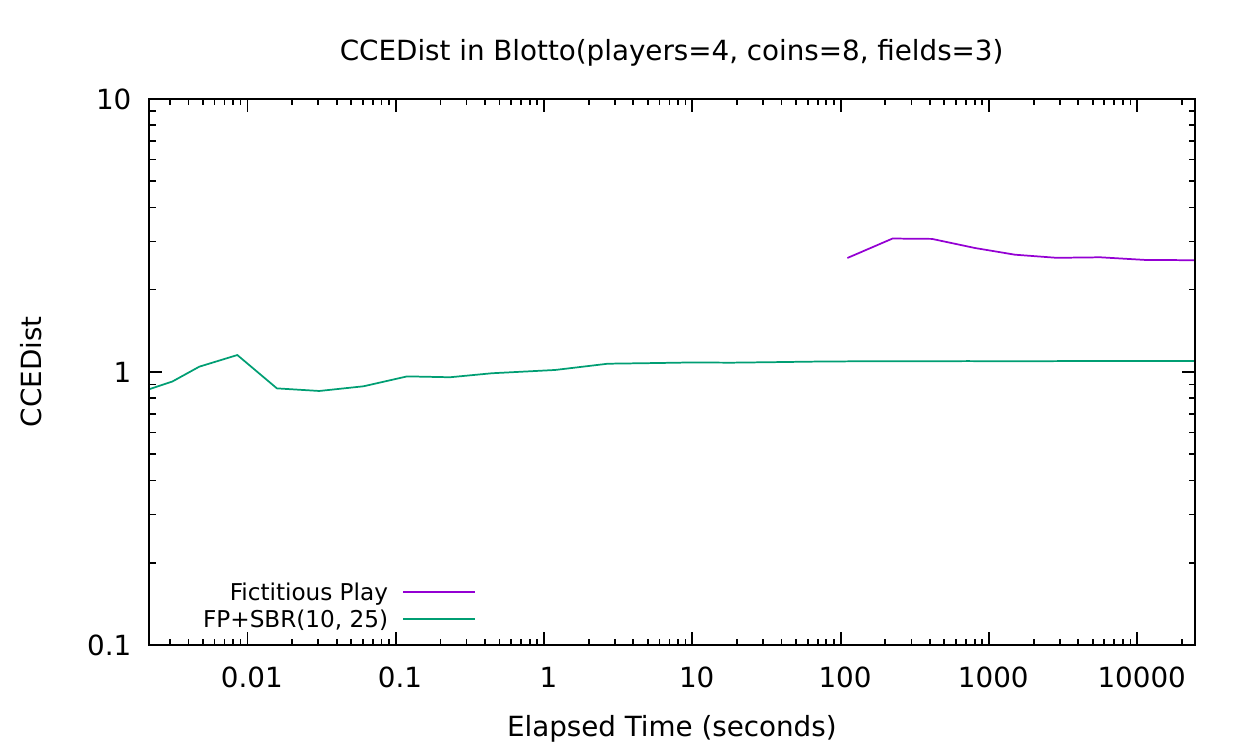} &
    \includegraphics[scale=0.35]{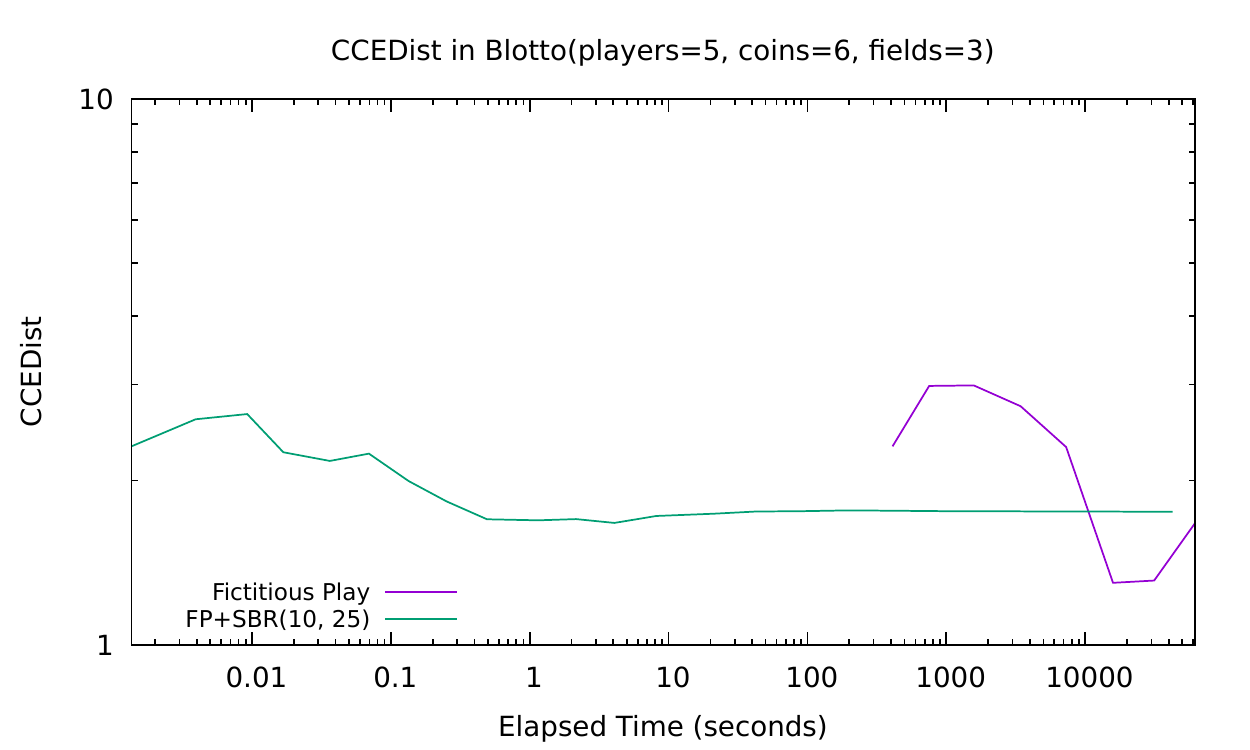} \\
    (a) & (b) & (c) \\
    \end{tabular}
    \caption{Convergence of Fictitious Play versus FP+SBR(10,25) by elapsed time in (a) Blotto(3, 10, 3)
    (b) Blotto(4, 6, 3), and (c) Blotto(5, 6, 3).}
    \label{fig:blotto-mp-scaling}
\end{figure}

The trade-offs are similar in $(n > 2)$-player games. Figure~\ref{fig:blotto-mp-scaling} shows three games of increasing size with reduced action sizes to ensure the game was not too large so joint policies could still fit into memory. There is a similar trade-off in the case of 3-player, where it is clear that FP catches up. In Blotto(4,8,3), even $>25000$ seconds was not enough for the convergence of FP to catch-up, and took $>10000$ seconds for Blotto(5,6,3). In each case, FP+SBR took at most 1 second to reach the CCEDist value at the catch-up point.

\subsubsection{Effects of choices of \texorpdfstring{$B$}{B} and \texorpdfstring{$C$}{C}}

\begin{figure}[!ht]
    \centering
    \begin{tabular}{ccc}
    \includegraphics[scale=0.35]{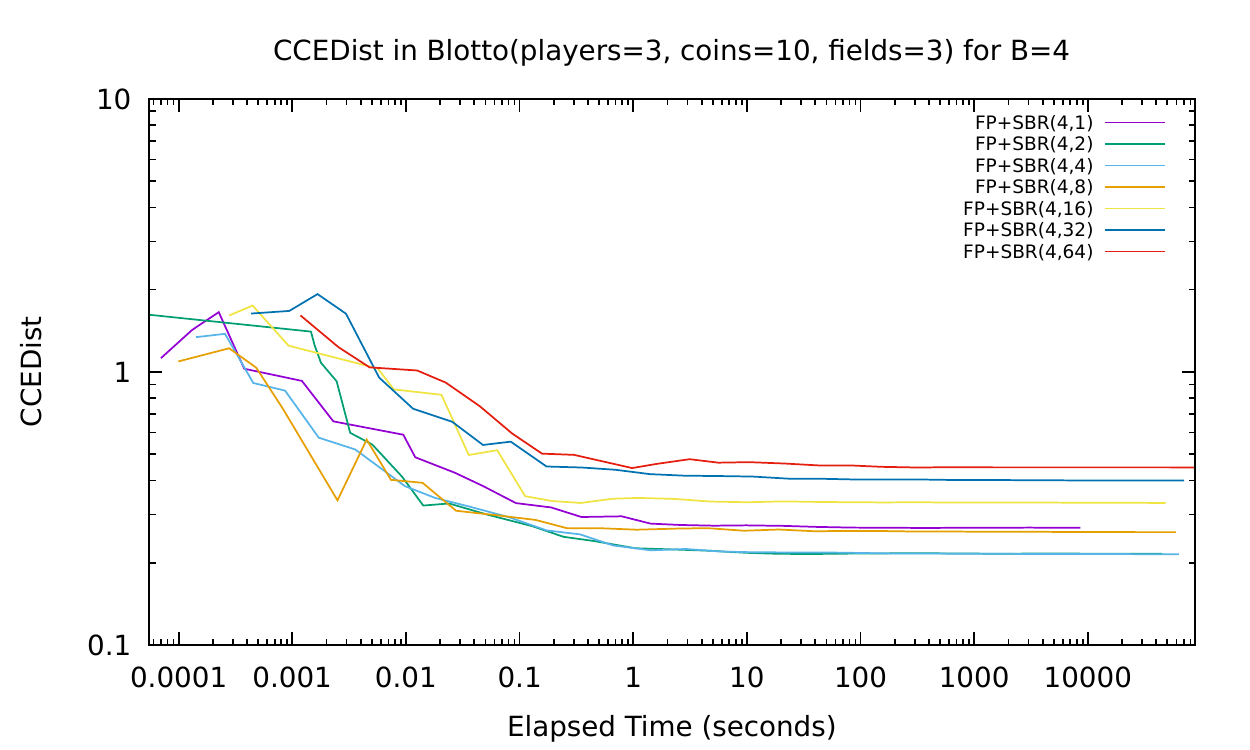} &
    \includegraphics[scale=0.35]{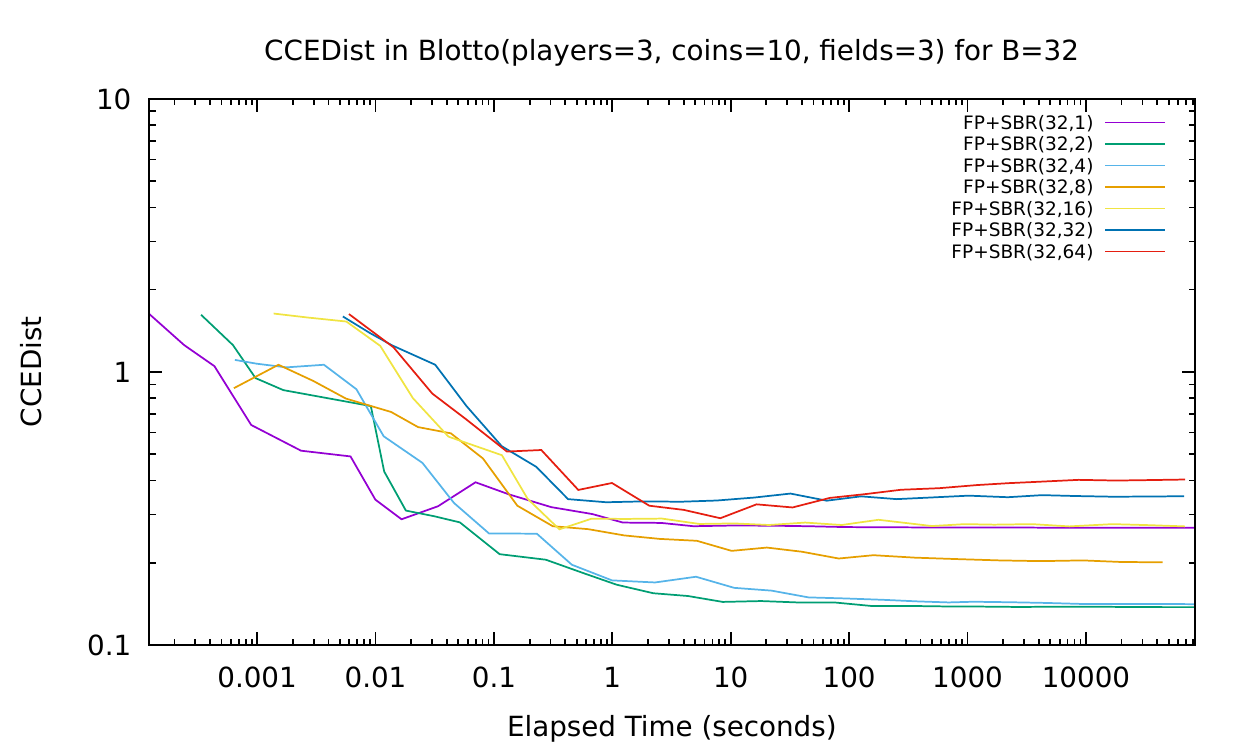} &
    \includegraphics[scale=0.35]{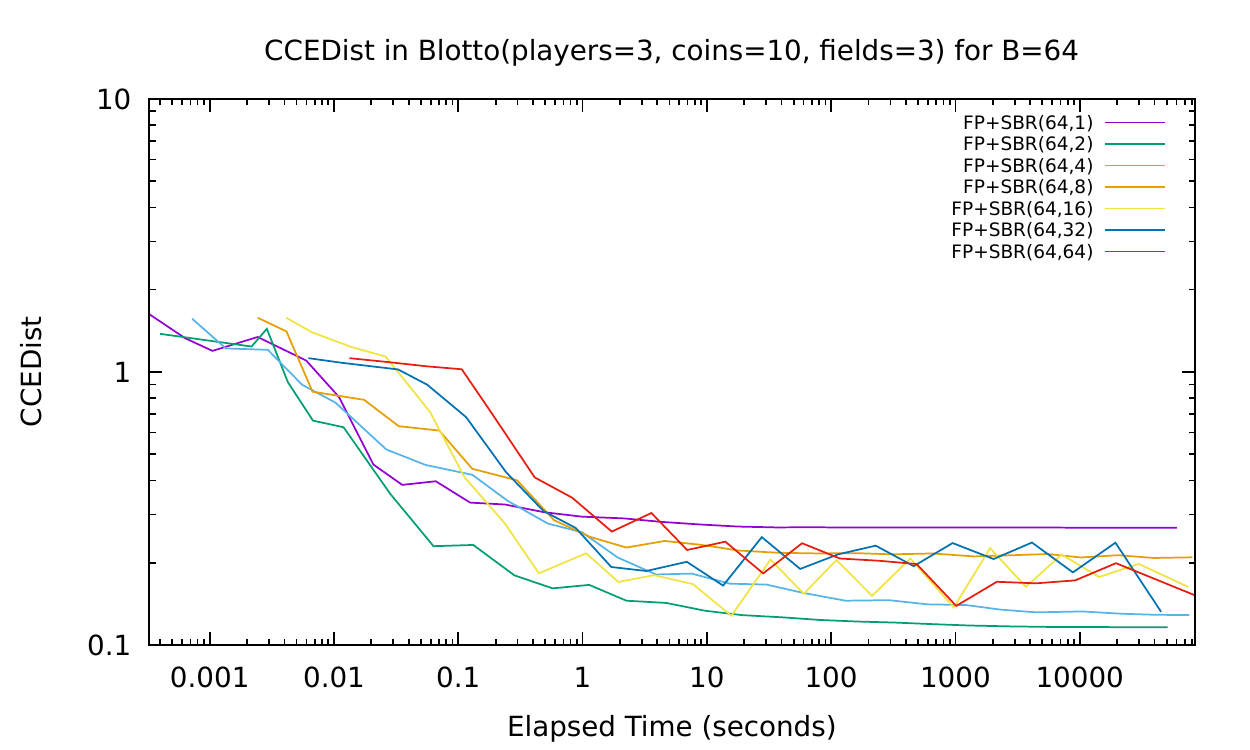} \\
    \includegraphics[scale=0.35]{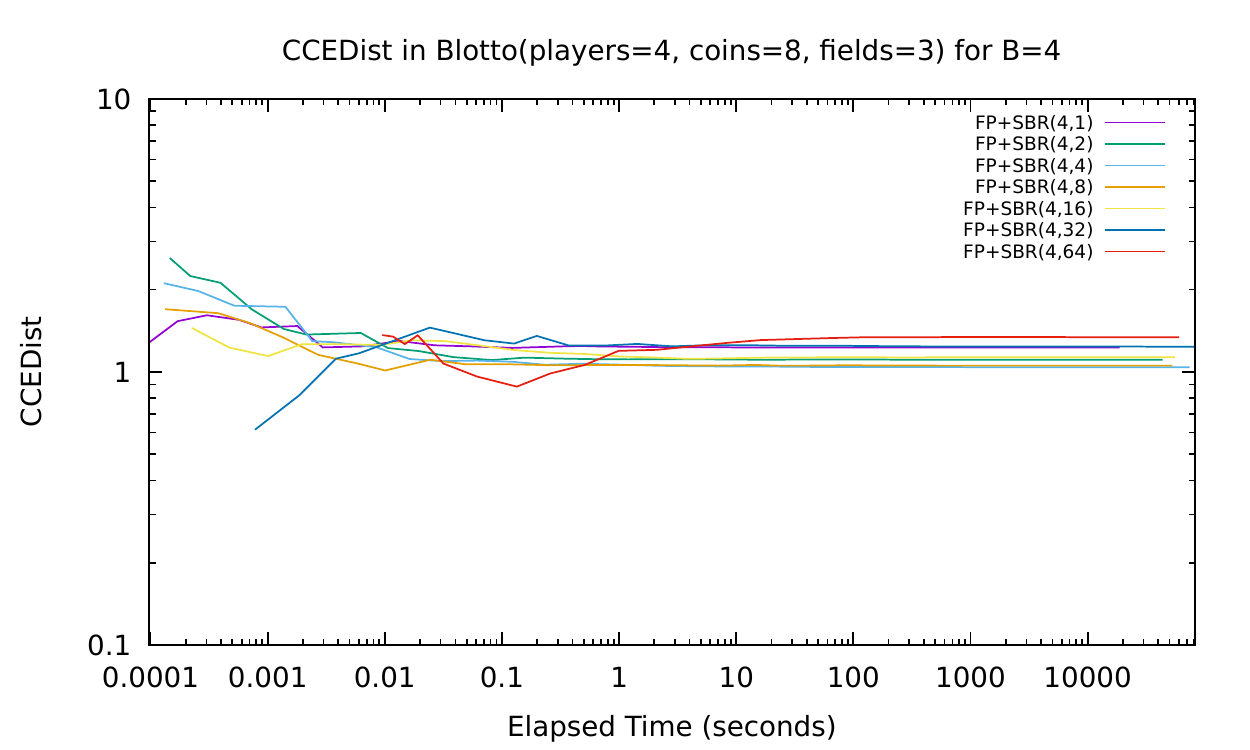} &
    \includegraphics[scale=0.35]{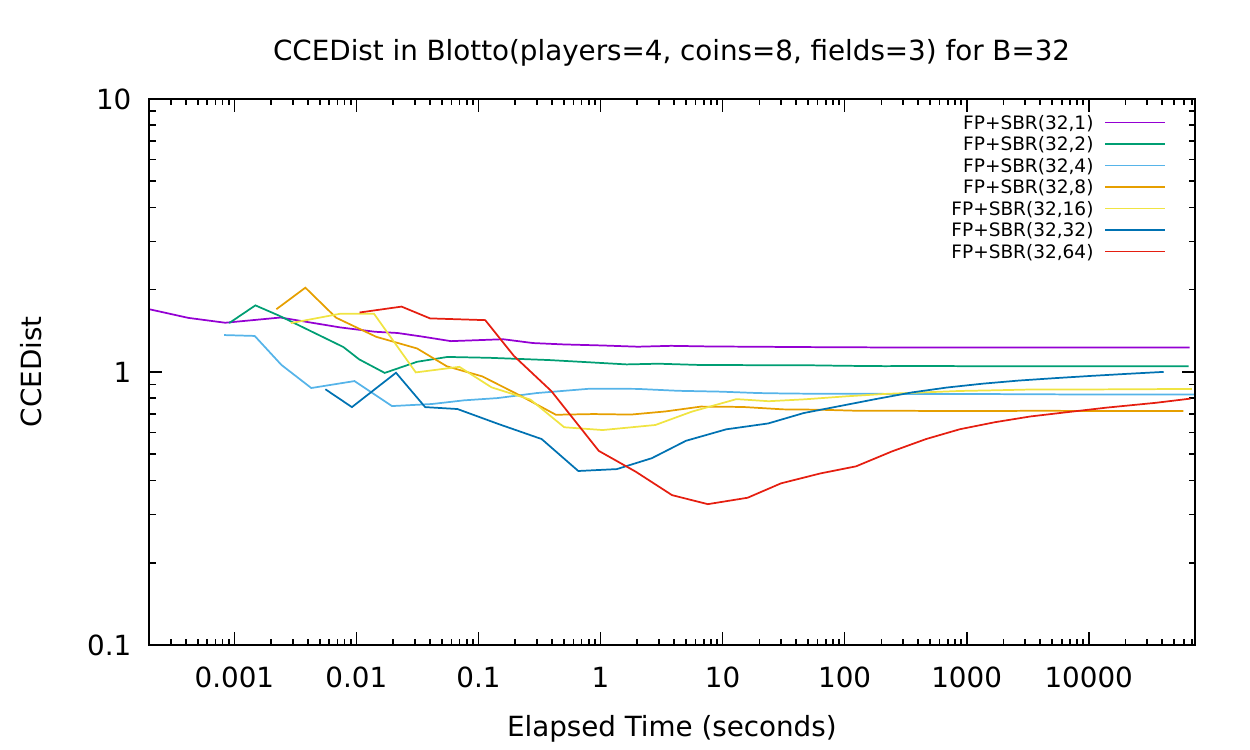} &
    \includegraphics[scale=0.35]{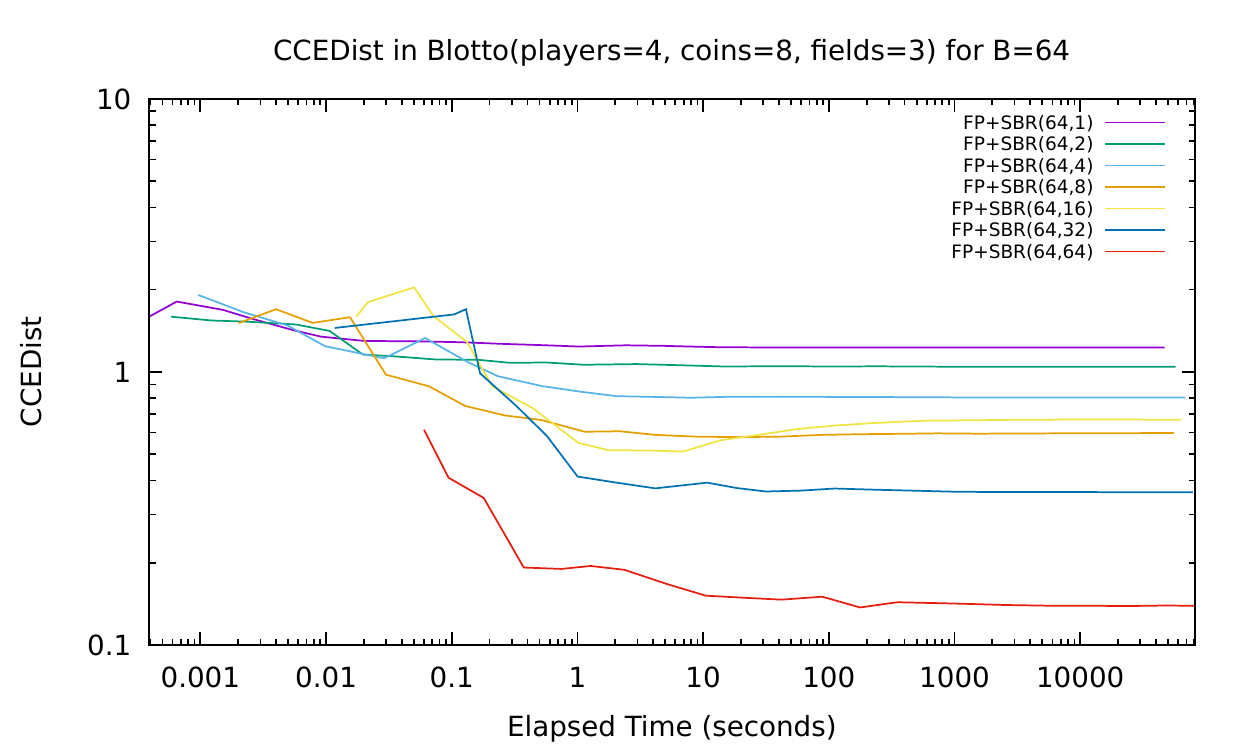} \\
    \includegraphics[scale=0.35]{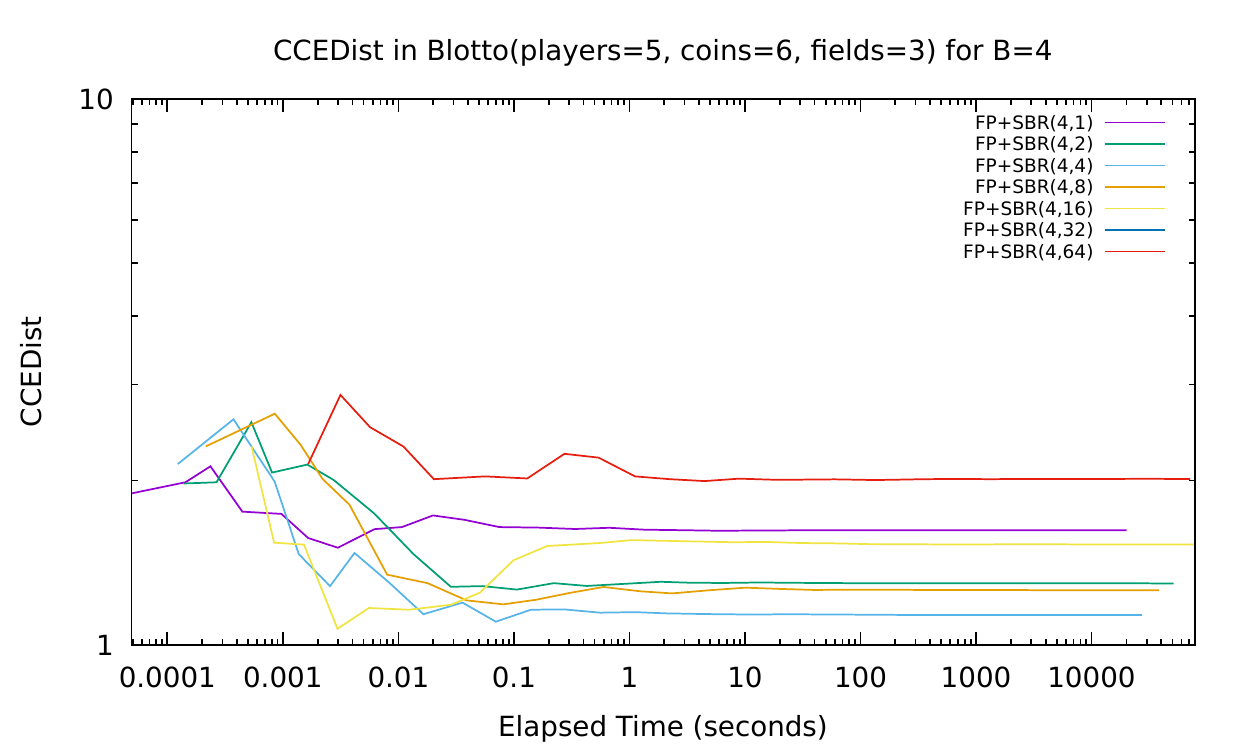} &
    \includegraphics[scale=0.35]{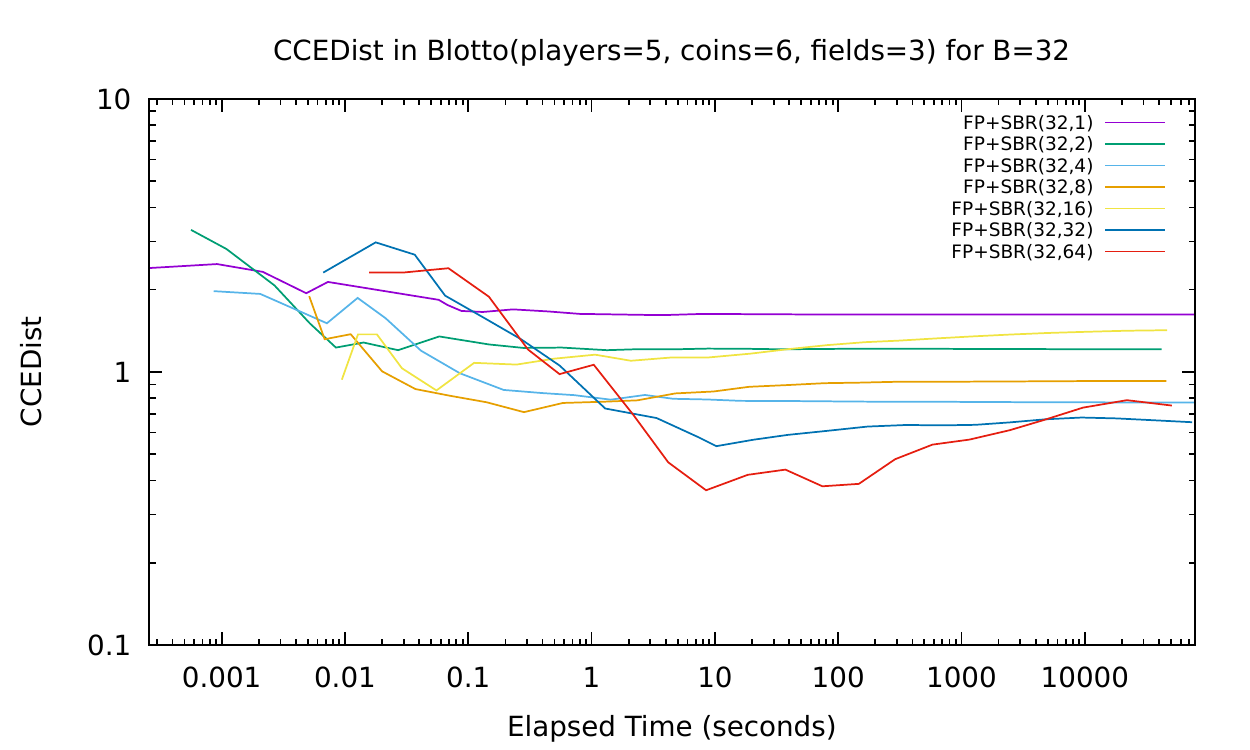} &
    \includegraphics[scale=0.35]{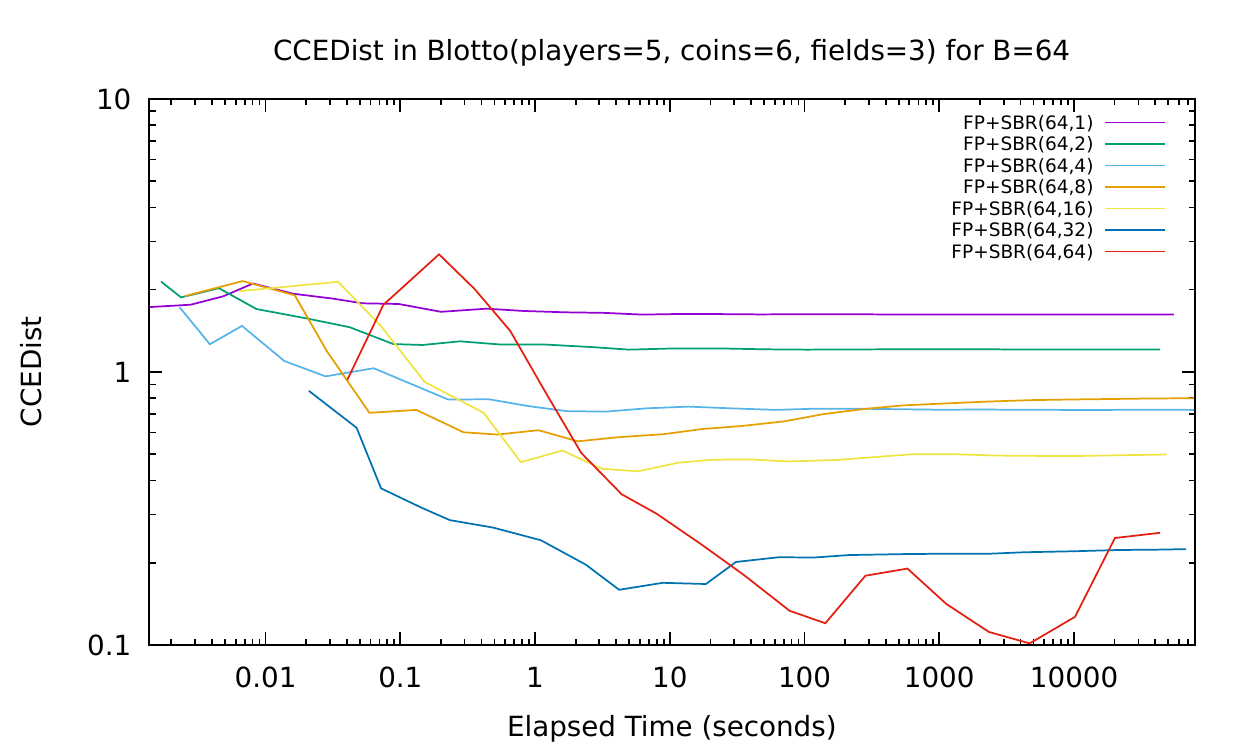} \\
    \end{tabular}
    \caption{Convergence rates of FP+SBR$(B,C)$ for various settings of $B$ and $C$. The first row uses the game of Blotto(3, 10, 3),
    second row Blotto(4, 8, 3), and third row Blotto(5, 6, 3).
    The columns represent $B = 4$, $B = 32$, and $B = 64$, respectively from left to right.}
    \label{fig:fp-sbr-b-c}
\end{figure}

Figure~\ref{fig:fp-sbr-b-c} shows the effect of varying $B$ and $C$ in FP+SBR$(B,C)$. 
Low values of $B$ clearly lead to early plateaus further from equilibrium (graphs for $B < 4$ look similar). At low number of base profiles, it seems that there is a region of low number of candidate samples (2-4) that works best for which plateau is reached, presumably because the estimated maximum over a crude estimate of the expected value has smaller error. As $B$ increases, the distance to equilibria becomes noticeably smaller and sampling more candidates works significantly better than at low $B$.

In the two largest games, FP+SBR(64, 64) was able to reach a CCEDist $\le 0.3$ while fictitious play
was still more than three times further from equilibrium after six hours of running time.

\subsection{BRPI Convergence and Approximation Quality}
\label{app:blotto-sppi}

FP+SBR is an idealized version of the algorithm that explicit performs policy averaging
identical to the outer loop of fictitious play: only the best response step is replaced by a stochastic operator.

We now analyze an algorithm that is closer to emulating BRPI as described in the main paper. Due to stochasticity the policy, the policy trained by BRPI at iteration $t$ for player $i$ can be described as the empirical (joint) distribution:
\[
\pi^t = \frac{1}{N} \sum_{n = 1}^N \bfone(a), \mbox{ where } a \sim \textsc{SBR}(\pi^t_b, \pi^t_c, B, C),
\]
where $\bfone(a)$ is the deterministic joint policy that chooses joint action $a$ with probability 1, 
SBR is the stochastic argmax operator defined in Algorithm~\ref{alg:sbr}, and $\{ \pi^t_b, \pi^t_c \}$ 
are the base profile and candidate sampling policies which are generally functions of
$(\pi^0, \pi^t, \cdots, \pi^{t-1})$. 

The average of the operator over $N$ samples models the best possible fit to dataset of $N$ samples

\subsubsection{Effects of choices of \texorpdfstring{$B$}{B} and \texorpdfstring{$C$}{C}}

To start, in order to compare to the idealized form, we show similar graphs using settings which most closely match FP+SBR: 
$\pi^t_b$ uniformly $t \sim \textsc{Unif}(\{0, \cdots, t-1 \}$ and then samples a base profile from $\pi^t$, and $\pi_c$ samples 
from the initial policy where all players play each action uniformly at random.

\begin{figure}[!ht]
    \centering
    \begin{tabular}{ccc}
    \includegraphics[scale=0.35]{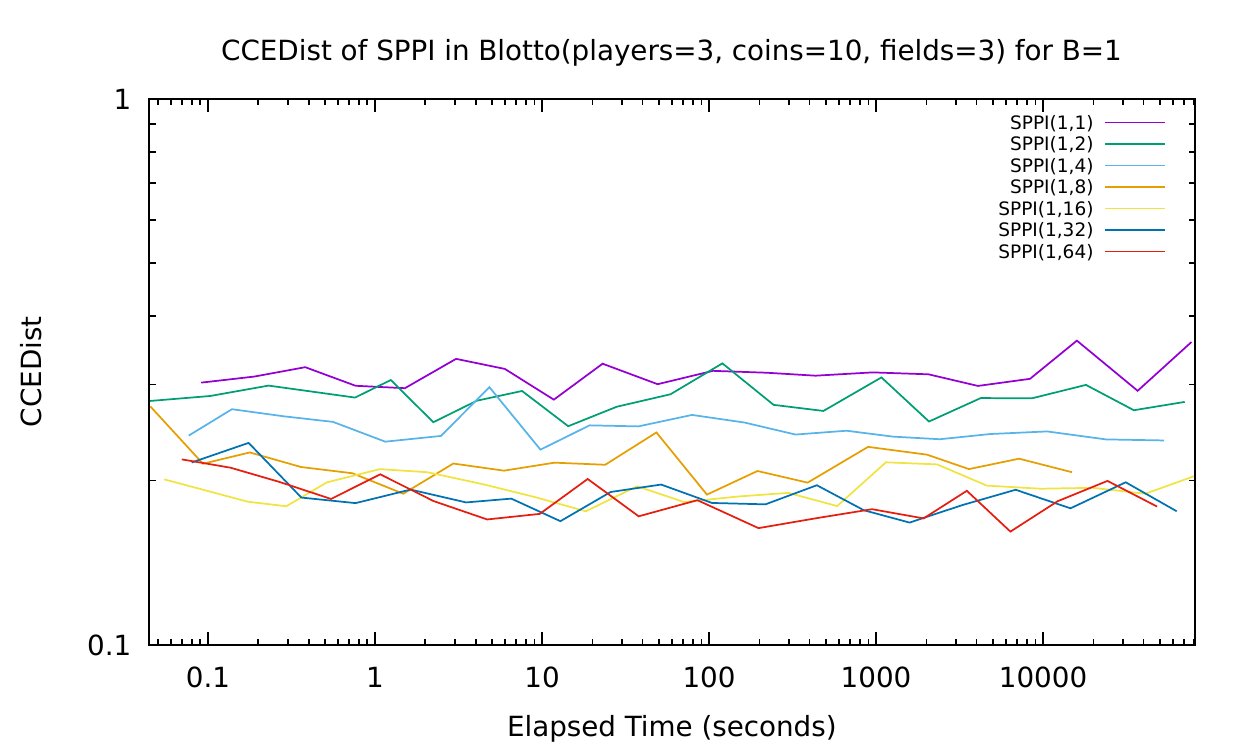} &
    \includegraphics[scale=0.35]{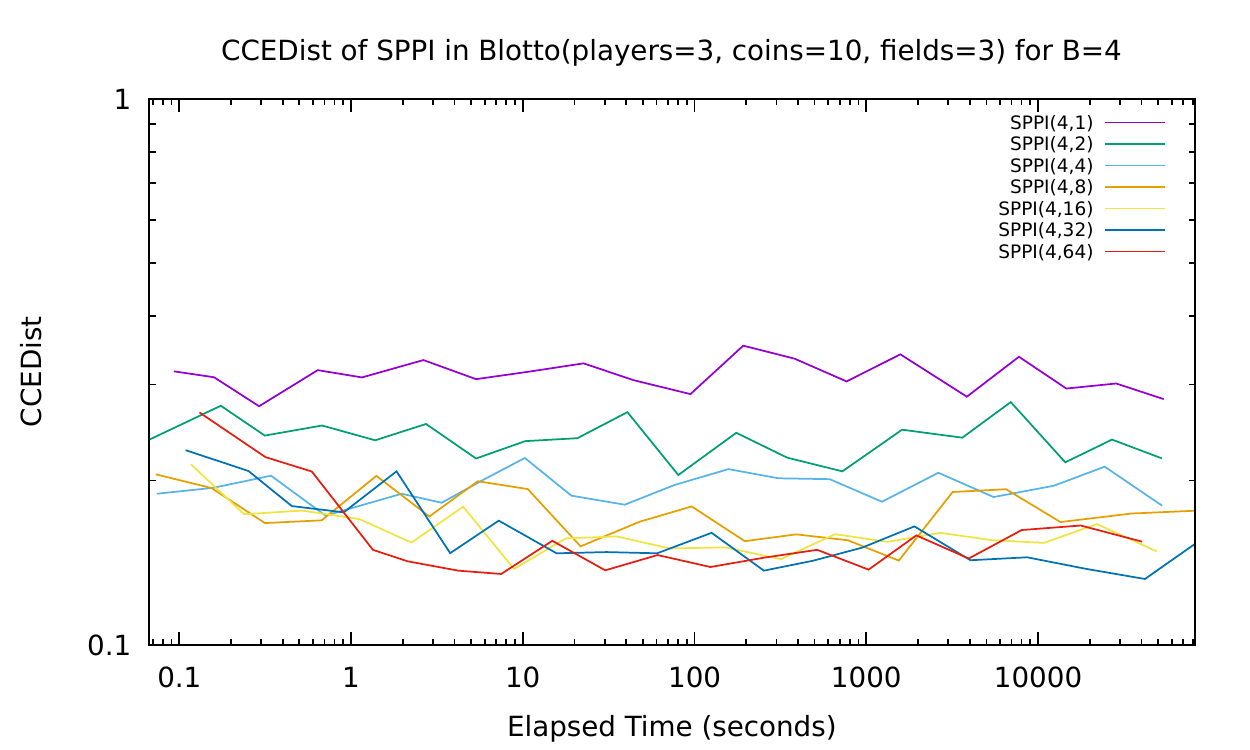} &
    \includegraphics[scale=0.35]{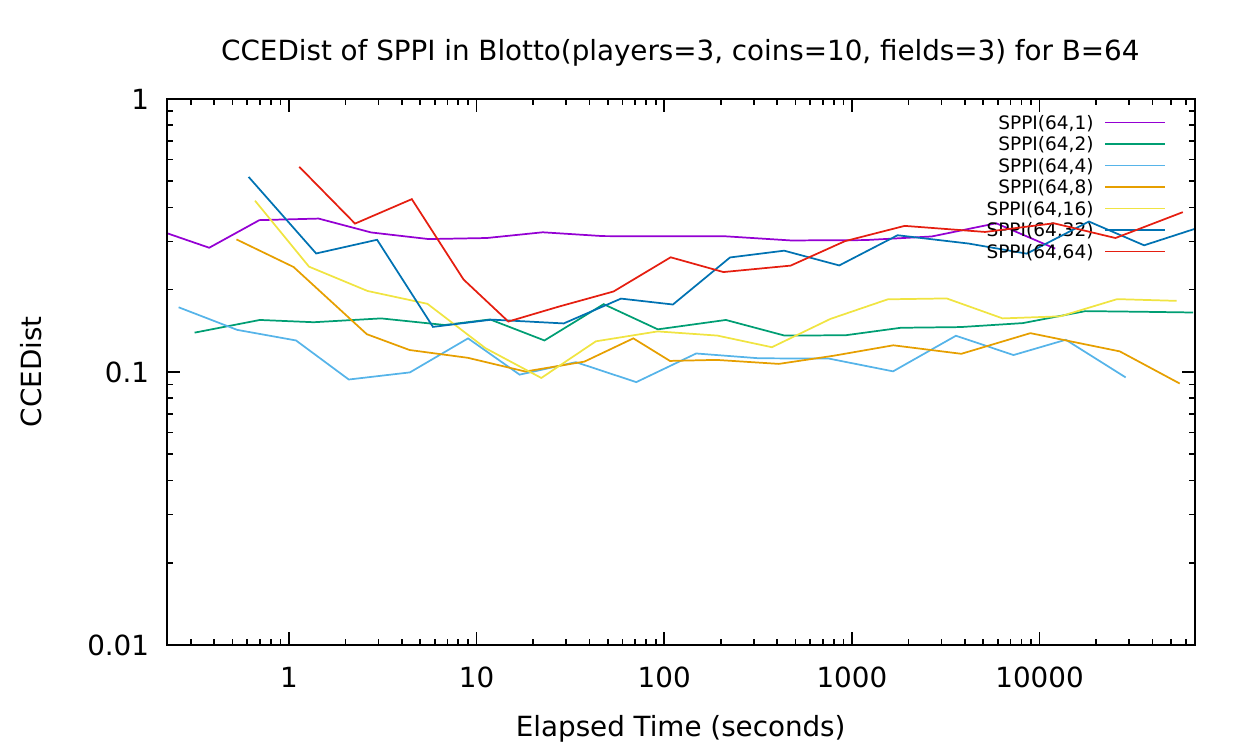} \\
    \includegraphics[scale=0.35]{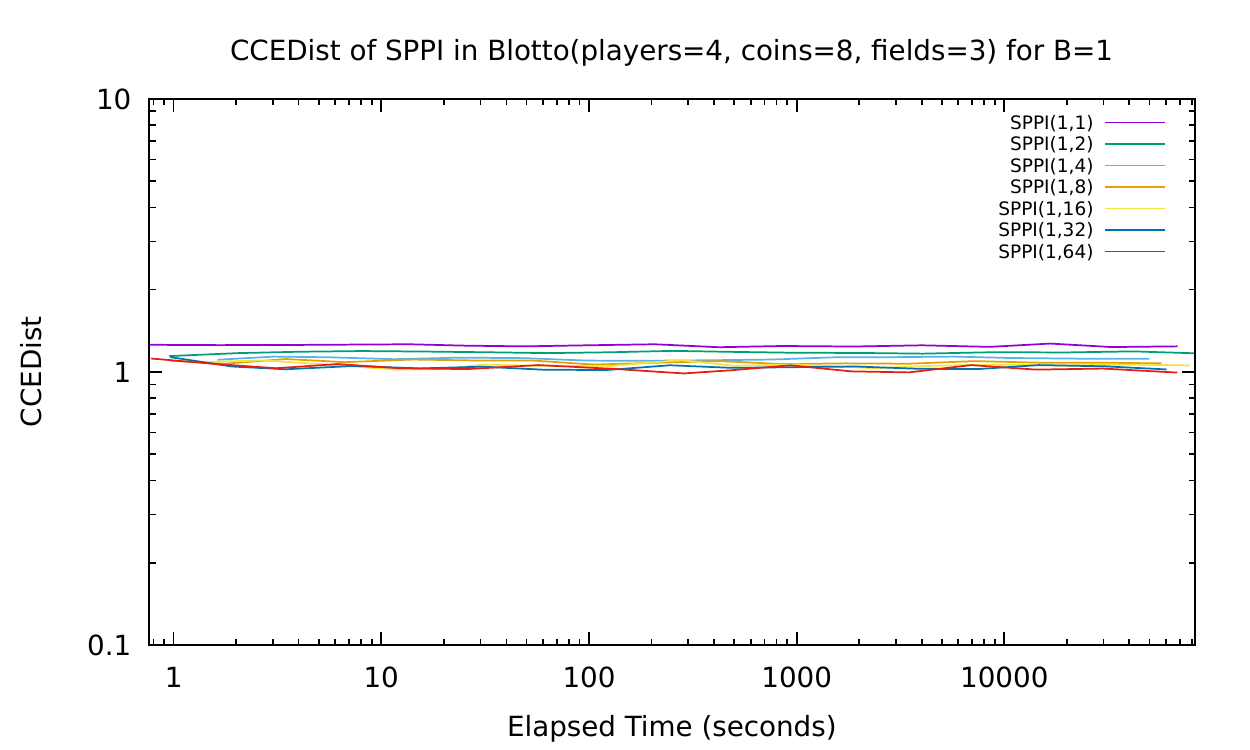} &
    \includegraphics[scale=0.35]{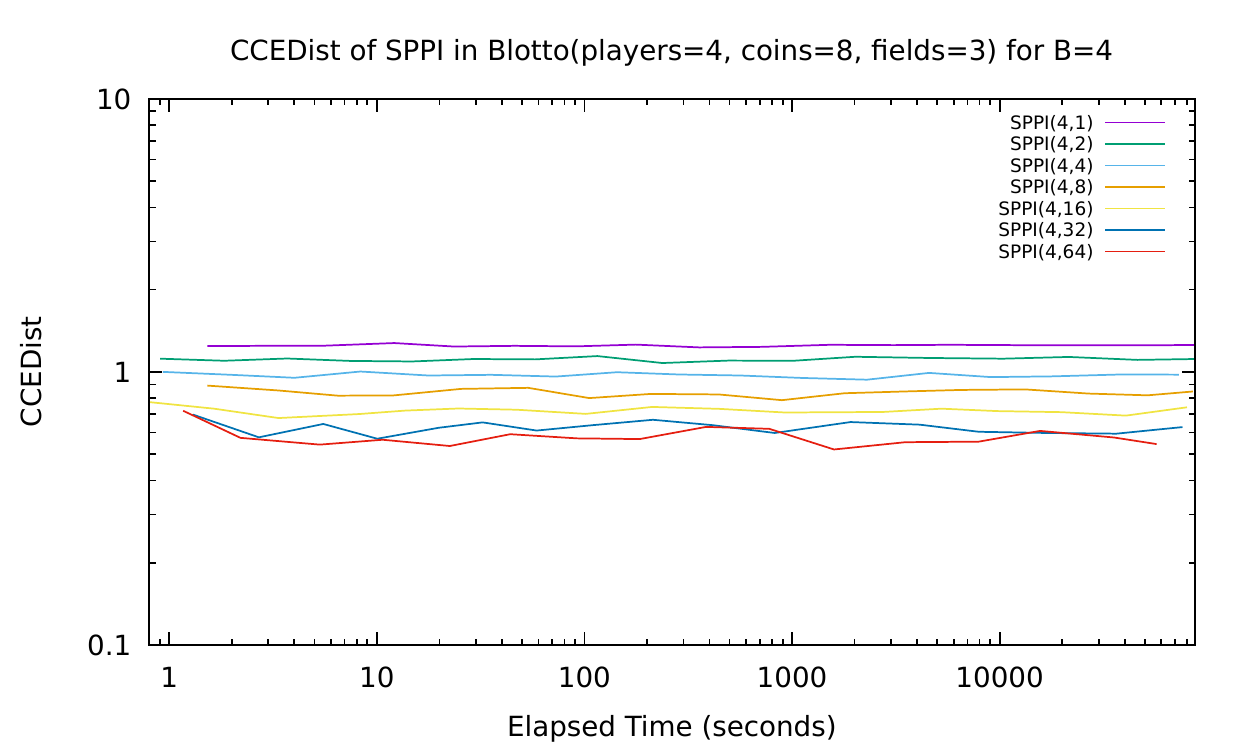} &
    \includegraphics[scale=0.35]{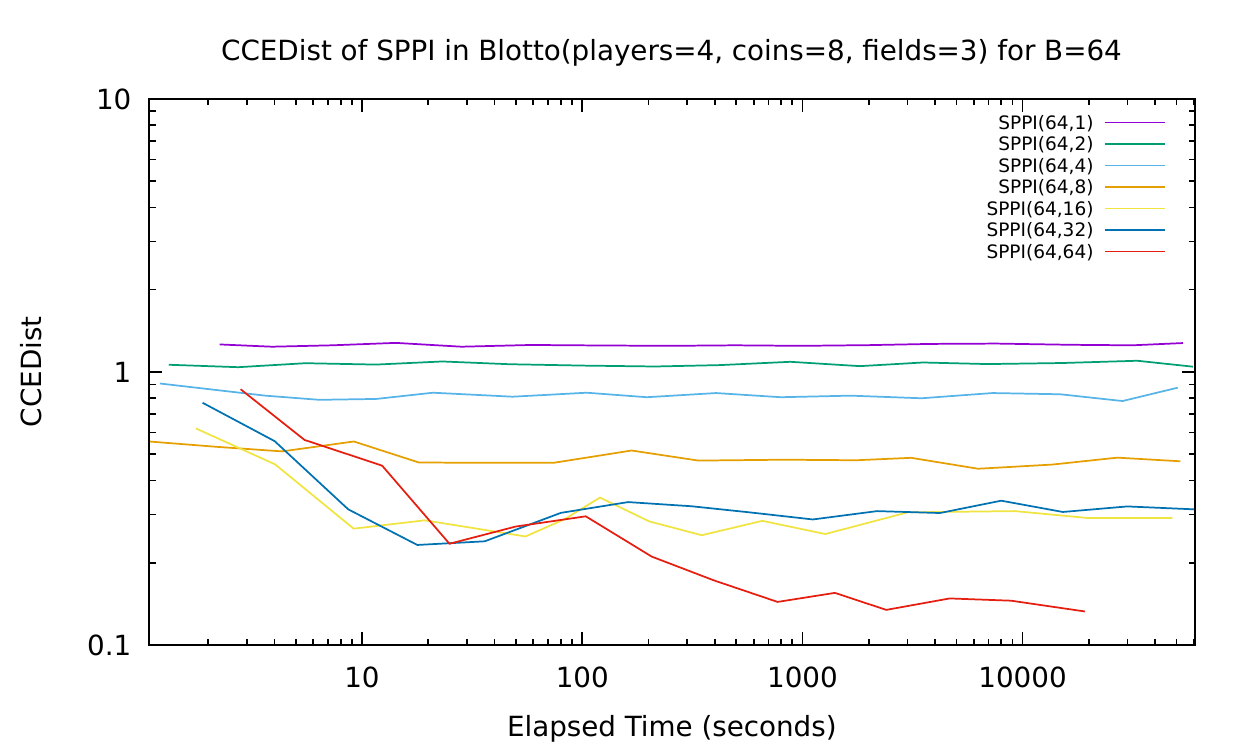} \\
    \includegraphics[scale=0.35]{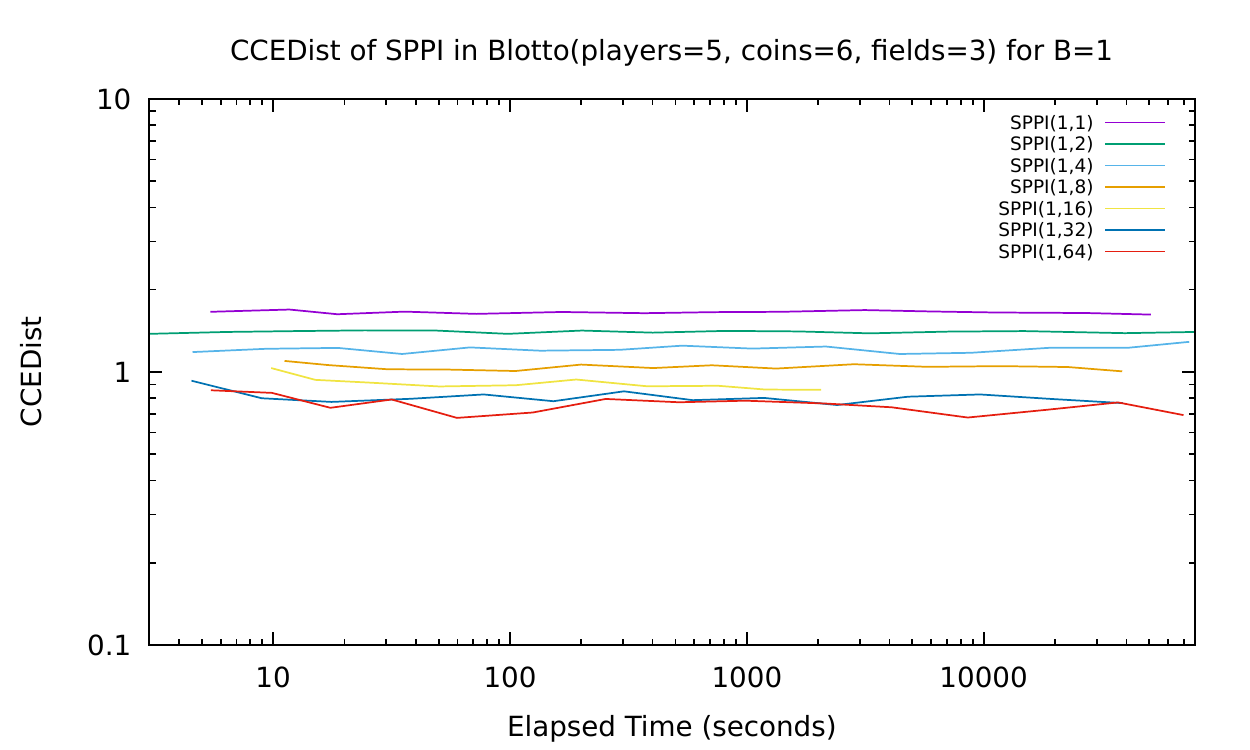} &
    \includegraphics[scale=0.35]{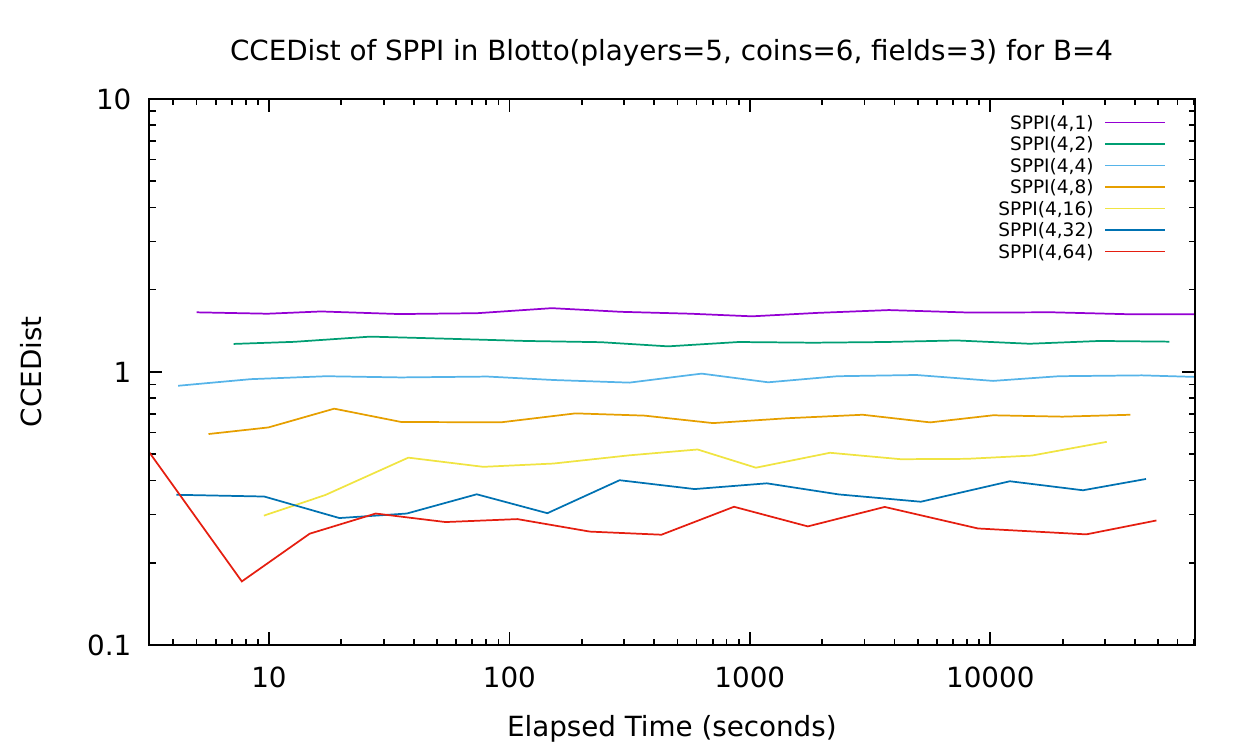} &
    \includegraphics[scale=0.35]{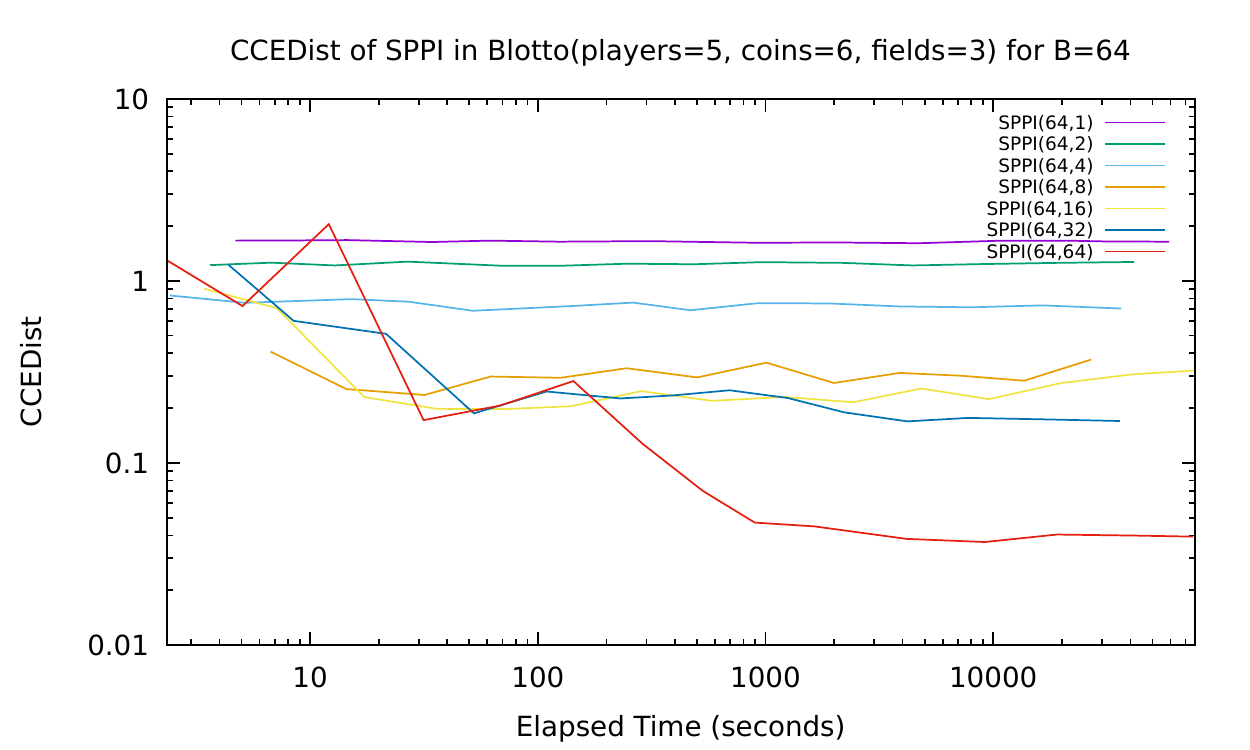} \\
    \end{tabular}
    \caption{Convergence rates of BRPI$(\pi_b, \pi_c, B, C)$ for various settings of $B$ and $C$ using a
    uniform iteration for $\pi_b$ and uniform random action for $\pi_c$, and $N=1000$.
    The first row uses the game of Blotto(3, 10, 3),
    second row Blotto(4, 8, 3), and third row Blotto(5, 6, 3).
    The columns represent $B = 1$, $B = 4$, and $B = 64$, respectively from left to right.}
    \label{fig:sppi-uniform-latest-b-c}
\end{figure}

Figure~\ref{fig:sppi-uniform-latest-b-c} show effects of varying $B$ and $C$ over the games.
Note that convergence to the plateau is much faster than FP+SBR, presumably because of the
$N$ samples per iteration rather than folding one sample into the average.
Like with FP+SBR, the value of $B$ has a strong effect on the plateau that is reached,
and this value is separated by the choice of $C$. 
Unlike FP+SBR the value of $C$ has a different effect: higher $C$ is generally better at lower values of $B$.
This could be due to the fact that, in BRPI, the only way the algorithm can represent a stochastic policy
is via the $N$ samples, whereas FP+SBR computes the average policy exactly;
the error in the max over a crude expectation may be less critical than having a granularity of a
fixed limit of $N$ samples.

This is an encouraging result as it shows that a mixed policy can be trained through multiple samples
from a stochastic best response operator {\it on each iteration}, rather than computing the
average policy explicity. However, this comes at the extra cost of remembering all the past policies;
in large games, this can be done by saving checkpoints of the network periodically and querying them
as necessary.

\subsubsection{Varying the Candidate Sampling Policy}

Most of the runs look similar to the previous subsection (convergence plateau is mostly reached within 60 -- 100 seconds), so
to demonstrate the effect of the various candidate sampling policies, we instead show the CCEDist reached after
running for a long time ($>50000$ seconds). This roughly captures the asymptotic value of each method, rather than
its convergence rate.

\begin{figure}[h!]
\begin{tikzpicture}
  \centering
  \begin{axis}[
        ybar, axis on top,
        title={CCEDist reached by BRPI with $\pi_b$ sampling from a uniform past policy and $B=2$, $C=16$},
        height=6cm, width=12.5cm,
        bar width=0.3cm,
        ymajorgrids, tick align=inside,
        major grid style={draw=white},
        enlarge y limits={value=.1,upper},
        ymin=0, ymax=0.9,
        axis x line*=bottom,
        axis y line*=right,
        y axis line style={opacity=0},
        tickwidth=0pt,
        enlarge x limits=true,
        legend style={
            at={(0.5,-0.2)},
            anchor=north,
            legend columns=-1,
            /tikz/every even column/.append style={column sep=0.3cm}
        },
        ylabel={CCEDist},
        symbolic x coords={
           Blotto3, Blotto4, Blotto5
        },
       xtick=data,
    ]
    
    \addplot [draw=none, fill=blue!30] coordinates {
        (Blotto3, 0.15)
        (Blotto4, 0.89)
        (Blotto5, 0.58)
     };
   \addplot [draw=none,fill=red!30] coordinates {
        (Blotto3, 0.18)
        (Blotto4, 0.27)
        (Blotto5, 0.19)
     };
   \addplot [draw=none, fill=green!30] coordinates {
        (Blotto3, 0.32)
        (Blotto4, 0.4)
        (Blotto5, 0.58)
     };
    \addplot [draw=none, fill=cyan!30] coordinates {
        (Blotto3, 0.15)
        (Blotto4, 0.51)
        (Blotto5, 0.24)
     };
   \addplot [draw=none,fill=magenta!30] coordinates {
        (Blotto3, 0.18)
        (Blotto4, 0.57)
        (Blotto5, 0.19)
    };

    \legend{Initial, Uniform, Latest, Initial + Uniform, Initial + Latest }
  \end{axis}
\end{tikzpicture}
\caption{Long-term CCEDist reached by BRPI$(2,16)$ with $\pi_b$ choosing a uniform past policy in Blotto(3, 10, 3) (left), Blotto(4, 8, 3) (middle), and Blotto(5, 6, 3) (right).}
\label{fig:blotto-candidate-sampling-uniform-pib-2-16}
\end{figure}
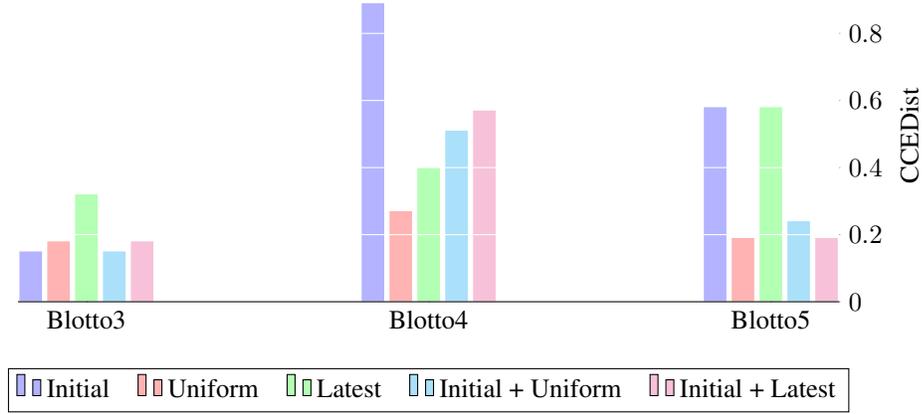

Figure~\ref{fig:blotto-candidate-sampling-uniform-pib-2-16} shows the long-term CCEDist reached by BRPI at $B=2$ and $C=16$ for various
choices of the candidate sampling schemes. There is a clearly best choice of using uniformly sampled past policy to 
choose candidates followed by the mixtures: initial + uniform, and initial + latest.

\subsubsection{Iterated Sampled Best Response}
\label{app:blotto-ibr}

We now consider the case where the base sampling policy is the last policy: $\pi^t_b = \pi^{t-1}$. 
Figure~\ref{fig:blotto-candidate-sampling-latest-pib-2-16} shows the long-term CCEDist reached by BRPI at $B=2$ and $C=16$ for various choices of the candidate sampling schemes.

\begin{figure}[h!]
\begin{tikzpicture}
  \centering
  \begin{axis}[
        ybar, axis on top,
        title={CCEDist reached by BRPI with $\pi^t_b = \pi^{t-1}$ and $B=2$, $C=16$},
        height=6cm, width=12.5cm,
        bar width=0.3cm,
        ymajorgrids, tick align=inside,
        major grid style={draw=white},
        enlarge y limits={value=.1,upper},
        ymin=0, ymax=2.5,
        axis x line*=bottom,
        axis y line*=right,
        y axis line style={opacity=0},
        tickwidth=0pt,
        enlarge x limits=true,
        legend style={
            at={(0.5,-0.2)},
            anchor=north,
            legend columns=-1,
            /tikz/every even column/.append style={column sep=0.3cm}
        },
        ylabel={CCEDist},
        symbolic x coords={
           Blotto3, Blotto4, Blotto5
        },
       xtick=data,
    ]
    
    \addplot [draw=none, fill=blue!30] coordinates {
       (Blotto3, 0.18)
       (Blotto4, 0.91)
       (Blotto5, 0.60)
     };
   \addplot [draw=none,fill=red!30] coordinates {
       (Blotto3, 0.22)
       (Blotto4, 0.32)
       (Blotto5, 2.34)
     };
   \addplot [draw=none, fill=green!30] coordinates {
       (Blotto3, 0.5)
       (Blotto4, 0.37)
       (Blotto5, 1.64)
     };
    \addplot [draw=none, fill=cyan!30] coordinates {
       (Blotto3, 0.17)
       (Blotto4, 0.64)
       (Blotto5, 1.11)
     };
   \addplot [draw=none,fill=magenta!30] coordinates {
       (Blotto3, 0.15)
       (Blotto4, 0.52)
       (Blotto5, 0.19)
    };

    \legend{Initial, Uniform, Latest, Initial + Uniform, Initial + Latest }
  \end{axis}
\end{tikzpicture}
\caption{Long-term CCEDist reached by BRPI$(2,16)$ with with $\pi^t_b = \pi^{t-1}$ in Blotto(3, 10, 3) (left), Blotto(4, 8, 3) (middle), and Blotto(5, 6, 3) (right).}
\label{fig:blotto-candidate-sampling-latest-pib-2-16}
\end{figure}
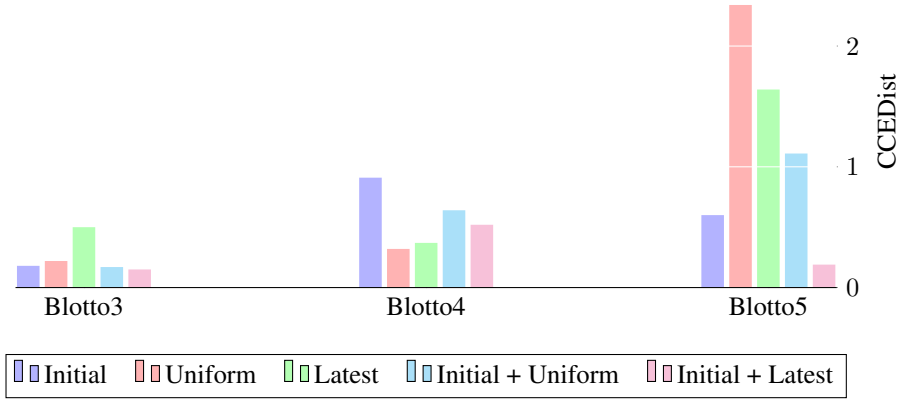

In this case, there is no clear winner in all cases, but initial + latest seems to be a safe choice among these five sampling schemes. Despite the plateau values being generally higher than $\pi_b$ sampling from a uniform past policy across the candidate sampling schemes (note the y-axis scale differs between the two plots),
the values under the initial + latest sampling policy are matched in two of the three games.

Though using the last policy for $\pi_b$ yields generally higher final plateaus in CCEDist, the fact that these are comparable to those achieved by FP+SBR stands in stark contrast to when we use an exact best response operator in \ref{app:blott-warmup}; in that case, Iterated Best Response makes no progress. This shows that using a Sampled Best Response in the place of an exact one can dramatically improve the behaviour of Iterated Best Response.

\section{Additional Results}
\label{app:results}
\subsection{Behavioural analysis}

We present some descriptive statistics of the behaviour exhibited by the different networks in self-play games, and by human players in the datasets. We examine the move-phase actions of agents, investigating the tendency of agents to support another power's unit to move or hold, which we refer to as ``cross power support''. We also examine the success rates, which are defined by whether the other power made a corresponding move for that unit (respectively, either the target move or a non-moving order). Figure~\ref{fig:cross_supports} compares the proportion of actions that are cross power support across different agents, and their success (for both holding and moving). The results indicate the BRPI agents have a substantially reduced rate of cross power hold support, and the BRPI agents have a substantially increased rate of cross-power move support. The A2C agent attempts both types of support less often but succeeds a higher proportion of the time.

This analysis is related but different to the cross-support analysis in \cite{paquette2019no}, which considers cross-power supports as a proportion of supports, and rather than looking at ``success'' as we've defined it for support orders, they measure ``effectiveness'', defined in terms of whether the support made a difference for the success of the move order or defence being supported.

We also examine the propensity of agents to {\it intrude on} other agents, defined as one of the following:
\begin{itemize}
    \item a move order (including via convoy) into a territory on which another agent has a unit
    \item a move order (including via convoy) into a supply center owned by another agent (or remaining/holding in another agent's supply center during fall turns)
    \item successfully supporting/convoying a move falling in the categories above
\end{itemize}

We define two powers to be \emph{at conflict} in a moves phase\footnote{We exclude other phases from this analysis.} if either one intrudes upon the other, and to be \emph{distant} if neither one has the option of doing that. Then we define the \emph{peace proportion} of a network to be the proportion, among instances in which powers are non-distant, that they're not at conflict; and the \emph{peace correlation} to be the correlation, among those same instances, between conflict in the current moves phase and conflict in the next moves phase. In Figure~\ref{fig:peace} we compare these statistics across our different networks. These results indicate that BRPI reduces the peace proportion while maintaining the peace correlation, while A2C brings down both the peace proportion and the peace correlation significantly.

\begin{figure}[!tb]
    \centering
    \subfloat[][Comparison of the cross-power support behaviours of different networks.]{
    \centering
    \includegraphics[width=.45\textwidth]{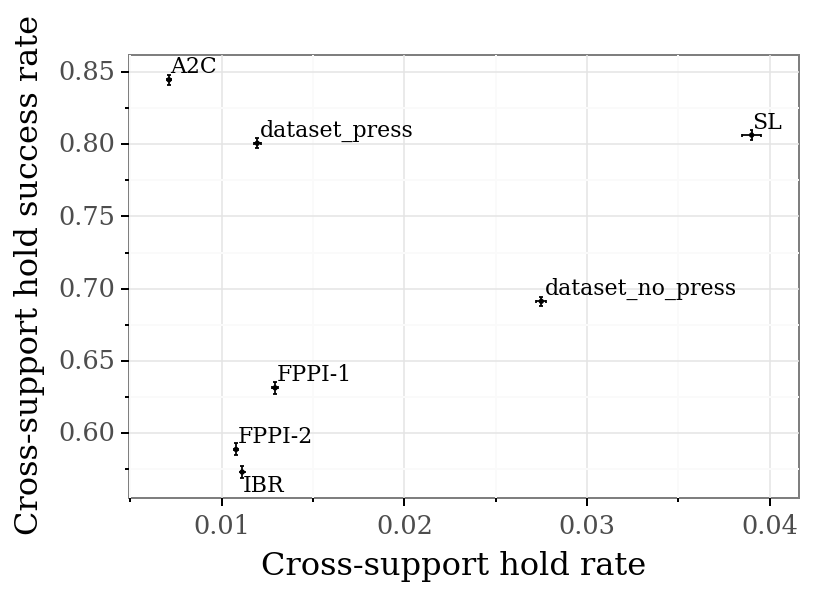}\includegraphics[width=.45\textwidth]{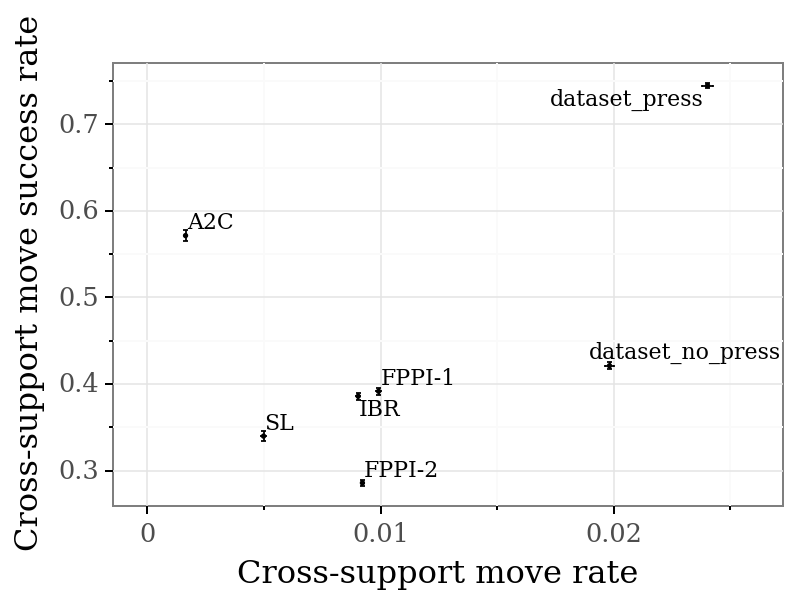}
    \label{fig:cross_supports}
    }
    
    \subfloat[][Comparison of peace correlations between different networks.]{
    \centering
    \includegraphics[width=.45\textwidth]{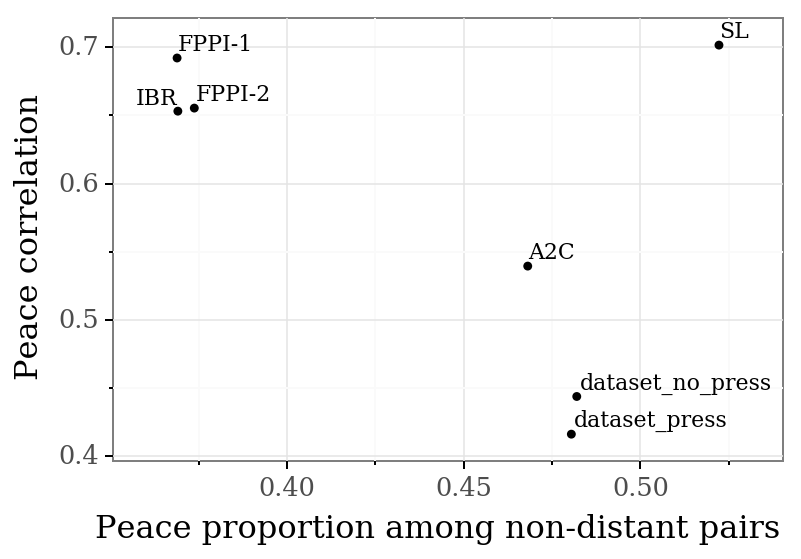}
    \label{fig:peace}
    }
    \caption{Descriptive behavioural statistics of the different networks, as well as human play in the datasets.}
\end{figure}

We used sampling temperature $0.1$ for these agents, and considered only the first 10 years of each game in order to make the comparisons more like-for-like, particularly to human games. We used 1000 self-play games for each network; results for IBR, FPPI-1, and FPPI-2 were combined from results for the final networks from 5 training runs. In addition we included the evaluation sets from both our press and no-press datasets, excluding games that end within 5 years and games where no player achieved 7 supply centers.

\subsection{Exploitability}
\label{app:exploitability}
In this section we give more details on our experiments on the exploitability of our networks. We use for two different exploiters for each agent we exploit, both of which are based on the Sampled Best Response operator, using a small number of samples from the policy as the base profiles to respond to.

Firstly, we use a few shot exploiter. Apart from using base profiles from the policy being exploited, but otherwise is independent from it -- the value function for SBR is taken from an independent BRPI run (the same for all networks exploited), and the candidates from the human imitation policy; SBR($\pi^c=\pi^\textrm{SL}$, $\pi^b=\pi$, $v=V^{\textrm{RL}}$). This has the advantage of being the most comparable between different policies; the exploits found are not influenced by the strength of the network's value function, or by the candidates they provide to SBR. This measure should still be used with care; it is possible for an agent to achieve low few-shot exploitability without being strong at the game, for example by all agents playing pre-specified moves, and uniting to defeat any deviator.

The other exploiter shown is the best found for each policy. For policies from the end of BRPI training, this is SBR($\pi^c=\pi^\textrm{SL} + \pi$, $\pi^b=\pi$, $v=V^{\pi}$), which uses a mixture of candidates from the exploitee and supervised learning, and the exploitee's value function. For $\pi^{SL}$, we instead use SBR($\pi^c=\pi^\textrm{SL} + \pi^\textrm{RL}$, $\pi^b=\pi$, $v=V^{\textrm{RL}}$), where $\pi^\textrm{RL}$ and $v=V^{\textrm{RL}}$ are from a BRPI run. This is because the value function learned from human data is not correct for $\pi^{SL}$, and leadds to weak exploits.

Figure \ref{fig:exploitability} shows the winrates achieved by each of these exploiters playing with $6$ copies of a network, for our supervised learning agent and the final agent from one run of each BRPI setting. All these networks are least exploitable at $t=0.5$; this appears to balance the better strategies typically seen at lower temperatures with the mixing needed to be relatively unexploitable. In the few-shot regime, IBR and FPPI-2 produce agents which are less exploitable than the supervised learning agent; for the best exploiters found, the picture is less clear. This may be because we do not have as good a value function for games with $\pi^{SL}$ as we do for the other policies.
 
\begin{figure}[!ht]
    \centering
    \includegraphics[scale=.5]{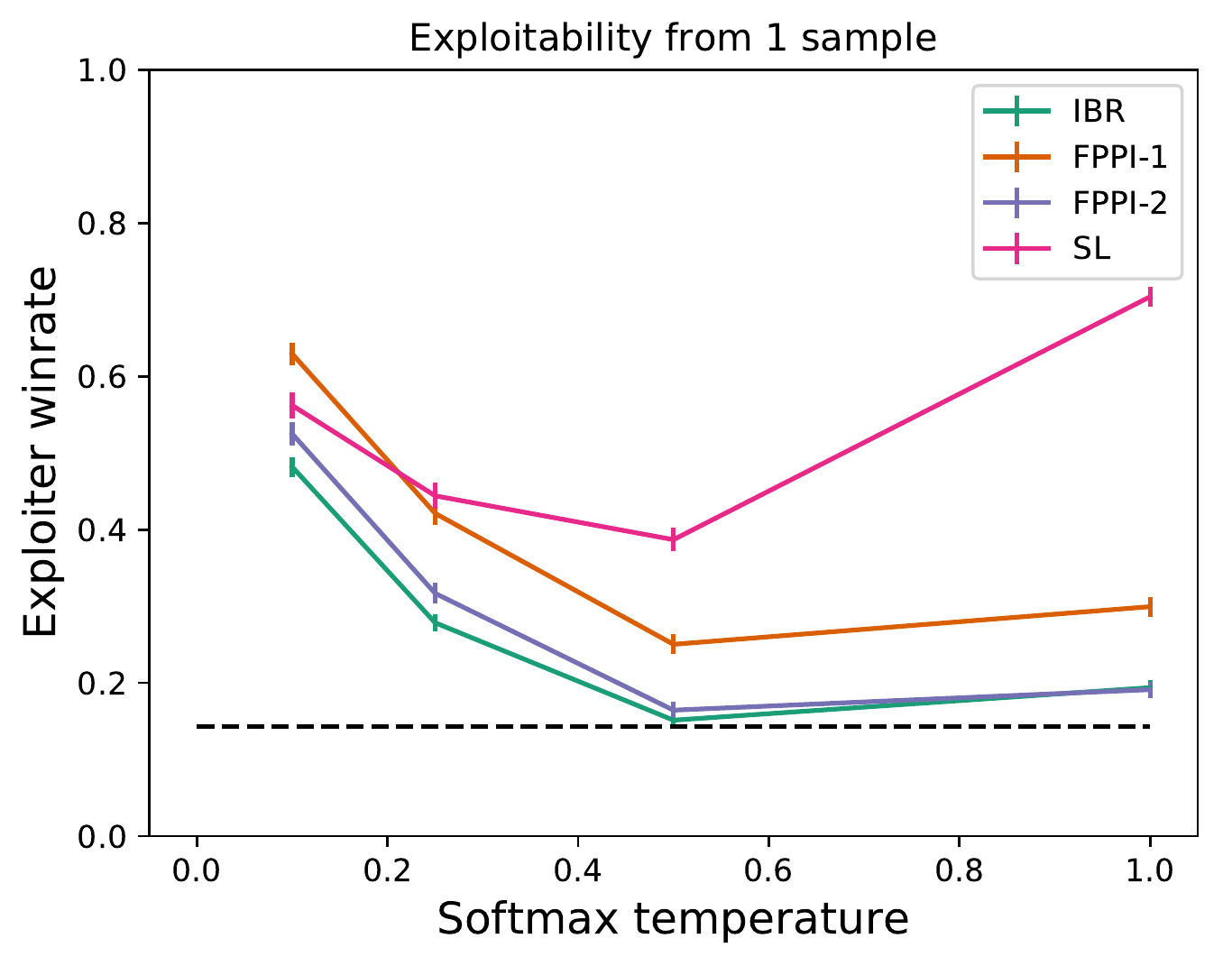}
    \includegraphics[scale=.5]{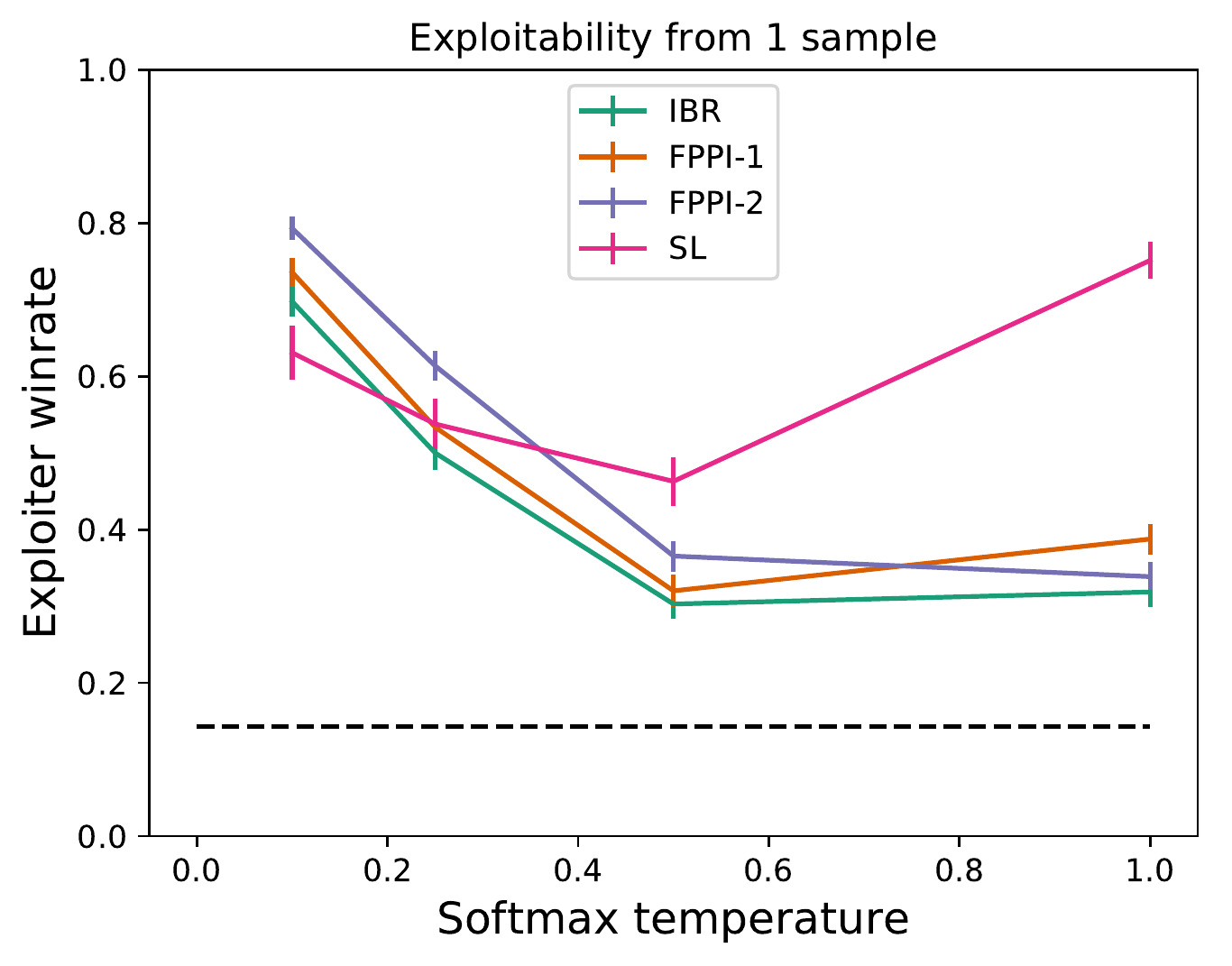}
    \includegraphics[scale=.5]{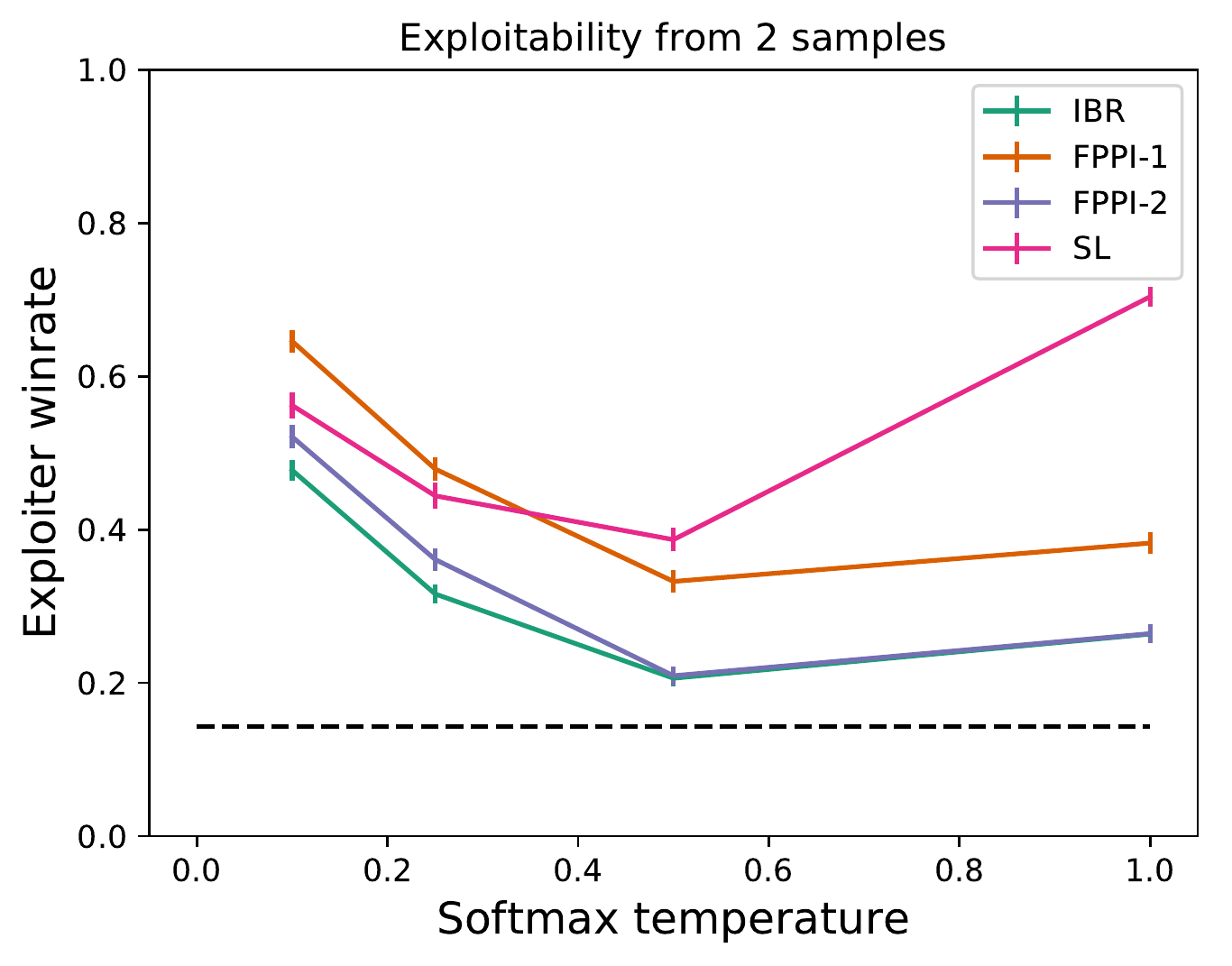}
    \includegraphics[scale=.5]{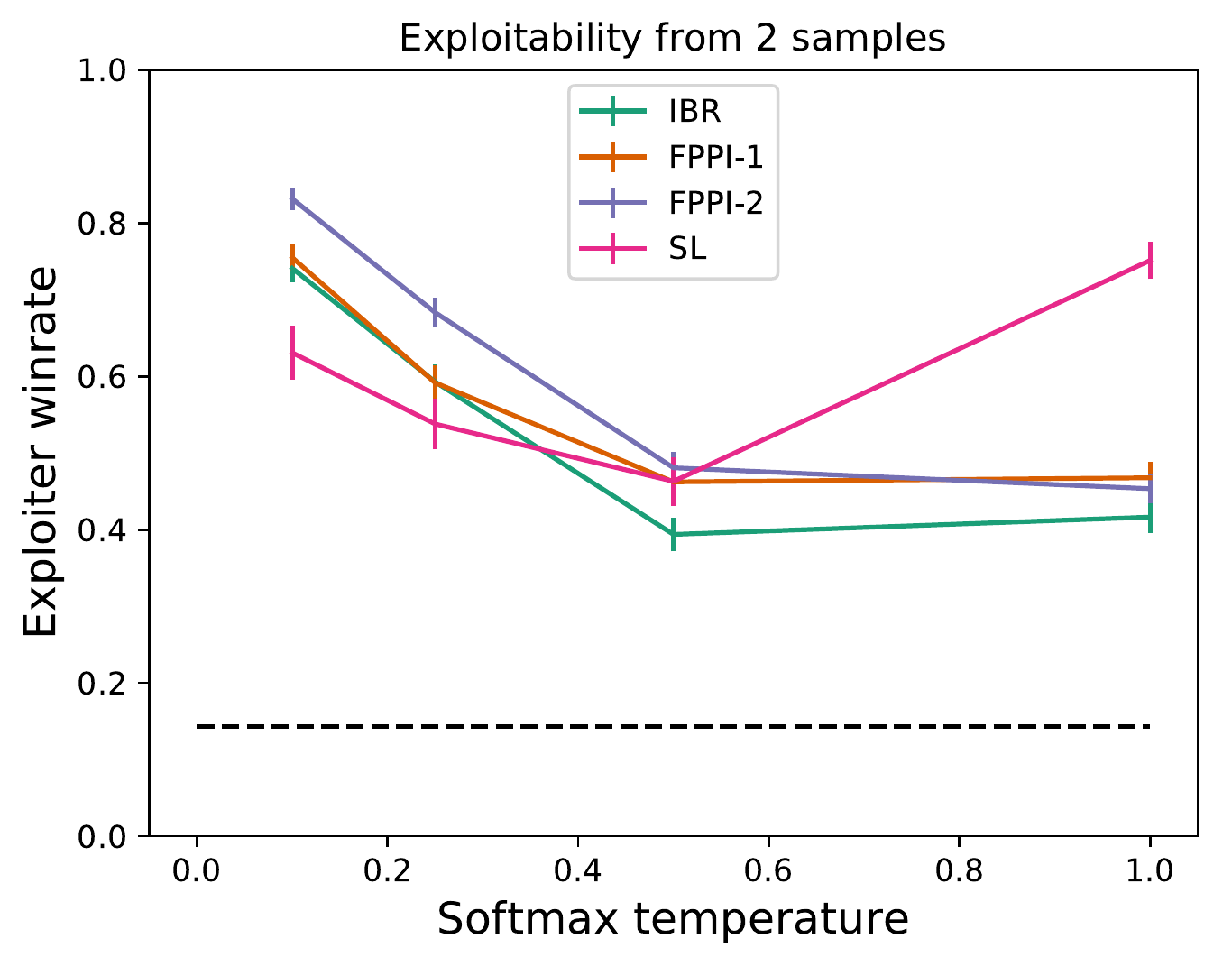}
    \includegraphics[scale=.5]{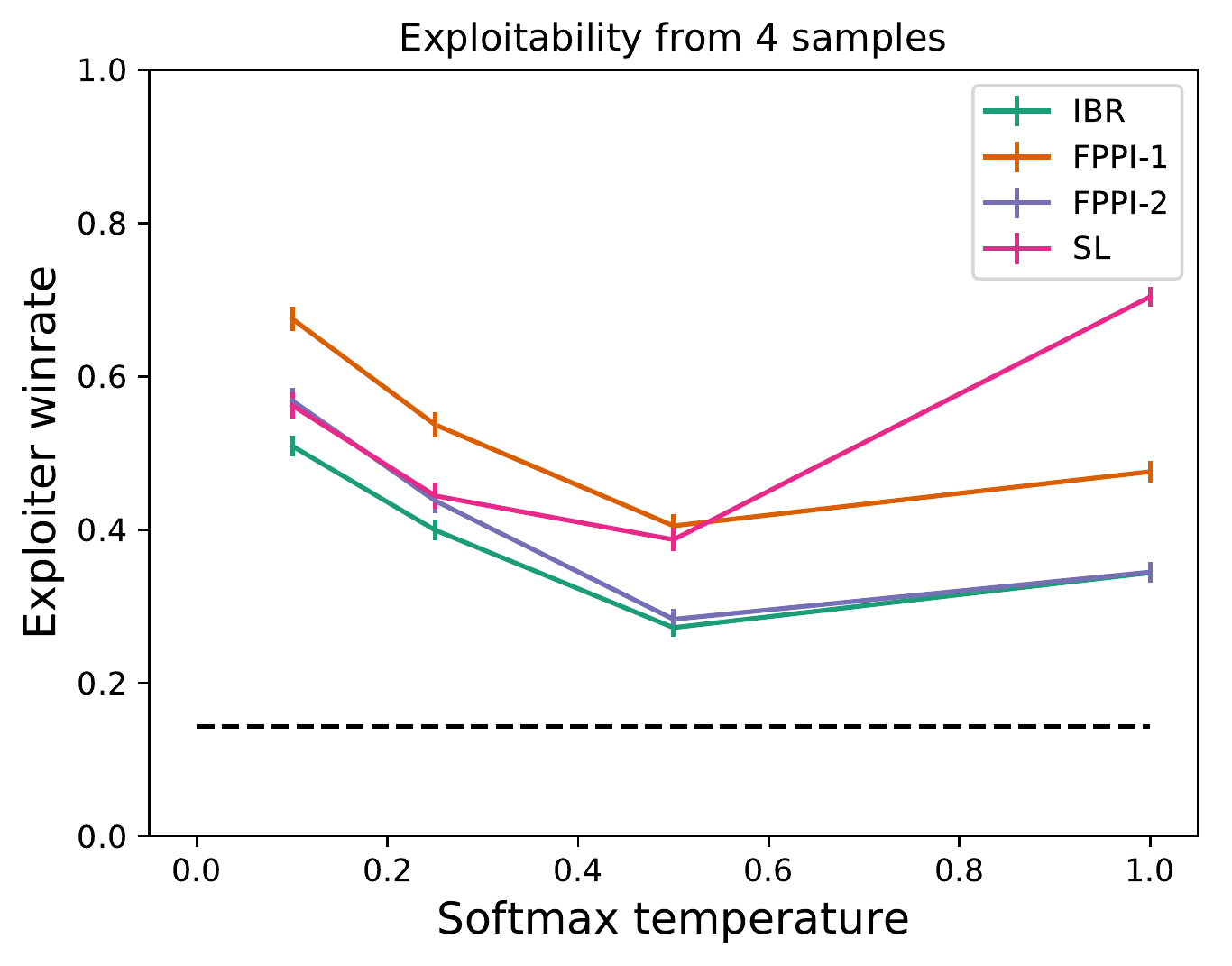}
    \includegraphics[scale=.5]{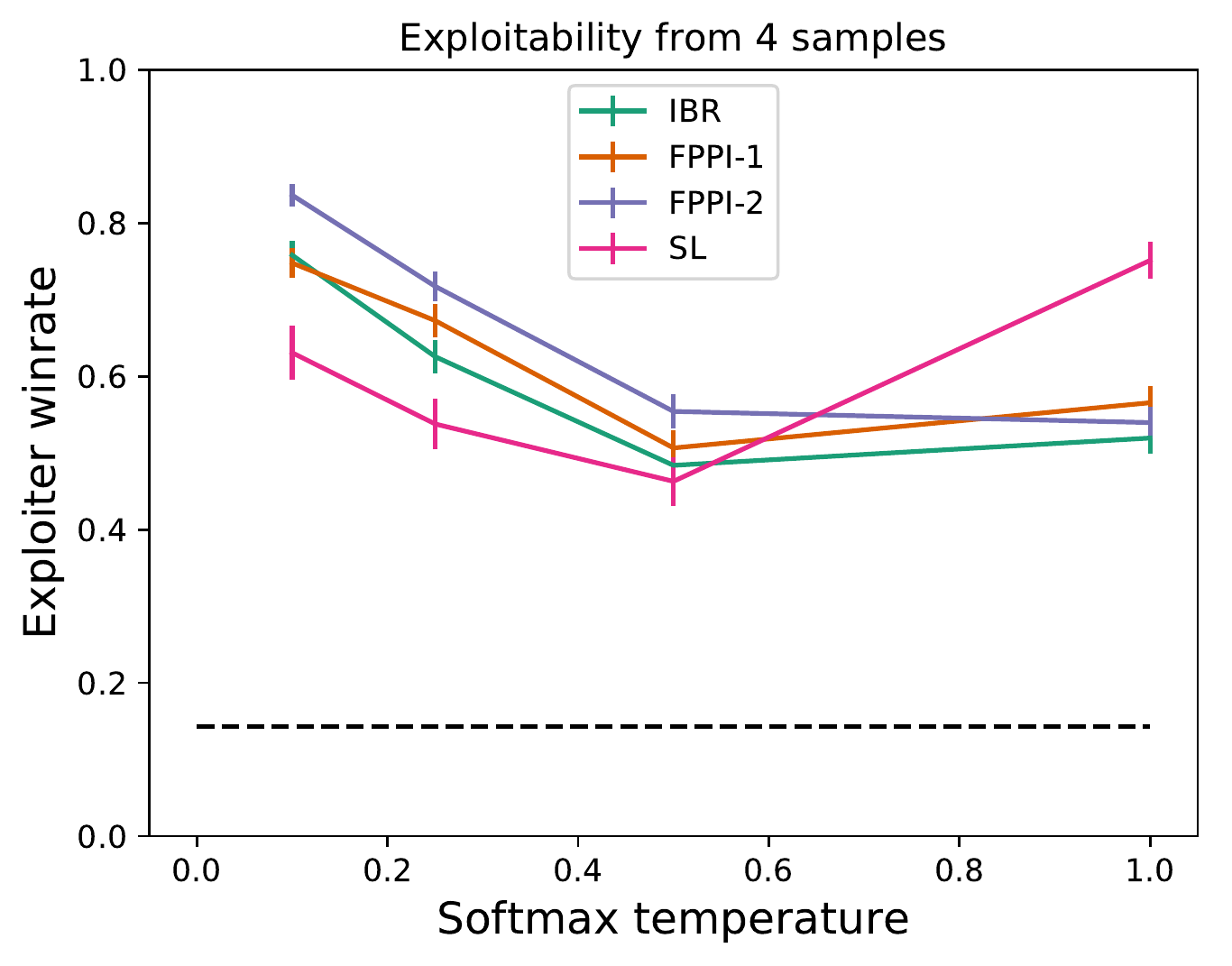}
    \caption{Exploiter winrates of imitation and final BRPI networks. Left column shows the exploits achieved by few shot exploiters, right column the best exploiters found.}
    \label{fig:exploitability}
\end{figure}

Tables \ref{tab:few-target-exploits} and \ref{tab:best-target-exploits} shows lower bounds on the exploitability of the final training targets from the three RL runs, again with few-shot and best exploiters. This gives a lower bound for exploitability which is less than for the networks these targets are improving on. This is particularly interesting for the training target for IBR -- which consists of a single iteration of Sampled Best Response. This is in contrast with what we would see with an exact best response, which would be a highly exploitable deterministic policy.

\begin{table}[ht]
\centering
 \begin{tabular}{| c | c | c | c | c | c | c| } 
 \toprule
  & 1 profile & 2 profiles & 4 profiles \\
\midrule
IBR & $0.008 \pm 0.009$ & $0.063 \pm 0.011$ & $0.129 \pm 0.012$ \\
FPPI-1 & $0.107 \pm 0.013$ & $0.189 \pm 0.015$ & $0.262 \pm 0.016$ \\
FPPI-2 & $0.021 \pm 0.011$ & $0.066 \pm 0.012$ & $0.140 \pm 0.014$ \\
Target(IBR) & $-0.027 \pm 0.008$ & $0.002 \pm 0.009$ & $0.063 \pm 0.012$ \\
Target(FPPI-1) & $-0.012 \pm 0.015$ & $0.066 \pm 0.019$ & $0.156 \pm 0.024$ \\
Target(FPPI-2) & $-0.032 \pm 0.014$ & $0.024 \pm 0.018$ & $0.127 \pm 0.024$ \\
\bottomrule
\end{tabular}
\caption{Few-shot exploitability of final networks training targets with different numbers of base profiles. Networks are shown at the temperature with the highest lower bound on exploitability.}
\label{tab:few-target-exploits}
\end{table}

\begin{table}[ht]
\centering
 \begin{tabular}{| c | c | c | c | c | c | c| } 
 \toprule
  & 1 profile & 2 profiles & 4 profiles \\
\midrule
IBR & $0.160 \pm 0.019$ & $0.251 \pm 0.022$ & $0.341 \pm 0.022$ \\
FPPI-1 & $0.177 \pm 0.021$ & $0.319 \pm 0.023$ & $0.364 \pm 0.023$ \\
FPPI-2 & $0.223 \pm 0.020$ & $0.338 \pm 0.021$ & $0.411 \pm 0.023$ \\
Target(IBR) & $0.049 \pm 0.011$ & $0.109 \pm 0.013$ & $0.187 \pm 0.015$ \\
Target(FPPI-1) & $0.149 \pm 0.019$ & $0.257 \pm 0.022$ & $0.373 \pm 0.024$ \\
Target(FPPI-2) & $0.051 \pm 0.016$ & $0.157 \pm 0.019$ & $0.271 \pm 0.022$ \\
\bottomrule
\end{tabular}
\caption{Best lower bound on exploitability of final networks and training targets with different numbers of base profiles. Networks are shown at the temperature with the highest lower bound on exploitability.}
\label{tab:best-target-exploits}
\end{table}
All the exploiting agents here use $64$ candidates for SBR at each temperature in $(0.1, 0.25, 0.5, 1.0)$.

\subsection{Head to head comparisons}
Here, we add to Table \ref{tab_1v6_agent_comparisons} additional comparisons where we run a test-time improvement step on our initial and final networks. These steps use a more expensive version of SBR than we use in training; we sample $64$ candidates from the network at each of four temperatures $(0.1, 0.25, 0.5, 1.0)$, and also sample the base profiles at temperature $0.1$. We only compare these training targets to the networks and training targets for other runs -- comparing a training target to the network from the same run would be similar to the exploits trained in Appendix~\ref{app:exploitability}.

For the final networks, these improvement steps perform well against the final networks (from their own algorithm and other algorithms). For the initial network, the resulting policy loses to all agents except the imitation network it is improving on; we hypothesise that this is a result of the value function, which is trained on the human dataset and so may be inaccurate for network games.

We also add the agent produced by our implementation of A2C, as described in \ref{app:a2c}.

\begin{table}[ht]
\small
\centering
\begin{tabular}{l | p{7mm}  p{7mm}  p{7mm}  p{7mm} | p{9mm} p{7mm}  p{9mm} | p{7mm}  p{8mm} p{9mm}  } 
\toprule
{} & SL \cite{paquette2019no} & A2C \cite{paquette2019no} & SL (ours) & A2C (ours) & FPPI-1 net & IBR net & FPPI-2 net & SBR-SL & SBR-IBR & SBR-FPPI-2 \\
\midrule
SL \cite{paquette2019no} & 14.2\% & 8.3\% & 16.3\% & 7.7\% & 2.3\% & 1.8\% & 0.8\% & 38.6\% & 2.1\% & 2.3\% \\
A2C \cite{paquette2019no} & 15.1\% & 14.2\% & 15.3\% & 17.0\% & 2.3\% & 1.7\% & 0.9\% & 54.9\% & 2.3\% & 2.4\% \\
SL (ours) & 12.6\% & 7.7\% & 14.1\% & 10.6\% & 3.0\% & 1.9\% & 1.1\% & 28.1\% & 1.6\% & 1.4\% \\
A2C & 14.1\% & 3.5\% & 18.7\% & 14.1\% & 2.8\% & 2.3\% & 1.2\% & 36.6\% & 2.2\% & 1.7\% \\
\midrule
FPPI-1 net & 26.4\% & 28.0\% & 25.9\% & 30.9\% & 14.4\% & 7.4\% & 4.5\% & 67.0\% & 4.4\% & 3.0\% \\
IBR net & 20.7\% & 30.5\% & 25.8\% & 29.4\% & 20.3\% & 12.9\% & 10.9\% & 70.2\% & 5.5\% & 8.0\% \\
FPPI-2 net & 19.4\% & 32.5\% & 20.8\% & 32.1\% & 22.4\% & 13.8\% & 12.7\% & 73.6\% & 6.6\% & 8.1\% \\
\midrule
SBR-SL & 12.5\% & 1.8\% & 15.8\% & 9.2\% & 0.9\% & 0.5\% & 0.1\% & 14.1\% & 0.2\% & 0.2\% \\
SBR-IBR & 24.5\% & 23.3\% & 28.4\% & 30.3\% & 37.7\% & 37.8\% & 31.4\% & 57.7\% & 14.4\% & 16.0\% \\
SBR-FP-2 & 20.0\% & 23.1\% & 32.2\% & 29.0\% & 44.3\% & 28.2\% & 35.0\% & 62.2\% & 14.9\% & 14.9\% \\
\bottomrule
\end{tabular}
\caption{Matches between different algorithms. Winrates for 1 row player vs 6 column players}
\label{tab_1v6_agent_comparisons_too_big}
\end{table}

In Table \ref{tab_1v6_brpi_comparisons}, we give comparisons for the final networks of agents trained using the design space of BRPI specified in \ref{methods:brpi}. The notation for candidate policies and base profiles is $\pi_0$ for the imitation network we start training from, $\pi_{t-1}$ is the policy from the previous iteration, and $\mu_{t-1}$ is the average of the policies from previous iterations. $V_{t-1}$ is the value function from the previous iteration, and $V^{\mu}_{t-1}$ is the average value function from the previous iterations. When $\mu_{t-1}$ and $V^{\mu}_{t-1}$ are used, the sampling is coupled; the value function is from the same network used for the candidates and/or base profiles.

The fourth column of Table \ref{tab_1v6_brpi_comparisons} records which kind of BRPI method each is. Methods that use the latest policy only for base profiles are IBR methods, uniformly sampled base profiles are FP methods. FP methods that use the latest networks for the value function or for candidate sampling do not recreate the historical best responses, so are FPPI-2 methods. The remaining methods are FPPI-1 methods. The asterisks mark the examples of IBR, FPPI-1 and FPPI-2 selected for deeper analysis in section \ref{l_sect_experiments}; these were chosen based on results of experiments during development that indicated that including candidates from the imitation policy was helpful. In particular, they were not selected based on the outcome of the training runs presented in this work.

We find that against the population of final networks from all BRPI runs ($1$ vs $6$ BRPI), IBR and FPPI-2 do better than FPPI-1. For candidate selection, using candidates from the latest and initial networks performs best. For beating the DipNet baseline, we find that using candidates from the imitation policy improves performance substantially. This may be because our policies are regularised towards the style of play of these agents, and so remain more able to play in populations consisting of these agents.

\begin{table}[ht]
\centering
 \begin{tabular}{c | c | c | p{12mm} || p{15mm} | p{15mm} | p{15mm} | p{15mm}  } 
 \toprule
$\pi_c$ & $\pi_b$ & $V$ & BRPI type & 1 vs \newline6 BRPI & 1 BRPI \newline vs 6 & 1 vs\newline 6 DipNet & 1 DipNet \newline vs 6 \\
 \midrule
 \midrule
$\pi_{0}$ & $\pi_{t-1}$ & $V_{t-1}$ & IBR & 9.6\% & 16.1\% & 24.7\% & 4.0\% \\
$\pi_{0}$ & $\mu_{t-1}$ & $V_{t-1}$ & FPPI-2 & 9.8\% & 17.7\% & 24.4\% & 3.7\% \\
$\pi_{0}$ & $\mu_{t-1}$ & $V^\mu_{t-1}$ & FPPI-1 & 8.5\% & 17.1\% & 27.3\% & 3.3\% \\
$\mu_{t-1}$ & $\pi_{t-1}$ & $V_{t-1}$ & IBR & 15.4\% & 10.9\% & 25.9\% & 1.3\% \\
$\mu_{t-1}$ & $\mu_{t-1}$ & $V_{t-1}$ & FPPI-2 & 14.0\% & 12.6\% & 22.3\% & 1.7\% \\
$\mu_{t-1}$ & $\mu_{t-1}$ & $V^\mu_{t-1}$ & FPPI-1 & 9.3\% & 15.6\% & 24.4\% & 2.3\% \\
$\mu_{t-1} + \pi_0$ & $\pi_{t-1}$ & $V_{t-1}$ & IBR & 14.3\% & 12.8\% & 25.5\% & 2.0\% \\
$\mu_{t-1} + \pi_0$ & $\mu_{t-1}$ & $V_{t-1}$ & FPPI-2 & 13.8\% & 12.6\% & 26.1\% & 1.7\% \\
$\mu_{t-1} + \pi_0$ & $\mu_{t-1}$ & $V^\mu_{t-1}$ & FPPI-1* & 9.3\% & 17.2\% & 26.4\% & 2.3\% \\
$\pi_{t-1}$ & $\pi_{t-1}$ & $V_{t-1}$ & IBR & 16.0\% & 7.5\% & 14.3\% & 0.5\% \\
$\pi_{t-1}$ & $\mu_{t-1}$ & $V_{t-1}$ & FPPI-2 & 17.4\% & 6.9\% & 13.2\% & 0.7\% \\
$\pi_{t-1}$ & $\mu_{t-1}$ & $V^\mu_{t-1}$ & FPPI-2 & 9.7\% & 15.5\% & 13.6\% & 3.5\% \\
$\pi_{t-1} + \pi_0$ & $\pi_{t-1}$ & $V_{t-1}$ & IBR* & 16.4\% & 9.8\% & 20.7\% & 1.8\% \\
$\pi_{t-1} + \pi_0$ & $\mu_{t-1}$ & $V_{t-1}$ & FPPI-2* & 17.6\% & 8.2\% & 19.4\% & 0.8\% \\
$\pi_{t-1} + \pi_0$ & $\mu_{t-1}$ & $V^\mu_{t-1}$ & FPPI-2 & 12.5\% & 13.2\% & 23.9\% & 1.8\% \\
\midrule
$\pi_{0}$ & mean & mean & - & 9.3\% & 17.0\% & 25.5\% & 3.7\% \\
$\mu_{t-1}$ & mean & mean & - & 12.9\% & 13.0\% & 24.2\% & 1.8\% \\
$\mu_{t-1} + \pi_0$ & mean & mean & - & 12.5\% & 14.2\% & 26.0\% & 2.0\% \\
$\pi_{t-1}$ & mean & mean & - & 14.4\% & 10.0\% & 13.7\% & 1.6\% \\
$\pi_{t-1} + \pi_0$ & mean & mean & - & 15.5\% & 10.4\% & 21.4\% & 1.5\% \\
\midrule
mean & $\pi_{t-1}$ & $V_{t-1}$ & - & 14.3\% & 11.4\% & 22.2\% & 1.9\% \\
mean & $\mu_{t-1}$ & $V^\mu_{t-1}$ & - & 9.9\% & 15.7\% & 23.1\% & 2.6\% \\
mean & $\mu_{t-1}$ & $V_{t-1}$ & - & 14.5\% & 11.6\% & 21.1\% & 1.7\% \\
\bottomrule
\end{tabular}
\caption{Performance of different BRPI variants against BRPI and DipNet agents. The scores are all for the $1$ agent. All results are accurate to $0.5\%$ within a confidence interval of $95\%$}
\label{tab_1v6_brpi_comparisons}
\end{table}

\section{Imitation Learning and Neural Network Architecture}
\label{appendix:network}

We fully describe the architecture of the neural network we use for approximating policy and value functions,  including hyperparameter settings. The architecture is illustrated in Figure~\ref{fig:arch}.

\begin{figure}[ht]
    \centering
    \includegraphics[width=\linewidth]{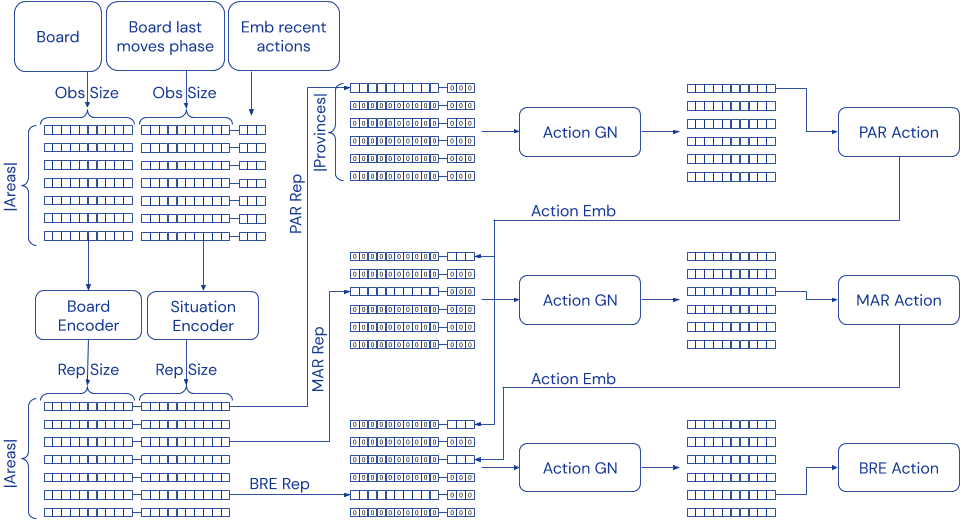}
    \caption{The neural network architecture for producing actions.}
    \label{fig:arch}
\end{figure}

Our network outputs policy logits for each unit on the board controlled by the current player $p$, as well as a value estimate for $p$. It takes as inputs:
\begin{itemize}
    \item $x_b$, a representation of the current state of the board, encoded with the same $35$ board features per area~\footnote{An \textit{area} is any location that can contain a unit. This is the 75 provinces (e.g. Portugal, Spain), plus the 6 coastal areas of the three bicoastal provinces (e.g. Spain (south coast), Spain (north coast)).} used in DipNet
    \item $x_m$, the state of the board during the last moves phase, encoded the same way
    \item $x_o$, the orders issued since that phase 
    \item $s$, the current {\it Season}
    \item $p$, the current power
    \item $x_d$, the current build numbers (i.e. the difference between the number of supply centres and units for each power)
\end{itemize}

Similar to DipNet, we first produce a representation of previous gameplay. This incorporates learned embeddings $e_o(x_o)$, $e_s(s)$, and $e_p(p)$ applied to the recent orders, the season, and the power. The recent orders embeddings are summed in each area, producing $\tilde e_o(x_o)$. We begin by concatenating $x_m$ and $\tilde e_o(x_o)$ to produce $\tilde x_m$. Then, we concatenate $x_d$ and $e_s(s)$ to each of $x_b$ and $\tilde x_m$ to produce $\bar{x}_b = [x_b, x_d, e_s(s)]$ and $\bar{x}_m = [x_m, e_o(x_o), x_d, e_s(s)]$. (DipNet uses hardcoded ``alliance features'' in place of $\tilde x_m$, and leaves out $x_d$.)

Next, we process each of $\bar{x}_b$ and $\bar{x}_m$ with identical, but independent stacks of 12 Graph Neural Networks (GNNs) \cite{battaglia2018relational} linked (apart from the first layer) by residual connections. In particular, each residual GNN computes $\bar{x}^{\ell+1} = \bar{x}^{\ell} + \text{ReLU}(\text{BatchNorm}([\hat{\bar{x}}^\ell, A\cdot\hat{\bar{x}}^\ell]))$, where $\hat{\bar{x}}^\ell_{n,j} = \sum_i \bar{x}^\ell_{n, i} w^\ell_{n, i, j}$, with $\ell$ indexing layers in the stack, $n$ the areas, $A$ the normalized adjacency matrix of the Diplomacy board, and $\bar{x}$ being a stand-in for either $\bar{x}_b$ or $\bar{x}_m$. After concatenating $e_p(p)$ to the resulting embeddings, we use 3 additional GNN layers with residual connections to construct $x^{s,p}_b$ and $x^{s,p}_m$, which we concatenate to form our final state encoding $x^{s,p} = [x^{s,p}_b, x^{s,p}_m]$. Note that, although DipNet describes their encoders as Graph Convolutional Networks, their GNNs are not convolutional over nodes, and weights are not tied between GNNs -- this produced better results, and we followed suit.

The value outputs are computed from the state encodings $x^{s,p}$ by averaging them across $p$ and $s$, and then applying a ReLU MLP with a single hidden layer to get value logits.

Like DipNet, we construct order lists from our state encoding $x^{s,p}$ considering one by one the provinces $n(1), \dots, n(k)$ requiring an order from $p$ according to a fixed order. Unlike DipNet, we use a Relational Order Decoder (ROD). Our ROD module is a stack of 4 GNNs with residual connections that, when considering the $k$-th province, takes as input the concatenation of $x^{s,p}_{n(k)}$ and $z_{n(1), \ldots, n(k-1)}$. Precisely, $x^{s,p}_{n(k)}$ is a masked version of $x^{s,p}$ where all province representations except $n(k)$ are zeroed out, and $z_{n(1), \ldots, n(k-1)}$ contains embeddings of the orders already on the list scattered to the slots corresponding to the provinces they referred to: $n(1),\ldots,n(k-1)$. The output of the ROD corresponding to province $n(k)$ is then mapped to action logits through a linear layer with no bias $w$. Similarly to DipNet, after sampling, the row of $w$ corresponding to the order selected is used to fill in the $n(k)$-th row of $z_{n(1), \ldots, n(k-1),n(k)}$.

Table~\ref{tab:imitation} compares the imitation accuracy and winrates improvements when switching from the DipNet neural architecture to the one we use (indicating a slight improvement in performance). For our RL experiments we chose the architecture with the highest imitation accuracy despite its decline in winrates; this is because the RL improvement loop relies on imitation, so imitation performance is the chief desideratum.

Furthermore, the winrates are affected by a confounding factor that we uncovered while inspecting the imperfect winrate of the final imitation network against a random policy. What we found was that in games where the network didn't beat the random policy, the network was playing many consecutive all-holds turns, and hitting a 50-year game length limit in our environment. This reflects the dataset: human players sometimes abandon their games, which shows up as playing all-holds turns for the rest of the game. We hypothesize that the encoder that observes the preceding moves-phase board, and especially the actions since then, is better able to represent and reproduce this behaviour. This is to its detriment when playing games, but is easily addressed by the improvement operator.

\begin{table}[!ht]
    \centering
\begin{tabular}{p{\widthof{$+$Relational decoder}}rrrrrrrr}
\toprule
& \multicolumn{4}{c}{Imitation accuracy (\%)} &  \multicolumn{4}{c}{Winrates (\%)} \\
\cmidrule(lr){2-5}\cmidrule(lr){6-9}
&\multicolumn{2}{c}{Teacher forcing}&\multicolumn{2}{c}{Whole-turn}&\multicolumn{2}{c}{vs Random}&\multicolumn{2}{c}{vs DipNet SL}\\
\cmidrule(lr){2-3}\cmidrule(lr){4-5}\cmidrule(lr){6-7}\cmidrule(lr){8-9}
Architecture&Press&No-press&Press&No-press&1v6&6v1&1v6&6v1\\
\midrule
DipNet replication & 56.35 & 58.03 & 25.67 & 26.86 & 100 & 16.67 & 15.99 & 14.42 \\
$+$Encoder changes  & 59.08 & 60.32 & 30.79 & 30.28 & 100 & 16.66 & 16.75 & 14.51 \\
$+$Relational decoder & 60.08 & 61.97 & 30.26 & 30.73 & 100 & 16.67 & 17.71 & 14.33 \\
$-$Alliance features & 60.68 & 62.46 & 30.96 & 31.36 & 99.16 & 16.66 & 13.30 & 14.25 \\
\bottomrule
\end{tabular}
    \caption{Imitation learning improvements resulting from our changes to DipNet~\cite{paquette2019no}.}
    \label{tab:imitation}
\end{table}

\subsection{Hyperparameters}
\label{app:im-hypers}

For the inputs, we use embedding sizes $10$, $16$, and $16$ for the recent orders $x_o$, the power $p$, and the season $s$, in addition to the same $35$ board features per area used in DipNet. The value decoder has a single hidden layer of size $256$. The relational order decoder uses an embedding of size $256$ for each possible action.

The imitation networks were trained on a P100 GPU, using an Adam optimizer with learning rate $0.003$ and batch size $256$, on a 50-50 mixture of the No-Press and Press datasets used for DipNet. The datasets were shuffled once on-disk, and were sampled during training via a buffer of $10^4$ timesteps into which whole games were loaded. During training we filtered out powers that don't end with at least $7$ SCs as policy targets, and filtered out games where no power attains $7$ SCs as value targets. $2000$ randomly selected games from each dataset were held out as a validation set.

Power and season embeddings were initialized with random uniform entries in $[-1,1]$. Previous-order embeddings were initialized with random standard normal entries. The decoder's action embeddings were initialized with truncated normal entries with mean 0 and standard deviation $1/\sqrt{18584}$ ($18584$ is the number of possible actions). The GNN weights were initialized with truncated normal entries with mean 0 and standard deviation $2/\sqrt{(\text{input width})\cdot(\text{number of areas})}$. The value network was initialized with truncated standard normal entries with mean 0 and standard deviation $1/\sqrt{\text{input width}}$ for both its hidden layer and its output layer.

\subsection{Data cleaning}
\label{app:data-cleaning}
We use the dataset introduced in \cite{paquette2019no} for our pre-training on human imitation data. Before doing so, we run the following processing steps, to deal with the fact that the data is from a variety of sources and contains some errors:
\begin{itemize}
    \item We exclude any games labelled in the dataset as being on non-standard maps, with fewer than the full $7$ players, or with non-standard rules.
    \item We attempt to apply all actions in the game to our environment. If we do so successfully, we include the game in the dataset. We use observations generated by our environment, not the observations from the dataset, because some of the dataset's observations were inconsistent with the recorded actions.
    \item As final rewards for training our value function, we take the reward from the dataset. For drawn games, we award $1/n$ to each of the $n$ surviving players.
    \item We deal with the following variations in the rules (because not every game was played under the same ruleset):
   \begin{itemize}
     \item Some data sources apparently played forced moves automatically. So if there is only one legal move, and no move for the turn in the dataset, we play the legal move.
     \item Some variants infer whether a move is a convoy or land movement from the other orders of the turn, rather than making this explicit. To parse these correctly, we retry failed games, changing move orders to convoys if the country ordering the move also ordered a fleet to perform a convoy of the same route. For example, if the orders are A BEL - HOL, F HOL - BEL and F NTH C BEL - HOL, we update the first order to be A BEL - HOL (via convoy). If this rewritten game can be parsed correctly, we use this for our imitation data.
   \end{itemize}
   \item Any game which can not be parsed with these attempted corrections is excluded from the dataset. One large class of games that cannot be parsed is those played under a ruleset where the coast of supports matters -- that is, a move to Spain (sc) can fail because a support is ordered to Spain (nc) instead. Other than these, the errors were varied; the first $20$ games manually checked appear to have units ending up in places which are inconsistent with their orders, or similar errors. It is possible that these involved manual changes to the game state external to the adjudicator, or were run with adjudicators which either had bugs or used non-standard rulesets.
\end{itemize}

This process resulted in number of exclusions and final data-sets sizes reported in Tab.~\ref{tab:d-set-stats}.
\begin{table}[!ht]
    \centering
\begin{tabular}{l| r r r r}
\toprule
{} & \multicolumn{2}{c}{Training set} & \multicolumn{2}{c}{Validation set}\\
\midrule
{} & \multicolumn{1}{c}{Press} & \multicolumn{1}{c}{No-press} & \multicolumn{1}{c}{Press} & \multicolumn{1}{c}{No-press}\\
\midrule
\midrule
Available games & 104456 & 31279 & 2000 & 2000\\
\midrule
Excluded: non-standard map & 833 & 10659 & 17 & 705\\
Excluded: non-standard rules & 863 & 1 & 19 & 0\\
Excluded: min SCs not met & 2954 & 517 & 48 & 31\\
Excluded: unable to parse & 3554 & 79 & 78 & 4\\
\midrule
Included & 96252 & 20023 & 1838 & 1260\\
\bottomrule
\end{tabular}
\caption{Number of games available, included and excluded in our data-sets.}\label{tab:d-set-stats}
\end{table}

Diplomacy has been published with multiple rulebooks, which slightly differ in some ways. Some rulebooks do not completely specify the rules, or introduce paradoxes. The Diplomacy Adjudicator Test Cases describe a variety of rulesets, interpretations and paradox resolution methods~\cite{datc}. We make the same interpretations as the webDiplomacy website~\cite{webdip_datc}. Our adjudicator uses Kruijswijk's algorithm~\cite{kruijswijk_adjudication}.

\section{BRPI settings}
\label{app:rl-settings}
For all the reinforcement learning runs reported, we use the settings as in section \ref{app:im-hypers} for the learning settings, with the exception of the learning rate for Adam, which is $10^{-4}$. We update the policy iteration target by adding a new checkpoint every $7.5$ million steps of experience. We use an experience replay buffer of size $50000$; to save on the amount of experience needed, each datapoint is sampled $4$ times before it is evicted from the buffer.

For sampled best response, we use $2$ base profiles. We sample $16$ candidate moves for each player; in the case where we use two sources of candidate (such as $\pi^\textrm{SL}$ and latest checkpoint), we sample $8$ from each. The base profiles are the same for each of the $7$ powers, and if the policy being responded to is the same as a policy producing candidates, we reuse the base profiles as candidate moves.

We force draws in our games after random lengths of time. The minimum game length is $2$ years; after that we force draws each year with probability $0.05$. Since our agents do not know how to agree draws when the game is stalemated, this gives the game length a human-like distribution and incentivises agents to survive even when they are unlikely to win. When games are drawn in this way, we award final rewards dependent on supply centres, giving each power a reward of the proportion of all controlled SCs which they control at the end of the game. Otherwise, no rewards are given except for a reward of $1$ for winning.

\section{A2C with V-Trace off policy correction}
\label{app:a2c}
\begin{figure}[ht]
    \centering
    \includegraphics[width=0.4\linewidth]{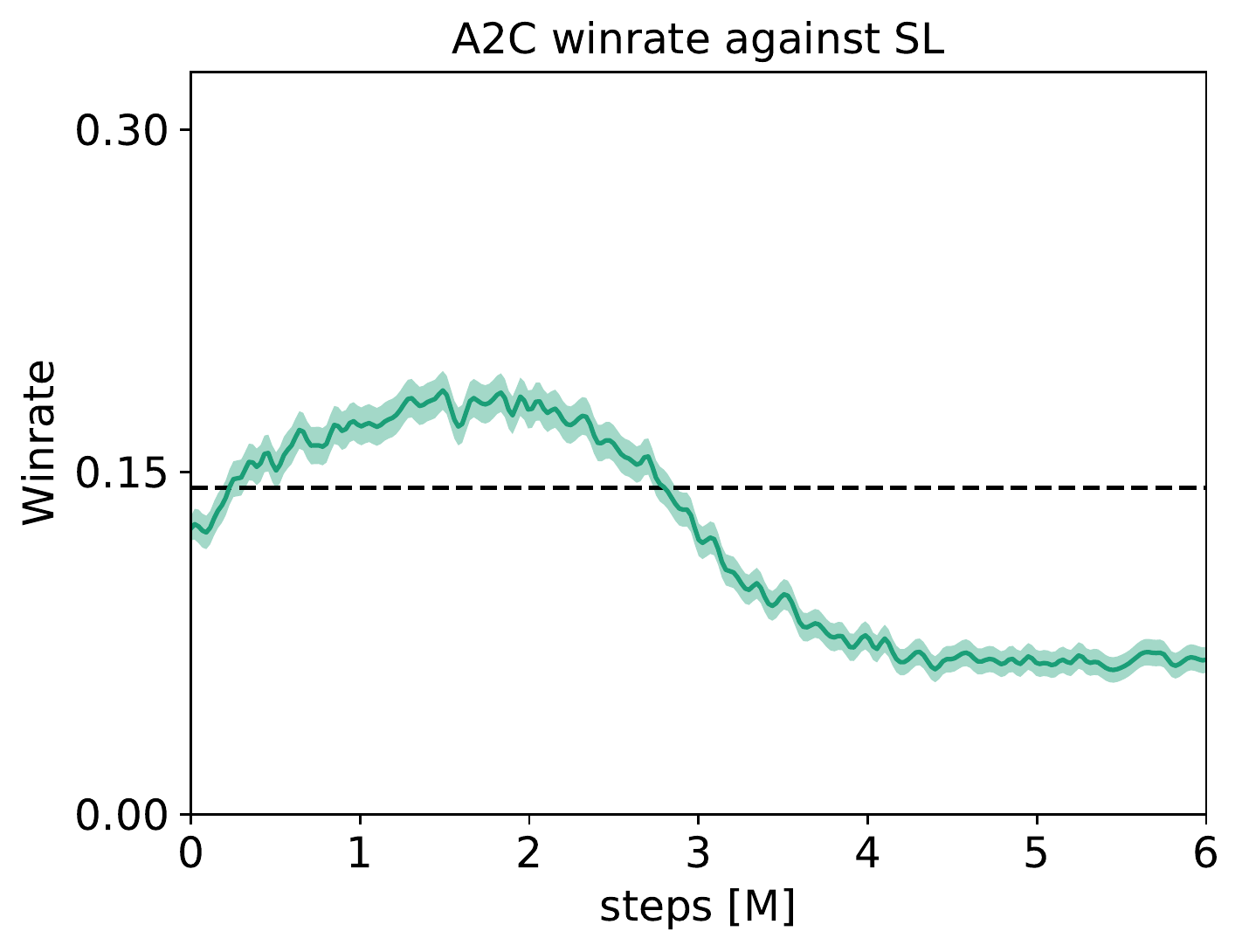}
    \caption{Winrate of 1 A2C v. 6 Imitation Learning (SL). Shaded areas are error-bars over 7000 games, uniformly distributed over the power controlled by A2C. The step counter on the horizontal axis shows the number of state, action, rewards triplets used by the central learner. Note that this is different than other RL plots in the text, where policy iteration loops are shown.}
    \label{fig:a2c}
\end{figure}
In this section we describe our implementation of the batched advantage actor-critic (A2C) with V-Trace off policy correction algorithm we used as our policy gradient baseline \cite{espeholt2018impala}, and very briefly comment on its performance. As in our BRPI experiment, we let Reinforcement Learning training start from our Imitation Learning baseline. We use the same network architecture as for BRPI.

Our implementation of A2C uses a standard actor-learner architecture where actors periodically receive network parameters from a central learner, and produce experience using self-play. The single central learner, in turn, retrieves the experience generated by our actors, and updates its policy and value network parameters using batched A2C with importance weighting.

Two points to note in our implementation:
\begin{enumerate}
    \item Diplomacy's action space is complex: our network outputs an order for each of the provinces controlled by player $p$ at each turn. Orders for each province are computed one by one, and our ROD module ensures inter-province consistency. We therefore expand the off policy correction coefficients for acting policy $\mu$ and target policy $\pi$ as follows: $\rho = \frac{\pi(a_t | s_t)}{\mu(a_t | s_t)} = \frac{p_{\pi}(a_{t_1} | s_t)p_{\pi}(a_{t_2} | a_{t_1}, s_t)\ldots p_{\pi}(a_{t_k} | a_{t_1},\ldots,a_{t_{k-1}}, s_t)}{p_{\mu}(a_{t_1} | s_t)p_{\mu}(a_{t_2} | a_{t_1}, s_t)\ldots p_{\mu}(a_{t_k} | a_{t_1},\ldots,a_{t_{k-1}}, s_t)}$, where $p_{\pi}(a_{t_l} | a_{t_1},\ldots,a_{t_{l-1}})$ is the probability of selecting order $a_{t_l}$ for unit $l$, given the state of the board at time $t$, $s_t$, and all orders for previous units $a_{t_1},\ldots,a_{t_{l-1}}$, under policy $\pi$.
    
    \item We do not train the value target using TD. Instead, just as in the imitation step of BRPI algorithms, we augment trajectories with returns and use supervised learning directly on this target.
    \item As in \ref{app:rl-settings}, we force draws after random lengths of time, and otherwise only have rewards when games are won. This differs to the A2C agent trained in \cite{paquette2019no}, where a dense reward was given for capturing supply centres.
\end{enumerate}

Fig.~\ref{fig:a2c} shows A2C's performance in 1v6 tournaments against the Imitation Learning starting point (SL). We observe that A2C's win-rate in this setting steadily increases for about 3M steps, and then gradually declines.
In comparisons to other algorithms, we report the performance of the A2C agent with the best win-rate against its Imitation Learning starting point.

\section{Calculation of Confidence Intervals}
\label{app:stats}

The confidence intervals in figure \ref{l_fig_perf_sppi_run} are for the mean scores of 5 different experiments. The confidence interval is for the variation in the means across the random seeds. This is calculated based on a normal distribution assumption with unknown variance (i.e. using a t-distribution with 4 degrees of freedom)~\cite{krishnamoorthy2016handbook}.

The confidence intervals quoted for 1v6 winrate tables are the measurement confidence interval, they only reflect randomness in the outcomes of games between our actual agents, and do not represent differences due to randomness during training the agents. To calculate the confidence interval, we first calculate the confidence interval for the winrate for each combination of singleton agent, 6-country agent, and country the singleton agent plays as. For this confidence interval, we use the Wilson confidence interval (with continuity correction)~\cite{wilson1927probable}. We then combine the confidence intervals using the first method from Waller et al.~\cite{waller1994confidence}. Unlike using a single Wilson confidence interval on all data, this corrects for any imbalance in the distribution over agents or countries played in the data generated, for example due to timeouts or machine failures during evaluation.

\section{Gradient Descent for Finding an \texorpdfstring{$\epsilon$}{epsilon}-Nash Equilibrium in the Meta-Game}
\label{app:nashconv_descent}

Given our definition for an $\epsilon$-Nash, we measure distance from a Nash equilibrium with $\mathcal{L}_{\textrm{exp}}(\boldsymbol{x}) = \sum_i \mathcal{L}_{\textrm{exp}_i}(\boldsymbol{x})$, known as \emph{exploitability}
or \texttt{Nash-conv}, and attempt to compute an approximate Nash equilibrium via gradient descent. By redefining $r^i \leftarrow r^i + \tau \texttt{entropy}(x_i)$, we approach a Quantal Response Equilibrium~\cite{mckelvey1995quantal} (QRE) instead; QRE models players with bounded rationality ($\tau \in [0,\infty)$ recovers rational (Nash) and non-rational at its extremes). Further, QRE can be viewed as a method that performs entropy-regularizing of the reward, which helps convergence in various settings~\cite{perolat2020poincar}. For further discussion of QRE and its relation to convergence of FP methods, see Appendix~\ref{app:theory}. 

Computing a Nash equilibrium is PPAD-complete in general~\cite{papadimitriou1994complexity}, however, computing the relaxed solution of an $\epsilon$-Nash equilibrium proved to be tractable in this setting when the number of strategies is sufficiently small. 

Note that a gradient descent method over $\mathcal{L}_\textrm{exp}(\boldsymbol{x})$ is similar to, but not the same as Exploitability Descent~\cite{lockhart2019computing}. Whereas exploitability descent defines a per-player exploitability, and each player independently descends their own exploitability, this algorithm performs gradient descent on the exploitability of the strategy profile ($\mathcal{L}_\textrm{exp}(\boldsymbol{x})$ above), across all agents.

\section{Theoretical Properties of Fictitious Play}
\label{app:theory}

Appendix~\ref{app:blotto} discusses the relation between the FP variants of BRPI described in Section~\ref{methods:brpi} and Stochastic Fictitious Play~\cite{Fudenberg93SFP} (SFP). We now investigate the convergence properties of SFP, extending analysis done for two-player games to many-player games (3 or more players). Our key theoretical result is that SFP converges to an $\epsilon$-coarse correlated equilibrium ($\epsilon$-CCE) in such settings (Theorem~\ref{sfp_eps_cce}). We make use of the same notation introduced in Appendix~\ref{l_sect_appendix_defs_notation}.

We first introduce some notation we use. A Quantal best response equilibrium (QRE) with parameter $\lambda$ defines a fixed point for policy $\pi$ such that for all $i$
\begin{align*}
    \pi^i(a^i) &= \texttt{softmax}\Big(\frac{r^i_{\pi^{-i}}}{\lambda}\Big) =  \frac{\exp\big(\frac{r^i_{\pi^{-i}}(a_i)}{\lambda}\big)}{\sum \limits_{j} \exp\big(\frac{r^i_{\pi^{-i}}(a_j)}{\lambda}\big)}
\end{align*}
where $\lambda$ is an inverse temperature parameter and $r^i_{\pi^{-i}}$ is a vector containing player $i$'s rewards for each action $a_i$ given the remaining players play $\pi^{-i}$. The softmax can be rewritten as follows:
\begin{align*}
    \frac{\exp\big(\frac{r^i_{\pi^{-i}}(a_i)}{\lambda}\big)}{\sum \limits_{j} \exp\big(\frac{r^i_{\pi^{-i}}(a_j)}{\lambda}\big)} = \argmax \limits_{p^i} \left[ \langle p^i , r^i_{\pi^{-i}} \rangle + \lambda h^i(p^i) \right]
\end{align*}
\noindent where the function $h^i$ is the entropy. The QRE can then be rewritten as:
\begin{align*}
   \max \limits_{p^i} \left[ \langle p^i , r^i_{\pi^{-i}} \rangle + \lambda h^i(p^i) \right] = \langle \pi^i , r^i_{\pi^{-i}} \rangle + \lambda h^i(\pi^i).
\end{align*}
We leverage this final formulation of the QRE in our analysis.

Fictitious Play with best responses computed against this entropy regularized payoff is referred to as \emph{Stochastic} Fictitious Play. Note that without the entropy term, a best response is likely a pure strategy whereas with the entropy term, mixed strategies are more common. The probability of sampling an action $a$ according to this best response is
$$P\left(a = \argmax_{a^i} \frac{r^i_{\pi^{-i}}(a^i)}{\lambda} + \epsilon_i \right)$$ 
\noindent where $\epsilon_i$ follow a Gumbel distribution ($\mu=0$ and $\beta=1$ see the original paper on SFP~\cite{Fudenberg93SFP}).

Finally, let $h(p^i)$ denote the entropy regularized payoff of a policy $p^i$. We use the following shorthand notation for the Fenchel conjugate of $h$: $h^*(y) = \max \limits_{p} \left[ \langle p , y\rangle + h(p) \right]$. Note that $h^*(y) = \langle p^* , y\rangle + h(p^*)$ where $p^* = \argmax \limits_{p} \left[ \langle p , y\rangle + h(p) \right]$ and so therefore, $\frac{d h^*(y)}{dy} = p^*$. This property is used in proving the regret minimizing property of Continuous Time SFP below.

We first recap known results of Discrete Time Fictitious Play and Continuous Time Fictitious Play. As a warm-up, we then prove convergence of Continuous Time Fictitious Play to a CCE.

Next, we review results for Discrete Time Stochastic Fictitious Play and Continuous Time Stochastic Fictitious Play. Lastly, we prove convergence of Continuous Time Stochastic Fictitious Play to an $\epsilon$-CCE.\footnote{This result generalizes the result in the warm-up.}

{\bf Discrete Time Fictitious Play:} Discrete Time Fictitious Play~\cite{robinson1951iterative} is probably the oldest algorithm to learn a Nash equilibrium in a zero-sum two-player game. The convergence rate is $O\big(t^{-\frac{1}{|A_0| + |A_1| - 2}}\big)$~\cite{shapiro1958} and it has been conjectured in~\cite{karlin1959mathematical} that the actual rate of convergence of Discrete Time Fictitious Play is $O(t^{-\frac{1}{2}})$ (which matches~\cite{shapiro1958}'s lower bound in the 2 action case). A strong form of this conjecture has been disproved in~\cite{daskalakis2014counter}.

However Discrete Time Fictitious Play (sometimes referred to as Follow the Leader) is not a regret minimizing algorithm in the worst case (see a counter example in~\cite{de2014follow}). A solution to this problem is to add a regularization term such as entropy in the best response to get the regret minimizing property (\textit{i.e.} a Follow the {\it Regularized} Leader algorithm).

{\bf Continuous time Fictitious Play:} The integral version of Continuous Time FP (CFP) is:
\begin{align*}
    \pi^i_t =  \frac{1}{t}\int \limits_{s=0}^t b^{i}_s ds \quad \textrm{ where $\forall i$ and $t \ge 1$, $b^{i}_t = \argmax_{p^i} \Big\langle p^i , \frac{1}{t}\int_{s=0}^t r^i_{b^{-i}_s} ds \Big\rangle$},
\end{align*}
\noindent with $b^i_t$ being arbitrary for $t<1$.
A straightforward Lyapunov analysis~\cite{harris1998rate} shows that CFP (in two-player zero-sum) results in a descent on the exploitability $\phi(\pi) = \sum_{i=1}^N \max_{p^i} \langle p^i , r^i_{\pi^{-i}} \rangle - r^i_{\pi}$. In addition, CFP is known to converge to a CCE in \textbf{two} player~\cite{ostrovski2013payoff} games.

\emph{Our Contribution:} \textbf{Many-Player CFP is Regret Minimizing $\implies$ Convergence to CCE}

We now extend convergence of CFP to a CCE in \textbf{many}-player games. Let $r^i_s$ be a measurable reward stream and let the CFP process be:
\begin{align*}
    \pi^i_t =  \frac{1}{t}\int \limits_{s=0}^t b^{i}_s ds \quad \textrm{ where $\forall i, b^{i}_t = \argmax_{p^i} \Big\langle p^i , \frac{1}{t} \int \limits_{s=0}^t r^i_{s} ds \Big\rangle$}.
\end{align*}

Section~\ref{eps_cce_from_regret} relates regret to coarse correlated equilibria, so we can prove convergence to a CCE by proving the following regret is sub-linear:
$$Reg((b^i_s)_{s\leq t}) = \max_{p^i} \int \limits_{s=0}^t \langle p^i , r^i_s \rangle ds - \int \limits_{s=0}^t \langle b^{i}_s , r^i_s \rangle ds.$$

For all $t \geq 1$ we have:
\begin{align*}
    \frac{d}{dt} \left[ \max_{p^i}  \Big\langle p^i , \int \limits_{s=0}^t r^i_s ds \Big\rangle \right] &= \Big\langle b^i_t , \frac{d}{dt} \int \limits_{s=0}^t r^i_s ds \Big\rangle = \langle b^i_t , r^i_t \rangle.
\end{align*}

We conclude by noticing that:
\begin{align*}
    &\int \limits_{t=1}^T \frac{d}{dt} \left[ \max_{p^i}  \Big\langle p^i , \int \limits_{s=0}^t r^i_s ds \Big\rangle \right] dt = \int \limits_{t=1}^T \langle b^i_t , r^i_t \rangle dt = \int \limits_{t=0}^T \langle b^i_t , r^i_t \rangle dt - \int \limits_{t=0}^1 \langle b^i_t , r^i_t \rangle dt \\
    &= \max_{p^i} \int \limits_{t=0}^T \langle p^i , r^i_t \rangle dt - \max_{p^i} \int \limits_{t=0}^1 \langle p^i , r^i_t \rangle dt.
\end{align*}

This implies that:
\begin{align*}
     \max_{p^i} \int \limits_{t=0}^T \langle p^i , r^i_t \rangle dt - \int \limits_{t=0}^T \langle b^i_t , r^i_t \rangle dt = \max_{p^i} \int \limits_{t=0}^1 \langle p^i , r^i_t \rangle dt - \int \limits_0^1 \langle b^i_t , r^i_t \rangle dt.
\end{align*}

Hence we have $Reg((b^i_s)_{s\leq t}) = O(1)$. This implies that the average joint strategy $\frac{1}{T} \int \limits_{0}^T b_t dt$ converges to a CCE (where $b_t(a_1, \dots, a_N) = b^1_t(a_1)\times \dots \times b^N_t(a_N)$).

{\bf Discrete time Stochastic Fictitious Play:}
Discrete Time Fictitious play has been comprehensively studied~\cite{fudenberg1998theory}. This book shows that Discrete time Stochastic Fictitious Play converges to an $\epsilon$-CCE (this is implied by $\epsilon$-Hannan consistency).

{\bf Continuous time Stochastic Fictitious Play: }
The integral version of Continuous Time Stochastic FP (CSFP) is:
\begin{align*}
    \pi^i_t =  \frac{1}{t}\int \limits_0^t b^{i}_s ds \quad \textrm{ where $\forall i, b^{i}_t = \argmax_{p^i} \Big\langle p^i , \frac{1}{t}\int_0^t r^i_{b^{-i}_s} ds  + \lambda h^i(p^i)\Big\rangle$}.
\end{align*}

CSFP is known to converge to a QRE and
$\phi_\lambda(\pi) = \sum_{i=1}^{N=2} \max_{p^i} \langle p^i , r^i_{\pi^{-i}} + \lambda h^i(p^i)\rangle - [r^i_{\pi} + \lambda h^i(\pi^i)] = \sum_{i=1}^{N=2} \max_{p^i} \langle p^i , r^i_{\pi^{-i}}  + \lambda h^i(p^i)\rangle - \lambda h^i(\pi^i)$  is a Lyapunov function of the CFP dynamical system~\cite{hofbauer2002global}. Note that the term $\sum_{i=1}^{N=2} r^i_{\pi} = 0$ because this is a zero-sum game.

\emph{Our Contribution:} \textbf{Many-Player CSFP Achieves Bounded Regret $\implies$ Convergence to $\epsilon$-CCE}

We now present a regret minimization property for CSFP in \textbf{many}-player games, which we use to show convergence to an $\epsilon$-CCE.  As before, let $r^i_s$ be a measurable reward stream and let the CSFP process be:
\begin{align}
    \pi^i_t =  \frac{1}{t}\int \limits_0^t b^{i}_s ds \quad \textrm{ where $\forall i, b^{i}_t = \argmax_{p^i} \Big\langle p^i , \frac{1}{t}\int_0^t r^i_{s} ds  + \lambda h^i(p^i)\Big\rangle$}. \label{sfp_pi}
\end{align}

Note, this process averages best responses to the historical sequence of entropy regularized rewards. We seek to show that the regret of this process with respect to the \emph{un}-regularized rewards grows, at worst, linearly in $T$ with coefficient dependent on the regularization coefficient, $\lambda$. The result should recover the standard CFP result when $\lambda=0$.

The proof proceeds by decomposing the regret with respect to the \emph{un}-regularized rewards into two terms: regret with respect to the regularized rewards ($O(1)$) and an $O(T)$ term dependent on $\lambda$ as desired.

Bounding the first term, the regret term, is accomplished by first deriving the derivative of the maximum entropy regularized payoff given historical play up to time $t$ and then recovering the regret over the entire time horizon $T$ by fundamental theorem of calculus.

\begin{theorem}
\label{sfp_eps_cce}
CSFP converges to an $\epsilon$-CCE.
\end{theorem}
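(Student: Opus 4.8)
The plan is to reduce the theorem to a regret bound of the form $\mathrm{Reg}^i_T \le O(1) + O(\lambda T)$ for every player $i$, and then quote the regret-to-CCE dictionary of Section~\ref{eps_cce_from_regret}, applied to the empirical average joint policy $\mu_T := \frac1T\int_0^T b_t\,dt$ with $b_t(a_1,\dots,a_N) = \prod_i b^i_t(a_i)$. This mirrors the $\lambda = 0$ warm-up (CFP $\to$ CCE) proved above, carrying the entropy term through the computation. Fix a player $i$ and \emph{any} uniformly bounded measurable reward stream $r^i_s$, write $R^i_t = \int_0^t r^i_s\,ds$, and define $F_i(t) = \max_p\big[\langle p, R^i_t\rangle + t\lambda h^i(p)\big]$. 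Since scaling the bracketed objective in \eqref{sfp_pi} by $t>0$ leaves the maximizer fixed, the CSFP response is $b^i_t = \argmax_p\big[\langle p, R^i_t\rangle + t\lambda h^i(p)\big]$, and this maximizer is \emph{unique} because $\lambda h^i$ is strictly concave for $\lambda>0$.

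First I would prove the derivative identity $\frac{d}{dt}F_i(t) = \langle b^i_t, r^i_t\rangle + \lambda h^i(b^i_t)$ for a.e.\ $t \ge 1$. This is an envelope (Danskin) argument: $t\mapsto R^i_t$ is absolutely continuous with a.e.\ derivative $r^i_t$; $F_i$, a pointwise max of functions that are Lipschitz in $t$ on $[1,T]$, is itself Lipschitz and hence equal to the integral of its a.e.\ derivative; and differentiating $\psi(p,t) = \langle p, R^i_t\rangle + t\lambda h^i(p)$ in $t$ at the unique maximizer $p=b^i_t$ yields the stated expression. Integrating over $[1,T]$ gives
\[
F_i(T) - F_i(1) = \int_1^T \big[\langle b^i_t, r^i_t\rangle + \lambda h^i(b^i_t)\big]\,dt.
\]
Next I would combine this with the trivial lower bound $F_i(T) \ge \max_p\langle p, R^i_T\rangle = \max_p\int_0^T\langle p, r^i_s\rangle\,ds$ (valid since $t\lambda h^i(\cdot)\ge 0$), the two-sided entropy bound $0 \le h^i(b^i_t)\le \log|A_i|$, and the fact that $F_i(1)$ and $\int_0^1\langle b^i_t, r^i_t\rangle\,dt$ are $O(1)$ (bounded rewards, fixed interval). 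Rearranging yields
\[
\mathrm{Reg}^i_T := \max_p\int_0^T\langle p, r^i_s\rangle\,ds - \int_0^T\langle b^i_t, r^i_t\rangle\,dt \;\le\; O(1) + \lambda T\log|A_i|,
\]
which recovers the $O(1)$ regret of CFP at $\lambda=0$. Specializing to the genuine CSFP stream $r^i_s = r^i_{b^{-i}_s}$, dividing by $T$, and letting $T\to\infty$ gives $\limsup_T \tfrac1T\mathrm{Reg}^i_T \le \lambda\log|A_i|$; by Section~\ref{eps_cce_from_regret} the empirical average $\mu_T$ then satisfies, for each $i$ and deviation $a_i'$, $\bE_{a\sim\mu_T}[r^i(a_i', a_{-i}) - r^i(a)] \le \tfrac1T\mathrm{Reg}^i_T \to \lambda\log|A_i|$, i.e.\ $\mu_T$ converges to an $\epsilon$-CCE with $\epsilon = \lambda\max_i\log|A_i|$.

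The main obstacle I expect is making the continuous-time envelope step fully rigorous: that $F_i$ is absolutely continuous with the claimed a.e.\ derivative when $r^i_s$ is only measurable and the maximizer $b^i_t$ moves with $t$. The two facts that make this go through are the \emph{uniqueness} of $b^i_t$ (strict concavity of the entropy regularizer, so there is no subdifferential ambiguity) and the Lipschitz-in-$t$ regularity of the family $\{\psi(p,\cdot)\}_p$; one also checks that $s\mapsto r^i_{b^{-i}_s}$ is itself measurable, which follows because $b^{-i}_s$ is a deterministic function of the continuous paths $R^{-i}_s$, so the bound applies to the actual process. Aside from these regularity points, the lower bound on $F_i(T)$, the entropy estimates, and the hand-off to the regret/CCE correspondence are routine.
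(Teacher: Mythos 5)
Your proposal is correct and follows essentially the same route as the paper: your function $F_i(t)=\max_p[\langle p,R^i_t\rangle+t\lambda h^i(p)]$ is exactly the quantity differentiated in the paper's Lemma~\ref{dt_max} (your Danskin/envelope step is the paper's Fenchel-conjugate derivative identity), your integration of $F_i$ over $[1,T]$ is the paper's Lemma~\ref{qre_o1}, and your decomposition of the unregularized regret into an $O(1)$ regularized-regret term plus a $\lambda T[\max h^i-\min h^i]=\lambda T\log|A_i|$ entropy-gap term, followed by the hand-off to Section~\ref{eps_cce_from_regret}, matches the paper's proof of Theorem~\ref{sfp_eps_cce} including the final bound $\epsilon\le\lambda\max_i\log|A_i|$. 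The only difference is that you are more explicit about the measurability and absolute-continuity details underlying the a.e.\ derivative, which the paper treats informally.
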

\begin{proof}
The regret $Reg((b^i)_{s \le t}) = \max_{p^i} \int \limits_{s=0}^T \left[ \langle p^i ,  r^i_s \rangle \right]ds - \int \limits_{s=0}^T [\langle b^i_s, r^i_s\rangle]ds$ is
\begin{align*}
    &\leq \max_{p^i} \int \limits_{s=0}^T \left[ \langle p^i ,  r^i_s \rangle + \textcolor{blue}{\lambda h^i(p^i) - \lambda h^i(p^i)} \right]ds - \int \limits_{s=0}^T [\langle b^i_s, r^i_s\rangle]ds \\
    &\leq \max_{p^i} \int \limits_{s=0}^T \left[ \langle p^i ,  r^i_s \rangle + \lambda h^i(p^i) \right]ds - \textcolor{blue}{T \lambda \min_{p^i} h^i(p^i)} - \int \limits_{s=0}^T [\langle b^i_s, r^i_s\rangle]ds \\
    &\leq \max_{p^i} \int \limits_{s=0}^T \left[ \langle p^i ,  r^i_s \rangle + \lambda h^i(p^i) \right]ds - T \lambda \min_{p^i} h^i(p^i) - \int \limits_{s=0}^T [\langle b^i_s, r^i_s\rangle]ds + \int \limits_{s=0}^T [\textcolor{blue}{\lambda h^i(b^i_s) - \lambda h^i(b^i_s)}]ds \\
    &\leq \max_{p^i} \int \limits_{s=0}^T \left[ \langle p^i ,  r^i_s \rangle + \lambda h^i(p^i)\right]ds   - \int \limits_{s=0}^T [\langle b^i_s, r^i_s\rangle + \textcolor{blue}{\lambda h^i(b^i_s)}]ds + \int \limits_{s=0}^T \lambda h^i(b^i_s) ds - T \lambda \min_{p^i} h^i(p^i) \\
    &\leq \max_{p^i} \int \limits_{s=0}^T \left[ \langle p^i ,  r^i_s \rangle + \lambda h^i(p^i)\right]ds   - \int \limits_{s=0}^T [\langle b^i_s, r^i_s\rangle + \lambda h^i(b^i_s)]ds + T \lambda [\textcolor{blue}{\max_{p^i} h^i(p^i)} - \min_{p^i} h^i(p^i)] \\
    &\stackrel{L\ref{qre_o1}}{\le} O(1) + T \lambda [\textcolor{blue}{\max_{p^i} h^i(p^i)} - \min_{p^i} h^i(p^i)].
\end{align*}

Section~\ref{eps_cce_from_regret} relates regret to coarse correlated equilibria. We thus conclude that CSFP converges to an $\epsilon$-CCE with $\epsilon \leq \lambda [\max_{p^i} h^i(p^i)- \min_{p^i} h^i(p^i)]$.

\end{proof}

\begin{lemma}
\label{dt_max}
The derivative of the max entropy regularized payoff up to time $t$ is given by the entropy regularized payoff of the best response:
$\frac{d}{dt} \max_{p^i} \int \limits_{s=0}^t \left[ \langle p^i ,  r^i_s \rangle + \lambda h^i(p^i) \right] ds = \langle b^i_t, r^i_t\rangle + \lambda h^i(b^i_t)$.
\end{lemma}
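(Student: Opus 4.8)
The plan is to read the left-hand side as the time-derivative of a value function and settle it by an envelope (Danskin) argument; strict concavity of the entropy $h^i$ (using $\lambda>0$) supplies the unique-maximizer hypothesis that makes the value function differentiable. First I would observe that the candidate $p^i$ in the maximization does not depend on the integration variable, so I can pull it outside:
\[
G(t) \;:=\; \max_{p^i}\int_{0}^{t}\big[\langle p^i, r^i_s\rangle + \lambda h^i(p^i)\big]\,ds \;=\; \max_{p^i}\big[\langle p^i, R(t)\rangle + \lambda t\, h^i(p^i)\big], \qquad R(t):=\int_{0}^{t} r^i_s\,ds .
\]
Dividing the bracket by $t>0$ does not move the argmax, so the maximizer of $G(t)$ is exactly the $b^i_t$ of the CSFP process in \eqref{sfp_pi}; it is unique because $p^i\mapsto \langle p^i,R(t)\rangle+\lambda t\,h^i(p^i)$ is strictly concave on the simplex for $t>0$. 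Since $r^i_s$ is bounded and measurable, $R$ is Lipschitz, hence differentiable with $R'(t)=r^i_t$ at a.e.\ $t$ (its Lebesgue points), and $G$ is Lipschitz, hence absolutely continuous. Finally $t\mapsto b^i_t$ is continuous on $(0,\infty)$ by Berge's maximum theorem together with the uniqueness of the maximizer.

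The core estimate is a sandwich. Fix a Lebesgue point $t$ and a nearby $t'$; using $b^i_t$ as a feasible point in $G(t')$ and $b^i_{t'}$ as a feasible point in $G(t)$ yields
\[
\big\langle b^i_t,\, R(t')-R(t)\big\rangle + \lambda(t'-t)h^i(b^i_t)
\;\le\; G(t')-G(t)
\;\le\; \big\langle b^i_{t'},\, R(t')-R(t)\big\rangle + \lambda(t'-t)h^i(b^i_{t'}).
\]
Dividing by $t'-t$ (once for $t'>t$, once for $t'<t$, the inequalities flipping in the second case) and letting $t'\to t$, both bounds converge to $\langle b^i_t, r^i_t\rangle + \lambda h^i(b^i_t)$: the $R$-increment ratio tends to $r^i_t$ since $t$ is a Lebesgue point, $b^i_{t'}\to b^i_t$ by continuity of $s\mapsto b^i_s$, and $h^i$ is continuous. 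Hence the one-sided derivatives of $G$ at $t$ agree, $G$ is differentiable there, and $G'(t)=\langle b^i_t, r^i_t\rangle + \lambda h^i(b^i_t)$, the claimed identity. (Equivalently, one can invoke Danskin's theorem directly: $\partial_t[\langle p^i,R(t)\rangle+\lambda t\,h^i(p^i)]=\langle p^i,r^i_t\rangle+\lambda h^i(p^i)$ at Lebesgue points, uniformly over the compact simplex, and the unique-maximizer property gives $G'(t)=\langle b^i_t,r^i_t\rangle+\lambda h^i(b^i_t)$.)

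The only delicate point — the main obstacle — is purely regularity bookkeeping: because $r^i_\cdot$ is assumed merely measurable, the identity $R'(t)=r^i_t$, and hence the conclusion, hold only for a.e.\ $t$, and one must check that continuity of $t\mapsto b^i_t$ is strong enough to control the upper bound at those points (it is, since only $R'$ needs the Lebesgue-point restriction, while $b^i_\cdot$ is continuous everywhere on $(0,\infty)$). This is harmless for the way the lemma is used in Theorem~\ref{sfp_eps_cce}, where it is applied under an integral and recombined by the fundamental theorem of calculus, valid since $G$ is absolutely continuous; if one instead assumes $r^i_\cdot$ piecewise continuous, the identity holds at every interior point of a continuity interval.
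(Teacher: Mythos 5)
Your proof is correct, and it reaches the identity by a genuinely different route than the paper. The paper normalizes by $1/t$ so that the inner maximization becomes the Fenchel conjugate $h^{*i}$ evaluated at the running average reward, invokes the conjugate's gradient property $\tfrac{d h^{*i}(y)}{dy}=p^*$ together with the chain rule to differentiate that expression, and then undoes the normalization with a product-rule manipulation on $t\cdot\max_{p^i}[\cdots]$. You instead work directly with the unnormalized value function $G(t)=\max_{p^i}\left[\langle p^i,R(t)\rangle+\lambda t\,h^i(p^i)\right]$ and prove the envelope identity from first principles via the two-sided feasibility sandwich, so no conjugacy, no chain rule on $\tfrac{1}{t}\int_0^t r^i_s\,ds$, and no product-rule bookkeeping are needed. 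Both arguments are at bottom the same envelope/Danskin fact, but yours is more elementary and also more careful: you make explicit the regularity hypotheses the paper glosses over (uniqueness and continuity of the maximizer $b^i_t$ via strict concavity of $h^i$ for $\lambda>0$, and the fact that $R'(t)=r^i_t$ only at Lebesgue points of a merely measurable reward stream, so the identity holds a.e.), and you correctly observe that this a.e.\ caveat is harmless for Lemma~\ref{qre_o1} and Theorem~\ref{sfp_eps_cce}, where the lemma is only ever used under an integral and $G$ is absolutely continuous. The paper's version is shorter and reuses the conjugate machinery it has already set up; yours buys rigor and self-containedness at the cost of a slightly longer limit argument.
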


\begin{proof}
The derivative is derived by first computing the derivative of the maximum payoff considering the average reward (effectively higher entropy regularization):
\begin{align}
    &\frac{d}{dt} \max_{p^i} \left[ \langle p^i , \left[\frac{1}{t}\int \limits_{s=0}^t r^i_s ds \right] \rangle + \lambda h^i(p^i) \right] = \lambda \frac{d}{dt} \max_{p^i} \left[  \langle p^i , \left[\frac{1}{\lambda t}\int \limits_{s=0}^t r^i_s ds \right] \rangle + h^i(p^i) \right] \label{dtmax}
    \\ &=\lambda \frac{d}{dt} h^{*i}(\frac{1}{\lambda t}\int \limits_{s=0}^t r^i_s ds) \stackrel{\textcolor{blue}{{\frac{dh^{*i}(y)}{dy} \frac{dy}{dt}}}}{=} \lambda \langle b^i_t, \frac{d}{dt} \left[\frac{1}{\lambda t}\int \limits_{s=0}^t r^i_s ds\right] \rangle = \langle b^i_t, \frac{d}{dt} \left[\frac{1}{t}\int \limits_{s=0}^t r^i_s ds\right] \rangle \nonumber
    \\ &= \langle b^i_t, \left[-\frac{1}{t^2}\int \limits_{s=0}^t r^i_s ds + \frac{1}{t}r^i_t\right] \rangle = \frac{1}{t} \left[ \langle b^i_t, r^i_t \rangle - \langle b^i_t, \left[\frac{1}{t}\int \limits_{s=0}^t r^i_s ds\right] \rangle \right] \nonumber
    \\ &= \frac{1}{t} \left[ \langle b^i_t, r^i_t\rangle + \lambda h^i(b^i_t) - \langle b^i_t, \left[\frac{1}{t}\int \limits_{s=0}^t r^i_s ds\right] \rangle - \lambda h^i(b^i_t) \right] \nonumber
    \\ &= \frac{1}{t} \left[ \langle b^i_t, r^i_t\rangle + \lambda h^i(b^i_t) - \max_{p^i} \left[ \langle p^i , \left[\frac{1}{t}\int \limits_{s=0}^t r^i_s ds \right] \rangle + \lambda h^i(p^i) \right] \right] \label{dtmax_alt}
\end{align}
where we highlight the use of a special property of the Fenchel conjugate in the second line.

Rearranging~\eqref{dtmax} and~\eqref{dtmax_alt} and then multiplying both sides by $t$ we find:
\begin{align*}
    &t\frac{d}{dt} \max_{p^i} \left[ \langle p^i , \left[\frac{1}{t}\int \limits_{s=0}^t r^i_s ds \right] \rangle + \lambda h^i(p^i) \right] + \max_{p^i} \left[ \langle p^i , \left[\frac{1}{t}\int \limits_{s=0}^t r^i_s ds \right] \rangle + \lambda h^i(p^i) \right] =  \langle b^i_t, r^i_t\rangle + \lambda h^i(b^i_t).
\end{align*}

As a result we obtain:
\begin{align*}
    & \frac{d}{dt} \max_{p^i} \int \limits_{s=0}^t \left[ \langle p^i ,  r^i_s \rangle + \lambda h^i(p^i) \right] ds\\
    &= \frac{d}{dt} \frac{t}{t} \max_{p^i} \int \limits_{s=0}^t \left[ \langle p^i ,  r^i_s \rangle + \lambda h^i(p^i) \right] ds\\
    &= \frac{d}{dt} \frac{t}{t} \max_{p^i} \left[ \Big\langle p^i ,  \int \limits_{s=0}^t r^i_s ds \Big\rangle + \int \limits_{s=0}^t \lambda h^i(p^i) ds \right]\\
    &= \frac{d}{dt} t \max_{p^i} \left[ \Big\langle p^i , \frac{1}{t} \int \limits_{s=0}^t r^i_s ds \Big\rangle + \frac{1}{t} \int \limits_{s=0}^t \lambda h^i(p^i) ds \right]\\
    &= \frac{d}{dt} t \max_{p^i} \left[ \Big\langle p^i , \frac{1}{t} \int \limits_{s=0}^t r^i_s ds \Big\rangle + \lambda h^i(p^i) \right]\\
    &=t \frac{d}{dt} \max_{p^i} \left[ \langle p^i , \left[\frac{1}{t}\int \limits_{s=0}^t r^i_s ds \right] \rangle + \lambda h^i(p^i) \right]+ \left[\frac{d}{dt} t\right]\max_{p^i} \left[ \langle p^i , \left[\frac{1}{t}\int \limits_{s=0}^t r^i_s ds \right] \rangle + \lambda h^i(p^i) \right]\\
    &= \langle b^i_t, r^i_t\rangle + \lambda h^i(b^i_t).
\end{align*}
\end{proof}

\begin{lemma}
\label{qre_o1}
The regret of the stochastic best response process with respect to the entropy regularized payoffs,
\begin{align*}
    &\max_{p^i} \int \limits_{s=0}^T \left[ \langle p^i ,  r^i_s \rangle + \lambda h^i(p^i) \right]ds - \int \limits_{s=0}^T [\langle b^i_s, r^i_s\rangle + \lambda h^i(b^i_s)]ds, \quad \text{ is $O(1)$.}
\end{align*}
\end{lemma}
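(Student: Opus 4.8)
The plan is to integrate the identity from Lemma~\ref{dt_max} over the interval $[1,T]$ and read off the regret bound from the fundamental theorem of calculus, mirroring exactly the $O(1)$ regret computation carried out for unregularized CFP in the warm-up above. Write $G^i(t) := \max_{p^i}\int_{s=0}^t \left[\langle p^i, r^i_s\rangle + \lambda h^i(p^i)\right]ds$. Lemma~\ref{dt_max} states $\frac{d}{dt}G^i(t) = \langle b^i_t, r^i_t\rangle + \lambda h^i(b^i_t)$ for all $t \ge 1$, and the computation in its proof also exhibits $G^i$ as a smooth function of $t$ and of the running average $\frac{1}{t}\int_{s=0}^t r^i_s\,ds$ (which is Lipschitz in $t$ for a bounded measurable reward stream), hence $G^i$ is absolutely continuous on $[1,T]$. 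By the fundamental theorem of calculus,
$$G^i(T) - G^i(1) = \int_{t=1}^T \left[\langle b^i_t, r^i_t\rangle + \lambda h^i(b^i_t)\right]dt.$$

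Next I would add $\int_{0}^1 \left[\langle b^i_t, r^i_t\rangle + \lambda h^i(b^i_t)\right]dt$ to both sides to extend the integration range to $[0,T]$ and rearrange to isolate the regret:
$$\underbrace{G^i(T) - \int_{s=0}^T \left[\langle b^i_s, r^i_s\rangle + \lambda h^i(b^i_s)\right]ds}_{=\ \text{the quantity in Lemma~\ref{qre_o1}}} = G^i(1) - \int_{s=0}^1 \left[\langle b^i_s, r^i_s\rangle + \lambda h^i(b^i_s)\right]ds.$$
It then remains to show the right-hand side is bounded independently of $T$. Since the action sets are finite, the reward stream is bounded (say $\|r^i_s\|_\infty \le R$), and the entropy on the simplex satisfies $0 \le h^i(p^i) \le \log|A_i|$, both $G^i(1) = \max_{p^i}\left[\langle p^i, \int_0^1 r^i_s\,ds\rangle + \lambda h^i(p^i)\right]$ and $\int_0^1[\langle b^i_s, r^i_s\rangle + \lambda h^i(b^i_s)]ds$ are bounded in absolute value by $R + \lambda\log|A_i|$ — the values $b^i_s$ for $s<1$ being arbitrary but still probability vectors over $A_i$, so their contributions are finite. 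Hence the right-hand side is $O(1)$, which is the claim.

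The argument is essentially bookkeeping once Lemma~\ref{dt_max} is in hand; the only genuine points of care are (i) justifying the use of the fundamental theorem of calculus — i.e. that $G^i$ is absolutely continuous on $[1,T]$ so that $G^i(T)-G^i(1)$ equals the integral of $\frac{d}{dt}G^i$ — which is inherited from the regularity already displayed in the proof of Lemma~\ref{dt_max}, and (ii) confirming that the residual terms supported on $[0,1]$, including the initial segment where $b^i_t$ is left arbitrary, are finite, which follows from boundedness of the rewards and of the entropy. I expect (i) to be the main (though minor) obstacle, since it is the only place where the measurability/boundedness hypotheses on the reward stream are genuinely invoked.
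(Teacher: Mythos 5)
Your proposal is correct and follows essentially the same route as the paper's proof: integrate the derivative identity of Lemma~\ref{dt_max} over $[1,T]$, apply the fundamental theorem of calculus, and rearrange so that the regret reduces to terms supported on $[0,1]$. Your added remarks on absolute continuity of $G^i$ and the explicit bound $R+\lambda\log|A_i|$ on the residual $[0,1]$ terms are slightly more careful than the paper, which leaves these as implicit, but the argument is the same.
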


\begin{proof}
Integrating Lemma~\ref{dt_max} from $1$ to $T$ and decomposing we find
\begin{align*}
    \int_1^T \frac{d}{dt} [\max_{p^i} \int \limits_{s=0}^t \left[ \langle p^i ,  r^i_s \rangle + \lambda h^i(p^i) \right] ds]dt &= \int_1^T [\langle b^i_t, r^i_t\rangle + \lambda h^i(b^i_t)] dt\\
    &= \int \limits_{s=0}^T [\langle b^i_s, r^i_s\rangle + \lambda h^i(b^i_s)]ds - \int \limits_{s=0}^1 [\langle b^i_s, r^i_s\rangle + \lambda h^i(b^i_s)]ds.
\end{align*}
Also, note that, by fundamental theorem of calculus, the integral can also be expressed as
\begin{align*}
    &\int_1^T \frac{d}{dt} [\max_{p^i} \int \limits_{s=0}^t \left[ \langle p^i ,  r^i_s \rangle + \lambda h^i(p^i) \right] ds]dt\\
    &=\max_{p^i} \int \limits_{s=0}^T \left[ \langle p^i ,  r^i_s \rangle + \lambda h^i(p^i) \right]ds - \max_{p^i} \int \limits_{s=0}^1 \left[ \langle p^i ,  r^i_s \rangle + \lambda h^i(p^i) \right]ds.
\end{align*}

Rearranging the terms relates the regret to a definite integral from $0$ to $1$
\begin{align*}
    &\max_{p^i} \int \limits_{s=0}^T \left[ \langle p^i ,  r^i_s \rangle + \lambda h^i(p^i) \right]ds - \int \limits_{s=0}^T [\langle b^i_s, r^i_s\rangle + \lambda h^i(b^i_s)]ds
    \\ &= \max_{p^i} \int \limits_{s=0}^1 \left[ \langle p^i ,  r^i_s \rangle + \lambda h^i(p^i) \right]ds - \int \limits_{s=0}^1 [\langle b^i_s, r^i_s\rangle + \lambda h^i(b^i_s)]ds
\end{align*}
which is $O(1)$ with respect to the time horizon $T$.
\end{proof}

\section{Size Estimates for Diplomacy}
\label{app:size-estimates}

One of the issues that make Diplomacy a difficult AI challenge is the sheer size of the game. We estimate the size of the game of Diplomacy based on No-Press games in the human dataset~\cite{paquette2019no}. This dataset consists of 21,831 games. We play through each game, and inspect how many legal actions were available for each player at each turn of the game.

Some of the games in the dataset are unusually short, with a draw agreed after only a couple of turns. This is usually done to cancel a game on websites without the functionality to do so. We do not attempt to filter such games from our data; as a result, the size estimates here are biased downward.

\subsection{Legal Joint Actions per Turn}

Diplomacy turns are in one of three phases: the \textit{movement} (or \textit{Diplomacy}) phase, the \textit{retreats} phase and the \textit{adjustments} phase. The majority of gameplay is in the movement phase, while the retreats and adjustments phases mostly handle the effects of the movement phase. So we consider the size of this movement phase.

In the first turn of diplomacy, there are 22 units on the board, 3 for most players and 4 for Russia. Each unit has approximately 10 legal actions, which can be selected independently of one another. The total number of possibilities for this first turn is $10^{22.3}$. 

As the game progresses, the number of units on the board increases to a maximum of 34. Additionally, when units are closer together, there are more opportunities to play \textit{support} moves and as a result the number of legal actions per unit grows. The largest movements phase in our data had a total of $10^{64.3}$ legal action combinations across the 7 players. The median number of possibilities in movement phases is $10^{45.8}$.

Many of these different combinations of actions lead to the same states, for example the action \MAR~{\bf support} \PAR $\rightarrow$ \BUR has no affect on the adjudication if the owner of the unit is Paris doesn't take the action \PAR $\rightarrow$ \BUR, or if the movement to \BUR is unopposed, or if another unit moves to \MAR, cutting the support.

\subsection{Estimate of the Game Tree Size}

We estimate the size of the game tree from a single game by considering the product of the number of legal actions available at each turn of the game. For example, for a game where there were 4 options on the first turn, 2 options on the second, 3 on the third, we estimate the size as $4 \times 2 \times 3 = 12$. The median size was $10^{896.8}$.

Note that as Diplomacy has no repetition rule or turn limit, the game tree is technically infinite. The purpose of our estimate is to give a rough sense of the number of possibilities the agents must consider in any given game. We report the median as the arithmetic mean is dominated by the largest value, $10^{7478}$, which comes from an exceptionally long game (that game lasted 157 movement phases, whereas the median game length was 20 movement phases). When long games are truncated to include only their first 20 movement phases, the median size is $10^{867.8}$, and the maximum is $10^{1006.7}$.

\end{document}